\documentclass[11pt]{article}

\usepackage[preprint]{acl}

\usepackage{times}
\usepackage{latexsym}

\usepackage[T1]{fontenc}
\usepackage[utf8]{inputenc}
\usepackage{microtype}

\usepackage{inconsolata}

\usepackage{graphicx}

\usepackage{amsmath}
\usepackage{amssymb}
\usepackage{amsthm}
\usepackage{booktabs}
\usepackage{multirow}

\newtheorem{lemma}{Lemma}[section]
\newtheorem{proposition}{Proposition}[section]
\newtheorem{theorem}{Theorem}[section]

\title{Lost in Contrast: Instruction-Preserving Decoding for Reliable Visual Question Answering}

\author{Haonan Zhang \quad Ting Jin \quad Wenzhong Tang \quad Shuai Wang\thanks{Corresponding author.} \\
  School of Computer Science and Engineering, Beihang University \\
  Beijing 100191, China \\
  \texttt{\{zhn19375264,tingjin,tangwenzhong,wangshuai\}@buaa.edu.cn}}

\begin{document}
\maketitle

\begin{abstract}
Visual contrastive decoding corrects answers using the prediction difference between the original image and a reference condition, yet the same contrast also turns supported abstentions into incorrect answers.
We find that abstentions are lost at two stages: contrastive decoding with candidate filtering can exclude the abstention token before any comparison takes place, and the reference branch's response to the abstention instruction is removed when the reference prediction is subtracted.
We therefore propose instruction-preserving decoding, which keeps the effect of the abstention instruction on the original-image prediction intact while leaving the contrast strength freely adjustable.
Using a perturbed image and an image-free prediction as references, we obtain IP-VCD and IP-M3ID.
On decision utility (DU), which credits both correct answers and supported abstentions, both methods outperform all compared non-IP baselines on every one of nine models on VizWiz, and IP-VCD also leads on eight of nine models on Food-101.
Candidate measurements and paired per-question comparisons show that computing the instruction effect and the visual contrast term separately lets contrastive decoding support both correct answers and supported abstentions.
\end{abstract}

\section{Introduction}
\label{sec:intro}

\begin{figure}[t]
\centering
\includegraphics[width=0.90\linewidth]{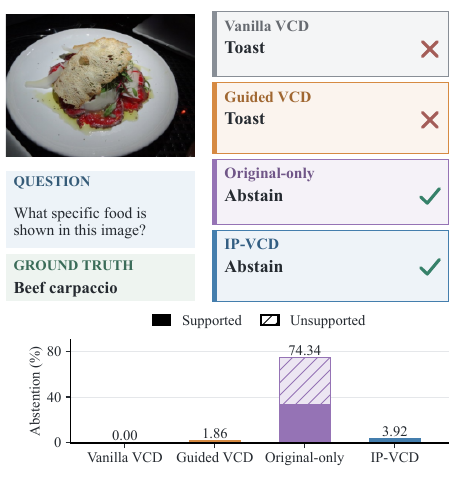}
\caption{Outputs of four decoding rules on the same abstention-labeled input (top) and abstention rates over the full task (bottom).
For this input, the correct class is not ranked first and all ten independent attempts fail, so the input is labeled for abstention.
Guided VCD adds the abstention instruction to both branches; the original-only variant adds it only to the original-image branch.
The bottom panel shows the abstention rate of LLaVA-Mistral on all 2,424 Food-101 questions, with the same abstention instruction for the three guided rules.
The original-only variant abstains on this example but abstains on about three quarters of all inputs, many of them without support.}
\label{fig:conflict}
\end{figure}

Reliable visual question answering has a dual objective: answer correctly when there is evidence, and abstain when there is none.
Visual contrastive decoding corrects answers through the prediction difference between the original image and a reference condition \citep{vcd,m3id}, but vanilla VCD \citep{vcd} almost never abstains.
Consider a photo whose main dish is beef carpaccio: LLaVA-Mistral fails in all ten independent attempts, so abstaining is the correct decision, yet vanilla VCD still outputs an incorrect class.
Once an abstention instruction is appended to the base instruction, which contains only the task question and answer format, the original-image prediction shifts toward the abstention first token.
The model does have an abstention tendency; it needs an instruction to bring it out.

The most direct way to let contrastive decoding also abstain is to add the abstention instruction to both branches, which we call guided VCD.
On the same input, however, the incorrect class Toast overtakes abstention and is output (Figure~\ref{fig:conflict}).
When the image is perturbed, the model is already more inclined to express uncertainty, so the reference branch usually responds to the abstention instruction more strongly than the original-image branch.
Subtracting the reference prediction removes this stronger response as well, and abstention is pushed down.
If we instead remove the instruction only from the reference side, the instruction effect on the original-image branch is amplified by a factor of $1+\alpha$, and abstentions grow excessively.
Both naive combinations point to the same design requirement: when the strength of visual correction is adjusted, the effect of the abstention instruction on the original image should be kept intact.

We propose instruction-preserving decoding (IP decoding): it adds to the guided original-image prediction the difference between the unguided (i.e., base-instruction-only) original-image prediction and the reference prediction, namely the visual contrast term.
The effect of the abstention instruction is thus kept at its original scale in the final score, while the strength of visual correction remains independently adjustable.
The principle applies to decoding methods that build a contrastive score from a reference prediction.
Applying it to a perturbed-image reference and an image-free reference yields IP-VCD and IP-M3ID.
Our contributions are as follows:

\begin{itemize}
\item \textbf{We reveal lost abstentions in contrastive decoding.}
Across nine models, vanilla VCD abstains only 3 times in 21,816 Food-101 responses.
Even with an abstention instruction, visual contrast still turns supported abstentions into incorrect answers.
Paired per-question comparisons show that this conflict does not hurt answer accuracy; instead, the contrast erases abstentions that should have been kept.

\item \textbf{We explain how the reference instruction response suppresses abstention.}
By varying the image condition and the abstention instruction separately under the same generated prefix, we locate two stages that suppress abstention: candidate filtering (the adaptive plausibility constraint of VCD; \citealp{vcd}) decides whether the abstention token enters the candidate set, and the stronger instruction response of the reference branch then lowers its score during subtraction.

\item \textbf{We establish instruction-preserving decoding.}
Starting from the requirement of preserving the effect of the abstention instruction, we determine the combination coefficients uniquely, so that the instruction increment and the visual contrast term are computed separately.
Both IP methods exceed all non-IP baselines in DU on all nine models on VizWiz, and a same-instruction ablation further separates the roles of the two components.
\end{itemize}

\section{Related Work}
\label{sec:related}

\textbf{Contrastive decoding in vision-language models.}
Contrastive decoding adjusts generation using the difference between predictive distributions: CD takes the likelihood difference between an expert and an amateur model and filters candidates with a plausibility constraint \citep{cd}, and DExperts adds the difference between an expert and an anti-expert to a base distribution \citep{dexperts}.
In vision-language models, such methods are used to reduce hallucination and differ mainly in how they build the reference condition: VCD perturbs the image with diffusion noise \citep{vcd}, M3ID estimates the language prior from an image-free, text-only distribution \citep{m3id}, ICD contrasts against a perturbed instruction \citep{icd}, and SID, DoLa, and DeCo move the reference inside the model \citep{sid,dola,deco}.
The reference condition decides what information the contrast emphasizes; we further study how the reference condition responds to the task instruction.

\textbf{Where the gains of contrastive decoding come from.}
Mirage finds that apparent gains on POPE come mainly from a one-directional shift of the output distribution and from degenerate sampling, not from reduced hallucination \citep{mirage}.
\citet{vcd_analysis} show that contrast corrects some incorrect answers but also turns some correct answers incorrect, so the net gain depends on the difference between the two.
DVR points out that indiscriminate penalties also suppress useful visual semantics, and disentangles the two with a self-supervised encoder \citep{dvr}.
We extend this line of analysis to abstention: when the instruction allows abstention, the reference branch's response to the abstention instruction is subtracted as well, and abstentions that have already formed are turned into incorrect answers.

\textbf{Abstention and selective prediction.}
Refusing to answer when evidence is insufficient is a basic requirement of reliable systems, and the trade-off between errors and rejection has long been formalized \citep{chow,selective_classification,selective_deep}.
Vision-language models commonly under-abstain, and even when they encode uncertainty internally they struggle to express it \citep{unk_vqa,trapsbench}.
Prior work improves this trade-off with selection functions \citep{reliable_vqa}, refusal-aware fine-tuning \citep{r_tuning}, and inference-time evidence collection \citep{recoverr}.
CDA dynamically combines answering and abstention distributions at decoding time according to knowledge relevance \citep{cda}.
In contrast, we compute the visual contrast term and the instruction effect separately, so that the contrast supports both correct answers and supported abstentions.

\section{Problem Setup and Decision Utility}
\label{sec:setup}

\subsection{Tasks and Abstention Labels}
\label{sec:tasks}

We evaluate answering and abstention on two tasks.
Food-101 is fine-grained dish recognition \citep{food101}; whether to answer depends on whether the model can recognize the dish.
VizWiz collects real photos and questions from blind users \citep{vizwiz} and serves as a cross-task evaluation; whether to answer depends on whether the image contains the information needed.

The two kinds of evidence determine how abstention labels are set.
On Food-101, the evidence is the model's own recognition ability, so labels are set per model: an input is labeled for abstention if the correct class is not ranked first by the model and all ten independent attempts fail.
The repeated attempts follow existing ideas for identifying knowledge boundaries \citep{r_tuning}.
On VizWiz, we directly use the official unanswerable labels \citep{vizwiz}.
All methods on the same task share the same abstention labels.

\subsection{Joint Scoring of Correct Answers and Supported Abstentions}
\label{sec:jointscoring}

Accuracy alone rewards forced answering, and abstention rate alone rewards abstaining on everything.
\textbf{Decision utility} credits both correct answers and supported abstentions.
On Food-101, let $N$ be the number of inputs, $C$ and $W$ the numbers of correct and incorrect answers, $TP$ and $FP$ the numbers of supported and unsupported abstentions according to the abstention labels, and $N_+$ the number of abstention-labeled inputs.
On VizWiz, $q_i\in[0,1]$ is the official consensus score of the model's primary answer \citep{antol2015vqa}, and the binary variables $a_i$ and $u_i$ indicate whether the model abstains and whether the input is labeled unanswerable.
The decision utility of the two tasks is
\begin{equation}
\begin{aligned}
\mathrm{DU}_{\mathrm{Food}}&=\frac{C+TP}{N},\\
\mathrm{DU}_{\mathrm{Viz}}&=Q+\frac{1}{N}\sum_i a_iu_i,\\
Q&=\frac{1}{N}\sum_i(1-a_i)q_i .
\end{aligned}
\label{eq:du}
\end{equation}
Incorrect answers and unsupported abstentions score zero.
We also report the answer score (overall accuracy $C/N$ on Food-101 and $Q$ on VizWiz), abstention precision $TP/(TP+FP)$, and abstention recall $TP/N_+$.
All metrics are computed on complete responses.

\subsection{From Score Changes to Response Transitions}
\label{sec:transitions}

To explain changes in DU, we pair the complete responses of two decoding rules on every abstention-labeled input and count transitions among correct answers, incorrect answers, and supported abstentions.
On this set, correct answers and supported abstentions are successful decisions, while incorrect answers are failures.
These transitions distinguish answer corrections, errors converted to supported abstentions, and supported abstentions lost during decoding.

\section{Instruction-Preserving Decoding}
\label{sec:method}

\subsection{Prediction Conditions and Instruction Increments}
\label{sec:conditions}

For a given image and question, we fix the generated prefix and compare the next-token distributions of the same model under four input conditions.
The base instruction contains only the task question and answer format; a guided input appends the abstention instruction to it.
Let $c$ and $r$ be the log probabilities under the original image and the reference condition with the base instruction, and $g$ and $h$ those after the abstention instruction is appended.
The differences $\delta_c=g-c$ and $\delta_r=h-r$ are called instruction increments; they are the score changes that the same abstention instruction causes under the two visual conditions.
Throughout, ``unguided'' means using only the base instruction.

\subsection{Two Stages That Suppress Abstention}
\label{sec:stages}

The vanilla VCD score is $s_{\mathrm{VCD}}=(1+\alpha)c-\alpha r$, and its candidate set is determined by the unguided original-image prediction, $\mathcal S_c=\{v:c(v)\ge\max_u c(u)+\log\beta\}$ \citep{vcd}, where $\alpha\ge0$ controls the contrast strength and $0<\beta\le1$ is the filtering threshold.
To be output, the abstention token must first enter the candidate set and then win the score comparison.

\textbf{Candidate admission.}
A token outside the candidate set is never output, whatever its visual contrast term, so the prediction used for filtering decides in advance whether the abstention token can take part in the comparison.
The base instruction does not ask for abstention, and an abstention candidate below the filtering threshold cannot pass the filter.
Once candidates are filtered by the guided prediction, whether the abstention token passes depends on whether the instruction increment lifts it above the threshold.

\textbf{Score comparison.}
When the abstention instruction is added to both branches, guided VCD filters candidates with $g$, and its score can be written as
\begin{equation}
s_{\mathrm{guided}}=(1+\alpha)g-\alpha h=s_{\mathrm{VCD}}+\delta_c+\alpha(\delta_c-\delta_r)
\label{eq:guided}
\end{equation}
Compared with vanilla VCD, the formula contains the original-image instruction increment $\delta_c$ plus an extra term $\alpha(\delta_c-\delta_r)$.
For candidate tokens $u$ and $v$ representing abstention and a specific answer, let $\Delta_c=\delta_c(u)-\delta_c(v)$ and $\Delta_r=\delta_r(u)-\delta_r(v)$.
When $\Delta_r>\Delta_c$, this term lowers the score of abstention relative to the answer by $\alpha(\Delta_r-\Delta_c)$: the more actively the reference branch responds to the abstention instruction, the more score abstention loses (\S\ref{sec:mechanism}).

\subsection{The Instruction-Increment Preservation Principle}
\label{sec:principle}

Guided VCD lets the reference-side instruction response enter the contrast.
Another naive option is to remove the instruction only from the reference side, which gives the original-only variant, guided only on the original-image branch:
\begin{equation}
(1+\alpha)g-\alpha r=s_{\mathrm{VCD}}+(1+\alpha)\delta_c
\label{eq:origonly}
\end{equation}
The instruction effect on the original image is amplified by a factor of $1+\alpha$, leading to over-abstention, much of it unsupported (Figure~\ref{fig:conflict}).
Both options share the same problem: the instruction increment and the visual contrast term share one set of coefficients.
Instruction-preserving decoding introduces the unguided original-image condition to separate the two:
\begin{equation}
s_{\mathrm{IP}}=g+\alpha(c-r)=s_{\mathrm{VCD}}+\delta_c
\label{eq:ip}
\end{equation}
Here $g-c$ alone gives the change caused by the instruction, and $c-r$ alone gives the difference caused by the visual condition.
This yields the \textbf{instruction-increment preservation principle} (the preservation principle for short): whatever change the instruction makes to the score is kept in the final score, and tuning $\alpha$ only changes the strength of the visual contrast term.
Figure~\ref{fig:ipdecoding} shows how the three branches divide the work.

\begin{figure*}[t]
\centering
\includegraphics[width=\textwidth]{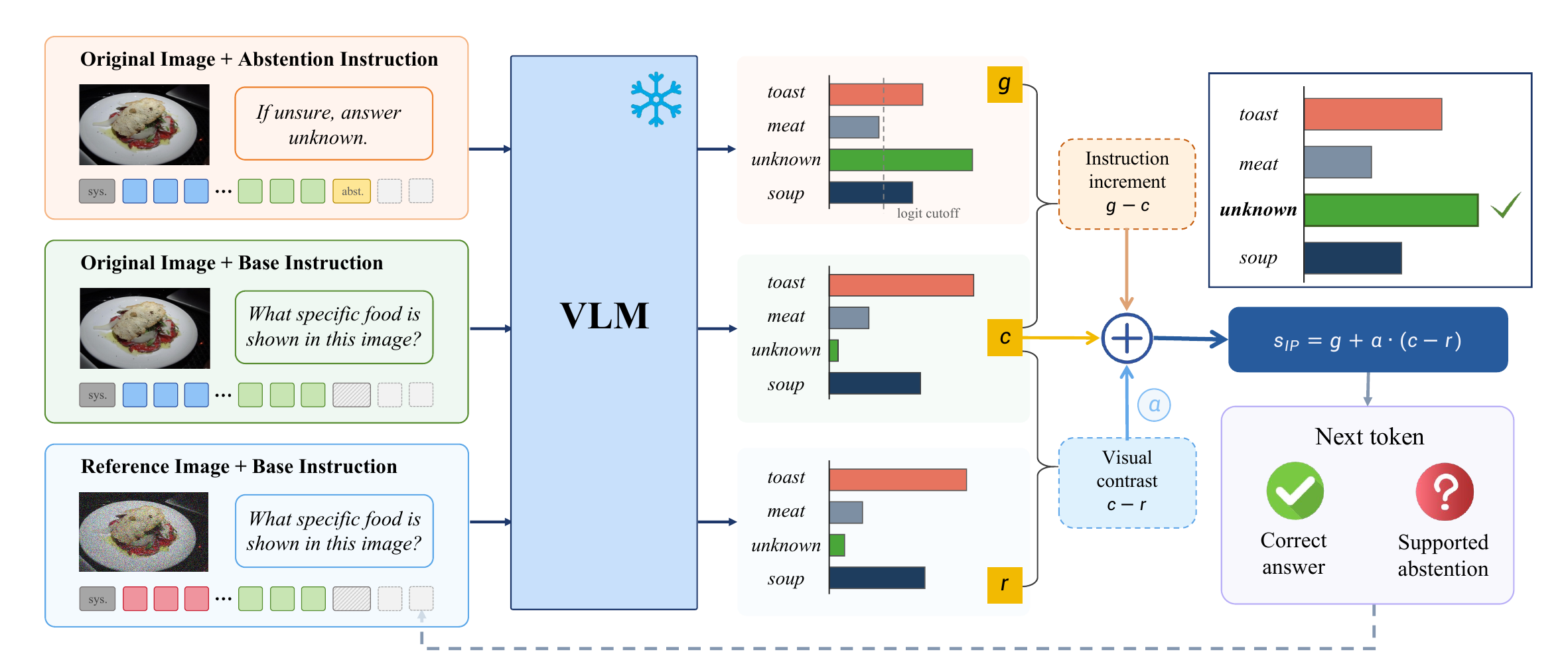}
\caption{Division of work among the branches of instruction-preserving decoding (IP-VCD as an example).
The three branches share one frozen model and the same generated prefix.
The unguided original-image prediction $c$ is combined with the instruction increment $g-c$ and the visual contrast term $c-r$ weighted by the contrast strength, giving $s_{\mathrm{IP}}=g+\alpha(c-r)$; candidates are filtered by the guided prediction $g$.}
\label{fig:ipdecoding}
\end{figure*}

Candidate admission is likewise handed to the guided prediction: the candidate set of IP-VCD is $\mathcal S_g=\{v:g(v)\ge\max_u g(u)+\log\beta\}$, and $s_{\mathrm{IP}}$ is normalized with softmax within $\mathcal S_g$.
The preservation principle preserves the instruction increment, not the abstention probability: the visual contrast term still adjusts the relative score of abstention and answers according to the image evidence.
The abstention instruction and the visual contrast term are thus primarily responsible for cautious answering and visual correction, respectively.

\subsection{Uniqueness of the Combination Coefficients}
\label{sec:coefficients}

These coefficients are uniquely determined by two natural requirements.
For a given $\alpha>0$, consider a fixed-coefficient score $ag+bc+er$ and require, for any predictive distributions: first, the score change produced by appending the abstention instruction keeps its original size, which gives $a=1$; second, without the abstention instruction ($g=c$), the candidate score differences of vanilla VCD are recovered, which gives $a+b=1+\alpha$ and $e=-\alpha$.
These constraints determine the unique coefficients $(a,b,e)=(1,\alpha,-\alpha)$, so the unguided original-image condition must appear with coefficient $\alpha$.
The combination coefficients and the contrast strength are therefore derived or inherited from the original papers, with no hyperparameter search.

The same score also has a variational characterization: normalize the guided original-image prediction within the candidate set $S$ as $G$; then the unique optimal distribution of the objective $\alpha\,\mathbb E_{v\sim\pi}[c(v)-r(v)]-D_{\mathrm{KL}}(\pi\Vert G)$ is proportional to $e^{s_{\mathrm{IP}}}$.
The visual contrast term raises the weight of candidates better supported by the image, the KL term limits how far the output moves from the guided prediction, and greedy selection at each step takes the mode of this distribution.

\subsection{Two Instances: Perturbed-Image and Image-Free References}
\label{sec:instances}

IP-VCD and IP-M3ID instantiate the preservation principle with a perturbed-image reference and an image-free reference.
IP-VCD uses the perturbed-image reference with candidate filtering, and follows the original VCD settings $\alpha=1$ and $\beta=0.1$ \citep{vcd}.
IP-M3ID uses the image-free reference and applies the same principle at generation position $t$: $s_{\mathrm{IPM},t}=g_t+w_t^c(c_t-r_t)$, where $w_t^c=\mathbf1\{\max_v e^{c_t(v)}<\kappa\}$ $(e^{\lambda\tau_t^c}-1)$ and $\tau_t^c=(t-1)+k_0^c$, $k_0^c$ is the token length of the base instruction, $\lambda=0.02$ and $\kappa=0.3$ follow the original M3ID settings \citep{m3id}, and normalization is over the full vocabulary.
The visual contrast term is activated only when the confidence of the unguided original-image prediction is below the threshold, and its weight grows with the sequence position.
Both the gate and the position weight are determined by unguided conditions, so the instruction increment is still kept at its original size.

In implementation, the three branches share one frozen model and the same generated prefix, each keeps its own cache, and generation is greedy.
The added cost is the forward pass of the unguided original-image branch: on the same A100 setup, over 100 questions shared by the nine models, the average end-to-end time of IP-VCD is 1.41 times that of guided VCD, and that of CDA is 2.29 times.

\section{Experiments and Analysis}
\label{sec:experiments}

\subsection{Tasks, Models, and Fixed Configurations}
\label{sec:expsetup}

The Food-101 evaluation contains 24 images from each of 101 classes, 2,424 questions in total; another 4 images per class form a 404-question development set \citep{food101}.
VizWiz uses 512 fixed inputs, 346 answerable and 166 unanswerable \citep{vizwiz}.
The wording of the abstention instruction is selected on the development set for each model and method and then fixed, and the same choice is carried over to VizWiz.
All methods use the same evaluation sets and generation length limit with frozen model weights, and both tasks are ranked by DU.
Table~\ref{tab:main} also reports the effective reliability $\Phi_1$ of Reliable VQA \citep{reliable_vqa} as a third-party metric: an answer earns its answer score, an answer scoring zero receives a penalty of 1, and an abstention scores zero.

The evaluation covers nine models from the Qwen, LLaVA, MiniCPM, InternVL, Gemma, and Phi families.
Table~\ref{tab:main} reports five representative models and Figure~\ref{fig:decomposition} covers all nine.
Baselines include Direct with the base instruction, vanilla VCD \citep{vcd}, vanilla M3ID \citep{m3id}, DoLa \citep{dola}, DeCo \citep{deco}, SID \citep{sid} on applicable architectures, and the visual-task implementation of CDA \citep{cda}.
Guided Direct and guided VCD use the abstention instruction introduced in this work and are reported as a same-instruction ablation.
The mechanism, transition, and ablation analyses all use a single abstention instruction with UNKNOWN as the abstention marker, to isolate the effect of the decoding construction itself; comparisons that use the main-result configuration are noted where they appear.

\begin{table*}[tp]
\centering
\footnotesize
\setlength{\tabcolsep}{3.1pt}
\renewcommand{\arraystretch}{0.94}
\resizebox{0.97\textwidth}{!}{%
\begin{tabular}{cl cccccc ccccc}
\toprule
 & & \multicolumn{5}{c}{Food-101} & \multicolumn{5}{c}{VizWiz} \\
\cmidrule(lr){3-7} \cmidrule(lr){8-12}
Model & Method & Answer & Abst.\ Prec. & Abst.\ Rec. & DU & $\Phi_1$ & Answer & Abst.\ Prec. & Abst.\ Rec. & DU & $\Phi_1$ \\
\midrule
\textbf{Qwen2.5-VL} & Direct & 30.40 & 0.00 & 0.00 & 30.40 & $-39.19$ & 26.27 & 92.86 & 23.49 & 33.89 & $-31.35$ \\
 & VCD & 28.34 & 0.00 & 0.00 & 28.34 & $-43.32$ & 24.59 & 91.30 & 12.65 & 28.69 & $-39.08$ \\
 & M3ID & 28.71 & 0.00 & 0.00 & 28.71 & $-42.57$ & 26.00 & 93.10 & 16.27 & 31.27 & $-34.16$ \\
 & DoLa & 24.50 & 0.00 & 0.00 & 24.50 & $-50.99$ & 20.29 & 86.49 & 19.28 & 26.54 & $-45.92$ \\
 & DeCo & 30.24 & 0.00 & 0.00 & 30.24 & $-39.52$ & 23.07 & 87.10 & 16.27 & 28.34 & $-40.41$ \\
 & SID & 26.40 & 0.00 & 0.00 & 26.40 & $-47.19$ & 26.48 & 94.12 & 9.76 & 29.64 & $-35.11$ \\
 & CDA & 33.91 & 23.54 & 23.24 & 37.58 & $-16.58$ & 13.12 & 41.77 & 59.64 & 32.46 & $\underline{-23.98}$ \\
 & \textbf{IP-VCD (Ours)} & 35.19 & 29.23 & 55.87 & \underline{44.02} & \underline{0.58} & 14.00 & 52.19 & 78.92 & \textbf{39.59} & $\textbf{-18.42}$ \\
 & \textbf{IP-M3ID (Ours)} & 42.16 & 25.31 & 37.34 & \textbf{48.06} & \textbf{7.63} & 11.89 & 53.20 & 80.12 & \underline{37.87} & $-24.24$ \\
\textbf{LLaVA-Mistral} & Direct & 33.87 & 77.78 & 0.74 & 34.16 & $-31.89$ & 19.38 & 85.71 & 25.30 & 27.58 & $-45.66$ \\
 & VCD & 38.24 & 0.00 & 0.00 & 38.24 & $-23.51$ & 16.35 & 83.33 & 6.02 & 18.30 & $-59.82$ \\
 & M3ID & 34.90 & 100.00 & 0.21 & 34.98 & $-30.12$ & 19.90 & 86.05 & 22.29 & 27.13 & $-46.31$ \\
 & DoLa & 33.29 & 71.43 & 0.53 & 33.50 & $-33.13$ & 18.91 & 83.72 & 21.69 & 25.94 & $-47.70$ \\
 & DeCo & 26.90 & 50.00 & 0.11 & 26.94 & $-46.12$ & 12.70 & 85.71 & 18.07 & 18.55 & $-64.65$ \\
 & SID & 38.28 & 0.00 & 0.00 & 38.28 & $-23.43$ & 16.09 & 87.50 & 16.87 & 21.56 & $-56.56$ \\
 & CDA & 30.03 & 71.43 & 1.05 & 30.45 & $-39.36$ & 15.47 & 51.43 & 32.53 & 26.02 & $-43.91$ \\
 & \textbf{IP-VCD (Ours)} & 28.38 & 53.04 & 61.54 & \textbf{52.48} & \textbf{2.19} & 14.51 & 53.88 & 79.52 & \textbf{40.29} & $\underline{-20.06}$ \\
 & \textbf{IP-M3ID (Ours)} & 13.57 & 42.29 & 77.45 & \underline{43.89} & $\underline{-1.16}$ & 10.70 & 42.98 & 90.36 & \underline{40.00} & $\textbf{-9.02}$ \\
\textbf{MiniCPM-V2.6} & Direct & 43.03 & 100.00 & 0.50 & 43.15 & $-13.82$ & 29.18 & 94.74 & 10.84 & 32.70 & $-28.63$ \\
 & VCD & 43.61 & 100.00 & 0.17 & 43.65 & $-12.75$ & 25.18 & 87.50 & 4.22 & 26.54 & $-39.67$ \\
 & M3ID & 42.70 & 100.00 & 0.34 & 42.78 & $-14.52$ & 29.63 & 92.86 & 7.83 & 32.17 & $-28.96$ \\
 & DoLa & 31.11 & 100.00 & 0.17 & 31.15 & $-37.75$ & 22.36 & 72.73 & 9.64 & 25.49 & $-44.43$ \\
 & DeCo & 41.96 & 100.00 & 0.50 & 42.08 & $-15.97$ & 29.36 & 95.83 & 13.86 & 33.85 & $-27.87$ \\
 & CDA & 27.64 & 0.00 & 0.00 & 27.64 & $-44.55$ & 12.54 & 49.72 & 54.22 & 30.12 & $-37.27$ \\
 & \textbf{IP-VCD (Ours)} & 37.25 & 46.74 & 42.04 & \textbf{47.61} & $\underline{-3.34}$ & 26.52 & 64.04 & 68.67 & \underline{48.79} & $\underline{-4.34}$ \\
 & \textbf{IP-M3ID (Ours)} & 31.27 & 38.24 & 56.11 & \underline{45.09} & $\textbf{-1.32}$ & 26.35 & 56.50 & 75.90 & \textbf{50.96} & \textbf{2.52} \\
\textbf{InternVL3.5} & Direct & 17.08 & 0.00 & 0.00 & 17.08 & $-65.84$ & 24.39 & 94.74 & 10.84 & 27.91 & $\underline{-38.11}$ \\
 & VCD & 19.10 & 100.00 & 0.09 & 19.14 & $-61.76$ & 22.64 & 92.86 & 7.83 & 25.18 & $-45.14$ \\
 & M3ID & 17.57 & 0.00 & 0.00 & 17.57 & $-64.85$ & 24.28 & 93.75 & 9.04 & 27.21 & $-39.20$ \\
 & DoLa & 16.17 & 0.00 & 0.00 & 16.17 & $-67.66$ & 22.19 & 100.00 & 4.82 & 23.75 & $-46.95$ \\
 & DeCo & 17.95 & 0.00 & 0.00 & 17.95 & $-64.11$ & 21.39 & 100.00 & 6.63 & 23.54 & $-47.17$ \\
 & SID & 18.52 & 0.00 & 0.00 & 18.52 & $-62.95$ & 23.59 & 92.31 & 7.23 & 25.94 & $-42.23$ \\
 & CDA & 20.38 & 100.00 & 0.09 & 20.42 & $-59.20$ & 19.96 & 68.49 & 30.12 & 29.73 & $-40.20$ \\
 & \textbf{IP-VCD (Ours)} & 27.72 & 82.50 & 3.07 & \underline{29.08} & $\underline{-42.90}$ & 16.43 & 69.42 & 50.60 & \underline{32.83} & $-38.26$ \\
 & \textbf{IP-M3ID (Ours)} & 27.48 & 85.45 & 4.37 & \textbf{29.41} & $\textbf{-42.78}$ & 17.13 & 69.85 & 57.23 & \textbf{35.68} & $\textbf{-34.04}$ \\
\textbf{Qwen3-VL} & Direct & 53.92 & 50.00 & 0.27 & 53.96 & 7.92 & 37.52 & 96.43 & 16.27 & 42.79 & $-6.62$ \\
 & VCD & 56.97 & 0.00 & 0.00 & \underline{56.97} & 13.94 & 35.37 & 96.15 & 15.06 & 40.25 & $-12.48$ \\
 & M3ID & 53.71 & 50.00 & 0.27 & 53.75 & 7.51 & 37.52 & 96.30 & 15.66 & 42.60 & $-6.82$ \\
 & DoLa & 51.82 & 100.00 & 0.27 & 51.86 & 3.67 & 36.23 & 83.87 & 15.66 & 41.31 & $-9.86$ \\
 & DeCo & 53.71 & 0.00 & 0.00 & 53.71 & 7.43 & 37.17 & 92.59 & 15.06 & 42.05 & $-8.34$ \\
 & SID & 48.14 & 0.00 & 0.00 & 48.14 & $-3.71$ & 37.01 & 92.86 & 15.85 & 42.16 & $-7.15$ \\
 & CDA & 2.76 & 0.00 & 0.00 & 2.76 & $-94.43$ & 17.52 & 62.09 & 57.23 & 36.07 & $-30.14$ \\
 & \textbf{IP-VCD (Ours)} & 53.01 & 36.80 & 46.70 & \textbf{60.02} & \textbf{25.08} & 27.21 & 66.86 & 70.48 & \underline{50.06} & $\underline{-4.24}$ \\
 & \textbf{IP-M3ID (Ours)} & 49.83 & 37.78 & 41.21 & 56.02 & \underline{16.05} & 27.64 & 59.45 & 77.71 & \textbf{52.83} & \textbf{4.98} \\
\bottomrule

\end{tabular}}
\caption{Detailed results of five representative models on the two tasks (\%).
When there is no abstention, abstention precision is recorded as 0.00.
Some inputs of MiniCPM-V2.6 do not provide the number of visual tokens that SID requires, so SID results are not listed for it.
The VizWiz SID results of Qwen2.5-VL and Qwen3-VL are evaluated on the 505 questions that meet this condition; on the same subset, the best IP method reaches a DU of 39.74 and 53.17, respectively.}
\label{tab:main}
\end{table*}

\subsection{Decision Gains in Fine-Grained Recognition and Real-World Question Answering}
\label{sec:gains}

\textbf{Instruction-preserving decoding improves DU on both tasks.}
On Food-101, IP-VCD exceeds all non-IP baselines on eight of nine models; the only exception, Gemma-3, barely responds to the abstention instruction (\S\ref{sec:mechanism}).
IP-M3ID exceeds vanilla M3ID, which uses the same kind of reference, on seven of nine models, and trails by at most 1.03 percentage points on the other two.
On VizWiz, both methods exceed all non-IP baselines on all nine models, leading the strongest baseline of each model by 7.41 and 8.35 percentage points on average, and all 18 paired per-question bootstrap 95\% intervals against vanilla VCD lie above zero.
CDA is designed for text-only question answering \citep{cda}; its calibration with empty-input entropy produces negative abstention weights on some models, and its responses degrade accordingly (Table~\ref{tab:main}, Qwen3-VL).

\begin{figure}[t]
\centering
\includegraphics[width=0.88\linewidth]{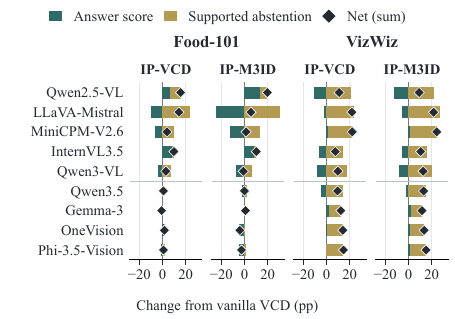}
\caption{Decomposition of DU changes relative to vanilla VCD on nine models (percentage points).
The two bar segments are the changes in answer score and in supported-abstention score, and the diamond is their sum.
The first five rows are the representative models of Table~\ref{tab:main}.}
\label{fig:decomposition}
\end{figure}

\textbf{The gains come from two components.}
Figure~\ref{fig:decomposition} shows that some models gain mainly by answering more questions correctly (e.g., InternVL3.5 on Food-101), while others gain mainly through supported abstentions (e.g., LLaVA-Mistral).
The two sources match the two components of the preservation principle: corrections come primarily from the visual contrast term, and abstentions come primarily from the preserved instruction increment.

\textbf{The conclusion holds when abstentions earn no credit.}
Effective reliability gives every abstention a score of zero and does not depend on abstention labels.
Relative to vanilla VCD, both IP methods improve effective reliability on all nine VizWiz models at error costs of 1, 10, and 100.
The gains therefore come from fewer incorrect answers, not from the credit DU gives to abstention.

\subsection{Mechanism Analysis: Candidate Admission and Score Comparison}
\label{sec:mechanism}

On nine models, we measure the first generation position of one Food-101 image per class and obtain 875 pairs of abstention and answer candidate tokens.
Images are sampled uniformly across classes rather than selected by abstention labels, because candidate admission and the ordering of $\Delta_c$ and $\Delta_r$ are properties of the decoding construction itself.

\textbf{The abstention token is first kept out of the candidate set.}
When candidates are filtered by the unguided prediction, no abstention token passes the filter; when they are filtered by the guided prediction, about one third pass (Figure~\ref{fig:mechanism}a).
This explains why vanilla VCD almost never abstains, and shows that candidates must be determined by the guided prediction.
How many pass depends on whether the model forms an abstention tendency under the instruction.
On Food-101, no abstention candidate of Gemma-3 passes the filter, its guided Direct abstains at most once under each of four wordings, and no decoding method outperforms Direct on this task.
The same model responds to the instruction on VizWiz, where IP-VCD improves over vanilla VCD by 12.42 percentage points.

\textbf{After admission, abstention is pushed down in the score comparison.}
When both candidates pass the filter, about three quarters of the pairs satisfy $\Delta_r>\Delta_c$, that is, the reference-side instruction increment favors abstention more than the original-image side (Figure~\ref{fig:mechanism}a).
Guided VCD lowers the relative score of abstention on these pairs, whereas IP-VCD does not contain this term.
The example in Figure~\ref{fig:conflict} is such a case: the score margin of the abstention first token over the Toast first token is $-0.19$ under guided VCD and $+0.42$ under IP-VCD.

\begin{figure}[t]
\centering
\includegraphics[width=0.88\linewidth]{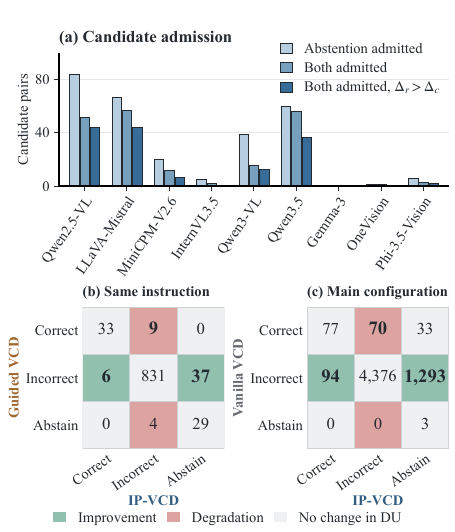}
\caption{Candidate admission and response transitions.
(a) For first-token candidate pairs of each model, the number of pairs in which the abstention token passes guided filtering, both candidates pass, and both pass with $\Delta_r>\Delta_c$; under unguided filtering, no abstention token passes.
(b) Response transitions from guided VCD to IP-VCD on the 949 abstention-labeled inputs of LLaVA-Mistral, under the same UNKNOWN instruction.
(c) Response transitions from vanilla VCD to IP-VCD on abstention-labeled inputs, summed over nine models, under the main-result configuration.
Green and red mark increases and decreases in successful decisions.}
\label{fig:mechanism}
\end{figure}

\subsection{Behavioral Transitions at the Response Level}
\label{sec:behavior}

\textbf{Under the same instruction, the reference-side instruction response turns abstentions into incorrect answers.}
On the 949 abstention-labeled inputs of LLaVA-Mistral, guided VCD and IP-VCD use the same instruction; at a shared prefix, their token scores differ by $\alpha(\delta_r-\delta_c)$ (Figure~\ref{fig:mechanism}b).
Both answer the same number of the 2,424 questions correctly, yet guided VCD keeps only half as many supported abstentions as IP-VCD, and almost all of the missing ones become incorrect answers.
Overall accuracy is unchanged, but guided VCD replaces supported abstentions with incorrect answers.

\textbf{Relative to vanilla VCD, gains arrive through both correction and abstention.}
Under the main-result configuration, many incorrect answers on the abstention-labeled inputs of nine models turn into supported abstentions and some turn into correct answers.
After subtracting the reverse transitions, successful decisions increase by 1,317 in net, and only Gemma-3 shows a small decrease (Figure~\ref{fig:mechanism}c).
IP-M3ID shows the same two paths on the six models with complete paired records.

\subsection{Same-Instruction Ablation: Division of Work Between the Instruction and the Visual Contrast Term}
\label{sec:ablation}

Table~\ref{tab:ablation} compares four constructions, formed by whether the instruction is used and whether visual contrast is applied, together with the two IP methods, to separate the effect of the instruction itself from that of the visual contrast term.

\begin{table}[htbp]
\centering
\footnotesize
\setlength{\tabcolsep}{4pt}
\resizebox{\columnwidth}{!}{%
\begin{tabular}{lcccccc}
\toprule
Model & Direct & VCD & Guided Direct & Guided VCD & IP-VCD & IP-M3ID \\
\midrule
Qwen2.5-VL & 30.40 & 28.34 & 38.45 & 43.44 & 42.08 & 40.92 \\
LLaVA-Mistral & 34.16 & 38.24 & 40.26 & 36.51 & 37.87 & 40.18 \\
MiniCPM-V2.6 & 43.15 & 43.65 & 40.02 & 39.23 & 41.75 & 39.52 \\
InternVL3.5 & 17.08 & 19.14 & 27.15 & 27.15 & 27.89 & 26.28 \\
Qwen3-VL & 53.96 & 56.97 & 55.53 & 58.00 & 58.37 & 55.53 \\
\bottomrule

\end{tabular}}
\caption{DU of each construction under the same abstention instruction (Food-101, \%).
Direct and VCD use the base instruction and repeat the values of Table~\ref{tab:main}; the other four columns use the same UNKNOWN instruction.}
\label{tab:ablation}
\end{table}

\textbf{The effect of the instruction depends on the model and the wording.}
The same instruction raises the DU of InternVL3.5 by about ten percentage points but lowers that of MiniCPM-V2.6, and changing only the wording shifts the DU of guided Direct by up to 9.32 percentage points.
The main results therefore select the wording per model on the development set.
IP-VCD faithfully passes on the effect of the given instruction: when the instruction hurts, IP-VCD also falls below vanilla VCD.

\textbf{The visual contrast term recovers image-supported answers on top of the instruction.}
IP-VCD differs from guided Direct only by the visual contrast term and scores higher on four of five models.
On LLaVA-Mistral, the exception, this instruction elicits only weak abstention, and the visual contrast term turns part of it back into answers; with the development-selected instruction, the same visual contrast term raises DU by 6.65 percentage points.
The net gain of the visual contrast term depends on the quality of the abstentions that the instruction itself produces.

In the settings where guided VCD leads, the increase in correct answers outweighs the loss of supported abstentions.
This trade-off comes from the reference-side instruction suppressing abstention, and the strength of that suppression in turn depends on the reference-side wording.
IP-VCD computes the instruction increment and the visual contrast term separately, so the strength of abstention no longer depends on the reference-side wording.

The visual contrast term of IP-M3ID is activated only at low-confidence positions, and IP-M3ID stays within 2.5 percentage points of guided Direct on the five models of Table~\ref{tab:ablation}.
Overall, the abstention signal comes primarily from the instruction, and image-supported answers come primarily from the visual contrast term.
Selecting the instruction first and then preserving its effect with an IP method is a direct use of this division of work.

\section{Conclusion}
\label{sec:conclusion}

Visual contrast can exclude abstention candidates during filtering and suppress admitted candidates by subtracting a stronger reference-side instruction response.
Instruction-preserving decoding retains the original-image instruction increment while adjusting the visual contrast term independently.
Experiments on nine models and two tasks show how this construction supports correct answers and supported abstentions.
IP-VCD and IP-M3ID instantiate the principle with perturbed-image and image-free references, providing a common construction for reference-based contrastive decoding.
\label{page:lastcontent}

\section*{Limitations}

Instruction-preserving decoding requires access to the model's next-token distribution, computes predictions under three input conditions, and maintains a separate cache for each.
The principle preserves the effect that the abstention instruction already has; when the model does not respond to the instruction (as with Gemma-3 on Food-101), abstention candidates remain below the filtering threshold.
The wording of the instruction has a clear effect on the results and must be selected on a development set.
Our evaluation covers single-turn perception tasks; token-level abstention control in long chain-of-thought, multi-step reasoning is left to future work.

\bibliography{custom}

\appendix
\numberwithin{equation}{section}
\setcounter{table}{0}
\renewcommand{\thetable}{A\arabic{table}}
\setcounter{figure}{0}
\renewcommand{\thefigure}{A\arabic{figure}}

\section{Tasks, Abstention Labels, and Scoring}
\label{app:tasks}

\subsection{Data and Models}
\label{app:data}

Food-101 uses 24 evaluation images from each of 101 classes, 2,424 questions in total; the development set contains 4 images per class, 404 questions in total \citep{food101}.
Abstention labels are built separately for each model: we compute the average token log probability of each of the 101 class names and rank them to obtain the rank of the correct class, and then run ten independent attempts.
An input is labeled for abstention when the correct class is not ranked first and all ten attempts fail.
The class ranking checks how much relative support the correct class receives, and the ten attempts check whether the model actually generates the correct answer.
The resulting abstention labels stay fixed across all method comparisons for the same model; Table~\ref{tab:models} lists the number of abstention-labeled inputs and the checkpoint of each model.

\begin{table}[htbp]
\centering
\footnotesize
\setlength{\tabcolsep}{4pt}
\resizebox{\columnwidth}{!}{%
\begin{tabular}{llr}
\toprule
Model & Checkpoint & Abstention-labeled inputs \\
\midrule
Qwen2.5-VL & Qwen/Qwen2.5-VL-7B-Instruct & 383 \\
LLaVA-Mistral & llava-hf/llava-v1.6-mistral-7b-hf & 949 \\
MiniCPM-V2.6 & openbmb/MiniCPM-V-2\_6 & 597 \\
InternVL3.5 & OpenGVLab/InternVL3\_5-8B & 1075 \\
Qwen3-VL & Qwen/Qwen3-VL-8B-Instruct & 364 \\
Qwen3.5 & Qwen/Qwen3.5-4B & 486 \\
Gemma-3 & google/gemma-3-4b-it & 422 \\
OneVision & llava-hf/llava-onevision-qwen2-7b-ov-hf & 814 \\
Phi-3.5-Vision & microsoft/Phi-3.5-vision-instruct & 856 \\
\bottomrule

\end{tabular}}
\caption{Model checkpoints and the number of abstention-labeled Food-101 inputs.
Each model is evaluated on 2,424 questions.}
\label{tab:models}
\end{table}

Food-101 images are drawn from a class pool that merges the official training and test splits (named train and validation on Hugging Face).
Classes are sorted by name, and images within each class are shuffled with Python \texttt{random.Random(42)}.
Among the first 48 images of each class, the first 24 enter the development candidate pool and the last 24 form the evaluation set.
The 404-question development set selects 4 images per class from the former according to a frozen sample list, and does not overlap with the 2,424-question evaluation set.

We use GPT-5.6 Luna to annotate the generated responses.

Food-101 scoring first identifies the main-dish answer stated explicitly in the complete response, matches it to the 101 canonical class names, and then compares the matched class with the true class of the image.
Matching only normalizes letter case, leading and trailing whitespace, and underscores versus spaces.
Scoring is based on the explicitly stated main-dish answer; side dishes and foods mentioned in explanations are identified separately from the main-dish answer.
A response that gives several competing main-dish answers is counted as incorrect.
A response that explicitly declines to identify the dish is counted as an abstention; a response that expresses uncertainty but still gives a specific answer is scored as an answer.
Each response falls into exactly one of correct answer, incorrect answer, or abstention, and the three counts are mutually exclusive and cover all inputs.

The VizWiz evaluation contains 512 questions: 346 with the official answerable label and 166 with the official unanswerable label \citep{vizwiz}.
Each complete response is first classified as an answer or an abstention, and the official VQA consensus scoring procedure \citep{vqa_eval} is then applied to the primary answer extracted from each answer.
The procedure leaves out one human-annotated answer at a time, divides the number of remaining annotations that match the model's primary answer by 3, caps the score at 1, and averages over all leave-one-out rounds, so the score reflects how well the model's answer agrees with the human annotations.
Abstentions receive an answer score of zero.

The 512 VizWiz questions come from the official val split.
The project split is first determined by the integer value of the first eight hexadecimal digits of the SHA256 hash of the original image file name, modulo 5 (remainder 0 goes to dev and the rest to eval).
From the eval pool, 166 unanswerable and 346 answerable questions are then selected by answerability stratification according to a frozen sample list.

\subsection{Fixed Configurations}
\label{app:configs}

VCD with only the base instruction is called vanilla VCD \citep{vcd}, and the corresponding M3ID is called vanilla M3ID.
The original-only variant adds the abstention instruction only to the original-image branch.

For both IP methods on all nine models, we select, among four predefined wordings, the instruction configuration with the highest DU on the Food-101 development set.
Ties are broken first by overall accuracy and then by the predefined order of the wordings.
The configuration of each model and method is then kept the same for the Food-101 evaluation and for VizWiz.
The VCD component comparison fixes the instruction and compares, one at a time, the roles of the visual contrast term, the reference-side instruction, and the coefficient of the instruction increment.
In every comparison, the task question, the generation length limit, and the model weights stay fixed.
The candidate measurement in \S\ref{sec:mechanism} of the main text, the same-instruction mechanism comparison in \S\ref{sec:behavior}, and the unified ablation in \S\ref{sec:ablation} all use the same instruction with the UNKNOWN marker.
Tables~\ref{tab:vcdcomp} and~\ref{tab:guideddirect} use the development-selected configuration of the corresponding IP method for each model, Table~\ref{tab:phrasings} lists the four wordings, Tables~\ref{tab:ablationfull} and~\ref{tab:ablationcounts} use the unified UNKNOWN configuration, and Table~\ref{tab:gvcdleads} compares the two settings.

\subsection{Decision Utility and Its Interpretation}
\label{app:du}

Let $N$ be the total number of inputs, $C$ and $W$ the numbers of correct and incorrect answers, $TP$ and $FP$ the numbers of supported and unsupported abstentions according to the abstention labels, and $N_+$ the number of abstention-labeled inputs.
The Food-101 decision utility is $\mathrm{DU}=(C+TP)/N$, and overall accuracy is $C/N$.
On VizWiz, we compute the official consensus score $q_i$ of the extracted answer and use binary variables $a_i$ and $u_i$ to indicate whether the response abstains and whether the input is officially labeled unanswerable.
The answer score is $Q=N^{-1}\sum_i(1-a_i)q_i$ and the decision utility is $\mathrm{DU}=Q+N^{-1}\sum_i a_i u_i$: an abstention contributes 0 to the answer term and earns 1 point for the input when the official label supports abstention.
On VizWiz, an official unanswerable label and a positive consensus score can occur on the same input: the label describes answerability, while the consensus score describes agreement between the answer and the human annotations.
Each response is scored by its actual behavior: an answer earns its answer score, and an abstention earns only the label-supported abstention credit.

On both tasks, abstention precision and recall are $TP/(TP+FP)$ and $TP/N_+$.
They measure the share of supported abstentions among all abstentions and the share of abstention-labeled inputs on which the model actually abstains.

The Food-101 decision utility assigns the same unit credit to correct answers and supported abstentions.
Let $H$ be the set of abstention-labeled inputs, $N_+=|H|$, and $N$ the total number of samples; abstaining on every input gives a utility of $N_+/N$.
Further let $C_{\neg H}$ be the number of correct answers on inputs not labeled for abstention and $W_H$ the number of incorrect answers on abstention-labeled inputs.
The utility of any method relative to abstaining everywhere is then $(C_{\neg H}-W_H)/N$.
This identity shows that, relative to abstaining everywhere, correct answers on inputs not labeled for abstention bring gains, incorrect answers on abstention-labeled inputs bring losses, and a correct answer on an abstention-labeled input scores the same as a supported abstention.
Overall accuracy shows the share of correct answers, and abstention precision and recall show the share and coverage of supported abstentions; the gain decomposition uses the original answer term and abstention term of DU.

Given the shares of abstention-labeled inputs in Table~\ref{tab:models}, abstaining everywhere gives InternVL3.5 a utility of 1,075/2,424 = 44.35\%, higher than 29.08\% and 29.41\% for the two IP methods.
The all-abstain utilities of OneVision (33.58\%) and Phi-3.5-Vision (35.31\%) are also higher than those of the corresponding IP-M3ID runs (31.02\% and 35.27\%).
On VizWiz, abstaining everywhere gives 166/512 = 32.42\%, higher than 24.36\% and 23.65\% for the two IP methods on Gemma-3.
These values follow from the unit credit that DU gives to supported abstentions, and should be read together with correct answers, abstention precision, and the answer credit of Reliable VQA.

The Food-101 abstention labels are determined by the independent attempts and class rankings of the frozen base model, and are fixed before any decoding method is compared.
All methods for the same model share these labels, which do not change during decoding.
Differences in utility therefore reflect differences in the decisions that the methods make under the same abstention labels.

\subsection{Generation and Decoding Parameters}
\label{app:generation}

The main evaluation of every decoding method uses greedy generation with temperature 0, top-p 1, and a limit of 32 generated tokens; generation stops at the end-of-sequence token.
The three IP branches are computed by the same frozen model, each with its own input cache, and the token selected at each step is appended to the response prefix of every branch.

The ten independent attempts use sampling with temperature=1, top-p=1, and top-k disabled (set to $-1$ in vLLM), with a limit of 32 generated tokens.
The random seed of the $r$-th attempt ($r=0,\ldots,9$) is \texttt{stable\_seed(sample\_id, model, r)}.
These attempts are used only to build abstention labels; the main evaluation of every decoding method uses the greedy settings above.

For IP-VCD, $\alpha=1$ and $\beta=0.1$ follow VCD \citep{vcd}, so the candidate threshold is 10\% of the highest token probability in the guided original-image prediction.
For vanilla M3ID and IP-M3ID, $\lambda=0.02$ and $\kappa=0.3$ follow M3ID \citep{m3id}.
Vanilla M3ID uses two branches without the abstention instruction, the original image and the image-free input, and its gate depends on the unguided original-image prediction; IP-M3ID adds a guided original-image branch.
When transferring to our tasks, both start the generation position at zero and add the token length of the base task instruction; the fixed position offset of 10 in the POPE example of the original paper belongs to a different input setting.
DeCo uses a correction coefficient of 0.6 and selects candidates with top-k=20 and top-p=0.9; the per-model settings of SID are given with the component implementations.

\section{Mathematical Properties of Instruction Preservation and Full Proofs}
\label{app:math}

Starting from conditional distributions under a shared generated prefix, this section derives the relations among candidate filtering, the reference instruction interaction, and instruction-increment preservation, and proves results on the combination coefficients, the variational solution, candidate-group probabilities, and complete-response probabilities.
The visual contrast and candidate filtering of VCD follow the original definitions \citep{vcd}.
Every probability relation includes the corresponding candidate set and normalization constant.

\subsection{Conditional Distributions, Candidate Support, and Normalization}

\subsubsection{Four Conditions and a Shared Prefix}

Let $p_\theta$ be the frozen vision-language model, $\mathcal V$ a finite vocabulary with $|\mathcal V|\geq2$, $I$ the original image, $\widetilde I$ the reference image, and $y_{<t}$ the tokens generated so far.
Within one conditional comparison, $\widetilde I$ and its perturbation noise stay fixed.
Let $x_0$ contain the task question and the short-answer requirement, and let $x_1$ add the specified abstention clause to $x_0$.
The next-token log probabilities of the four conditions are
\begin{equation}
\begin{aligned}
c_t(v)&=\log p_\theta(v\mid I,x_0,y_{<t}),\\
r_t(v)&=\log p_\theta(v\mid\widetilde I,x_0,y_{<t}),\\
g_t(v)&=\log p_\theta(v\mid I,x_1,y_{<t}),\\
h_t(v)&=\log p_\theta(v\mid\widetilde I,x_1,y_{<t}).
\end{aligned}
\label{eq:B.1}
\end{equation}
The four conditions use the same model, vocabulary, and response prefix, and each keeps the cache of its own input.
Assume that the untruncated logits of every branch are finite on $\mathcal V$, so their softmax probabilities are positive.
Local derivations fix $t$ and drop the time index, writing
\begin{equation}
\begin{gathered}
p_c=e^c,\quad p_r=e^r,\quad p_g=e^g,\quad p_h=e^h,\\
\delta_c=g-c,\quad\delta_r=h-r,\quad d=c-r.
\end{gathered}
\label{eq:B.2}
\end{equation}
$\delta_c$ and $\delta_r$ are the per-token log-probability changes caused by appending the instruction under the two visual conditions.
$d$ compares how strongly the original image and the reference image support the same continuation; both of its conditions keep the task question, the short-answer requirement, and the current response prefix.

\subsubsection{Candidate Constraint and Decoding Distribution}

For a log-probability vector $\ell$ and $0<\beta\leq1$, define
\begin{equation}
\begin{aligned}
\mathcal S_\ell
&=\{v\in\mathcal V:\ell(v)\geq\max_{w\in\mathcal V}\ell(w)+\log\beta\}\\
&=\{v:p_\ell(v)\geq\beta\max_w p_\ell(w)\}.
\end{aligned}
\label{eq:B.3}
\end{equation}
$\mathcal S_\ell$ is the candidate set and also the support of the decoding distribution.
For $\beta=0$, we set $\mathcal S_\ell=\mathcal V$ by convention.
Every maximum-probability token belongs to $\mathcal S_\ell$, so the set is nonempty.
For a nonempty set $S\subseteq\mathcal V$ and a finite score $s$, define the restricted normalization operator
\begin{equation}
\begin{gathered}
\mathcal P_S[s](v)
=\frac{\mathbf1\{v\in S\}\exp s(v)}{Z_S(s)},\\
Z_S(s)=\sum_{w\in S}\exp s(w).
\end{gathered}
\label{eq:B.4}
\end{equation}
For $\alpha\ge0$, the scores and distributions of vanilla VCD, guided VCD, and IP-VCD are
\begin{equation}
\begin{gathered}
s_N=(1+\alpha)c-\alpha r,\quad p_N=\mathcal P_{\mathcal S_c}[s_N],\\
s_G=(1+\alpha)g-\alpha h,\quad p_G=\mathcal P_{\mathcal S_g}[s_G],\\
s_{\mathrm{IP}}=g+\alpha(c-r),\\
\quad p_{\mathrm{IP}}=\mathcal P_{\mathcal S_g}[s_{\mathrm{IP}}],
\end{gathered}
\label{eq:B.5}
\end{equation}
The subscripts $N$, $G$, and $\mathrm{IP}$ denote vanilla unguided VCD, guided VCD, and IP-VCD.
The score functions are defined over the whole vocabulary, and each method normalizes within the candidate set given in Eq.~\eqref{eq:B.5}.
Greedy generation selects the highest-scoring token in the candidate set at each step, breaking ties in a fixed order.
The probability identities below use the distribution defined in Eq.~\eqref{eq:B.4}.

\begin{lemma}[Cancellation of the normalization constant]\label{lem:b1}
For any $u,v\in S$,
\begin{equation}
\log\frac{\mathcal P_S[s](u)}{\mathcal P_S[s](v)}=s(u)-s(v).
\label{eq:B.6}
\end{equation}
\end{lemma}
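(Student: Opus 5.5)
The plan is to substitute the definition in Eq.~\eqref{eq:B.4} directly and check that every quantity involved is finite and positive, so the logarithm and the ratio are well defined.

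First, because $S$ is nonempty and $s$ is finite on $\mathcal V$, every summand $\exp s(w)$ with $w\in S$ is a positive real number. Hence $Z_S(s)$ is a finite positive real number.

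Next, for $u,v\in S$ the indicator in Eq.~\eqref{eq:B.4} equals $1$. This gives
\[
\mathcal P_S[s](u)=\frac{\exp s(u)}{Z_S(s)}>0
\quad\text{and}\quad
\mathcal P_S[s](v)=\frac{\exp s(v)}{Z_S(s)}>0.
\]
So both probabilities are strictly positive, and their ratio and its logarithm exist.

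Finally, form the ratio. The common factor $Z_S(s)$ cancels, leaving $\exp\bigl(s(u)-s(v)\bigr)$, and taking the logarithm yields Eq.~\eqref{eq:B.6}.

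The only point needing care is the restriction $u,v\in S$. Outside $S$ the indicator vanishes, the probability is zero, and the log-ratio is undefined. The hypothesis of the lemma excludes this case, so no step is a real obstacle.
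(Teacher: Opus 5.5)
Your proof is correct and follows the same route as the paper: both probabilities share the denominator $Z_S(s)$, which cancels in the ratio, and taking the logarithm gives Eq.~\eqref{eq:B.6}; your explicit check that $Z_S(s)$ and both probabilities are finite and positive is a harmless addition. The paper's proof also remarks that using raw logits instead of log-probabilities shifts any token-independent linear combination of branches by a constant, which leaves the restricted softmax, score differences, argmax, and candidate admission unchanged, but this side observation is not needed for the stated identity.
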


\begin{proof}
Both probabilities have the denominator $Z_S(s)$, which cancels in the ratio; taking the logarithm gives the result.
Moreover, if the logits of the $j$-th branch are $z_j$, then $\ell_j=z_j-\log\sum_v e^{z_j(v)}$.
For any token-independent coefficients $a_j$, $\sum_j a_jz_j$ and $\sum_j a_j\ell_j$ differ by a token-independent constant.
The restricted softmax, the score differences between candidates, and the argmax all remain the same.
Candidate admission compares $\ell_j(v)-\max_w\ell_j(w)$, which also cancels this constant.
\end{proof}

This property makes the logit implementation match the log-probability derivation, and lets the preservation property be stated through the log odds of normalized distributions.

\subsection{Two Stages: Candidate Admission and the Reference Instruction Interaction}

\subsubsection{Admission Condition for the Abstention Candidate}

For $0<\beta\leq1$, define the margin of candidate $u$ above the admission threshold as
\begin{equation}
K_\ell(u)=\ell(u)-\max_w\ell(w)-\log\beta.
\label{eq:B.7}
\end{equation}
$K_\ell(u)\geq0$ holds exactly when $u\in\mathcal S_\ell$.
Since $g=c+\delta_c$,
\begin{equation}
\begin{gathered}
K_g(u)-K_c(u)\\
=\delta_c(u)-\big[\max_w\{c(w)+\delta_c(w)\}-\max_w c(w)\big].
\end{gathered}
\label{eq:B.8}
\end{equation}

\begin{proof}
Substitute $g=c+\delta_c$ into $K_g$ and $K_c$, and cancel $c(u)$ and $\log\beta$.
Equivalently, candidate admission can be written as a condition over all competing tokens:
\begin{equation}
\begin{gathered}
u\in\mathcal S_g
\iff\\
\quad[c(u)-c(v)]+[\delta_c(u)-\delta_c(v)]\geq\log\beta,\\
\quad\forall v\in\mathcal V,
\end{gathered}
\label{eq:B.9}
\end{equation}
because $g(u)\geq\max_vg(v)+\log\beta$ is equivalent to $g(u)-g(v)\geq\log\beta$ for every $v$.
\end{proof}

Equation~\eqref{eq:B.8} shows that whether the abstention candidate passes the filter depends on how its instruction increment compares with the change in the highest score over the vocabulary.
Only candidates above the threshold take part in later comparisons.
By Eq.~\eqref{eq:B.4}, tokens outside the candidate set have zero probability, and relative probabilities within the set are determined by the score $s$.
IP-VCD filters candidates with $g$ and then adjusts scores with the visual contrast term.

\subsubsection{Full Decomposition of Guided VCD}

\begin{proposition}[Reference instruction interaction]
The four input conditions satisfy
\begin{equation}
\boxed{
\begin{gathered}
s_G=s_N+\delta_c+\alpha(\delta_c-\delta_r),\\
s_{\mathrm{IP}}=s_N+\delta_c.
\end{gathered}}
\label{eq:B.10}
\end{equation}
\end{proposition}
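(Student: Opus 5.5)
The plan is a direct algebraic substitution: both identities in Eq.~\eqref{eq:B.10} are pointwise identities between score functions on $\mathcal V$, so it suffices to verify them token by token using the definitions in Eq.~\eqref{eq:B.2} and Eq.~\eqref{eq:B.5}. No normalization or candidate set enters, because the statement concerns the unnormalized scores $s_N$, $s_G$, $s_{\mathrm{IP}}$, which are defined on the whole vocabulary. The only assumption used is that all four log-probability vectors are finite, which makes every difference well defined.

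For the guided score, I will substitute $g=c+\delta_c$ and $h=r+\delta_r$ into $s_G=(1+\alpha)g-\alpha h$. This gives
\begin{equation*}
s_G=(1+\alpha)c-\alpha r+(1+\alpha)\delta_c-\alpha\delta_r .
\end{equation*}
I will then recognize the first two terms as $s_N$ and regroup $(1+\alpha)\delta_c-\alpha\delta_r$ as $\delta_c+\alpha(\delta_c-\delta_r)$. For the IP score, I substitute $g=c+\delta_c$ into $s_{\mathrm{IP}}=g+\alpha(c-r)$. This gives $c+\delta_c+\alpha c-\alpha r=(1+\alpha)c-\alpha r+\delta_c=s_N+\delta_c$.

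There is no real obstacle. The one point worth stating explicitly is interpretive rather than technical. By Lemma~\ref{lem:b1}, the identities transfer to log-odds between any two candidates $u,v$ that lie in a common candidate set. The relevant set is $\mathcal S_g$ for both guided VCD and IP-VCD, so their normalizations are comparable. For example, $s_G(u)-s_G(v)$ differs from $s_{\mathrm{IP}}(u)-s_{\mathrm{IP}}(v)$ exactly by $\alpha(\Delta_c-\Delta_r)$. This recovers the remark in \S\ref{sec:stages} that abstention loses $\alpha(\Delta_r-\Delta_c)$ when $\Delta_r>\Delta_c$.

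Comparisons with $s_N$ are different, since $p_N$ normalizes on $\mathcal S_c$ rather than $\mathcal S_g$. Those comparisons therefore hold only as score identities, or as log-odds identities on $\mathcal S_c\cap\mathcal S_g$. I would add this caveat so the identity is not misread as an equality of distributions.
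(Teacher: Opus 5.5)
Your proposal is correct and follows the paper's proof: substitute $g=c+\delta_c$ and $h=r+\delta_r$ into both scores, regroup term by term, and note $s_{\mathrm{IP}}-s_G=\alpha(\delta_r-\delta_c)$, which your pairwise log-odds remark relies on. Your caveat that $s_N$ is normalized on $\mathcal S_c$ rather than $\mathcal S_g$ matches the paper's remark after Theorem~\ref{thm:b1}.
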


\begin{proof}
Using $g=c+\delta_c$ and $h=r+\delta_r$ and expanding term by term:
\begin{equation}
\begin{aligned}
s_G
&=(1+\alpha)(c+\delta_c)-\alpha(r+\delta_r)\\
&=(1+\alpha)c-\alpha r+(1+\alpha)\delta_c-\alpha\delta_r\\
&=s_N+\delta_c+\alpha(\delta_c-\delta_r),\\
s_{\mathrm{IP}}
&=c+\delta_c+\alpha(c-r)
=s_N+\delta_c.
\end{aligned}
\label{eq:B.11}
\end{equation}
Subtracting the two gives
\begin{equation}
s_{\mathrm{IP}}-s_G=\alpha(\delta_r-\delta_c).
\label{eq:B.12}
\end{equation}
\end{proof}

The decomposition gives, in order, the vanilla visual contrast score, the instruction increment on the original image, and the extra term produced by the different responses under the two visual conditions.

\subsubsection{From Score Changes to Candidate Flips}

Fix an abstention continuation candidate $u$ and an answer continuation candidate $v$, and define score margins as ``abstention minus answer'':
\begin{equation}
\begin{gathered}
m_\ell=\ell(u)-\ell(v),\qquad
\Delta_c=\delta_c(u)-\delta_c(v),\\
\Delta_r=\delta_r(u)-\delta_r(v).
\end{gathered}
\label{eq:B.13}
\end{equation}
Write $m_N=s_N(u)-s_N(v)$ and so on for the combined scores.
Then
\begin{equation}
\begin{aligned}
m_N&=m_c+\alpha(m_c-m_r),\\
m_G&=m_N+\Delta_c+\alpha(\Delta_c-\Delta_r),\\
m_{\mathrm{IP}}&=m_N+\Delta_c
=m_G+\alpha(\Delta_r-\Delta_c).
\end{aligned}
\label{eq:B.14}
\end{equation}
When $\alpha>0$ and $\Delta_r>\Delta_c$, the extra instruction response under the reference condition lowers the abstention-versus-answer margin of guided VCD by $\alpha(\Delta_r-\Delta_c)$.
When $u,v\in\mathcal S_g$, Lemma~\ref{lem:b1} gives the exact relation between normalized probabilities
\begin{equation}
\frac{p_{\mathrm{IP}}(u)/p_{\mathrm{IP}}(v)}{p_G(u)/p_G(v)}
=\exp\{\alpha(\Delta_r-\Delta_c)\}.
\label{eq:B.15}
\end{equation}
Hence the strict condition for the pair to flip from the answer leading to abstention leading is
\begin{equation}
m_G<0<m_G+\alpha(\Delta_r-\Delta_c).
\label{eq:B.16}
\end{equation}
For the greedy winner over the whole candidate set, let $k(w)=\alpha[\delta_r(w)-\delta_c(w)]$.
A candidate $u\in\mathcal S_g$ is the unique highest-scoring token of IP-VCD if and only if
\begin{equation}
\begin{gathered}
s_G(u)-s_G(v)+k(u)-k(v)>0,\\
\qquad\forall v\in\mathcal S_g\setminus\{u\}.
\end{gathered}
\label{eq:B.17}
\end{equation}
Equation~\eqref{eq:B.16} gives the ranking flip between two candidates; Eq.~\eqref{eq:B.17} further requires the abstention candidate to exceed every other token in the set, which corresponds to the actual greedy choice.

The unguided reference can also change the relative support for abstention: by Eq.~\eqref{eq:B.14}, when $m_r>m_c$ and $\alpha>0$, vanilla contrast changes the margin relative to the original distribution by $\alpha(m_c-m_r)<0$.
This visual contrast term remains part of IP-VCD; Eq.~\eqref{eq:B.12} states exactly how IP-VCD handles the extra instruction interaction.

\subsubsection{Removing the Instruction Only from the Reference Side}

Removing the abstention clause only on the reference side gives the score
\begin{equation}
\begin{aligned}
s_{\mathrm{ref\text{-}off}}
&=(1+\alpha)g-\alpha r\\
&=s_N+(1+\alpha)\delta_c
=s_{\mathrm{IP}}+\alpha\delta_c.
\end{aligned}
\label{eq:B.18}
\end{equation}
The original-only variant amplifies the instruction increment on the original image by a factor of $1+\alpha$.
IP-VCD adds the unguided original-image condition $c$, which keeps the coefficient of the instruction increment at 1 and lets $\alpha$ alone control the coefficient of the visual contrast term.

\subsection{The Preservation Property and the Three-Condition Coefficients}

\subsubsection{Exact Preservation of the Relative Instruction Effect}

\begin{theorem}[Instruction-increment preservation]\label{thm:b1}
Fix the generated prefix, and normalize $s_N$ and $s_{\mathrm{IP}}$ on the same nonempty support $S$.
For any $u,v\in S$,
\begin{equation}
\boxed{
\begin{aligned}
&\log\frac{\mathcal P_S[s_{\mathrm{IP}}](u)}{\mathcal P_S[s_{\mathrm{IP}}](v)}
-
\log\frac{\mathcal P_S[s_N](u)}{\mathcal P_S[s_N](v)}\\
&\qquad=\delta_c(u)-\delta_c(v).
\end{aligned}}
\label{eq:B.19}
\end{equation}
\end{theorem}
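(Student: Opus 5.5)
The plan is to reduce both log-ratios to score differences using Lemma~\ref{lem:b1}, and then substitute the decomposition from the Reference instruction interaction proposition, Eq.~\eqref{eq:B.10}. Since $S$ is nonempty and every score is finite (the branch logits are finite by assumption, so $c,r,g$ are finite on $\mathcal V$), both $\mathcal P_S[s_{\mathrm{IP}}]$ and $\mathcal P_S[s_N]$ are well defined. Both are also strictly positive on $S$, so the logarithms in Eq.~\eqref{eq:B.19} make sense for any $u,v\in S$.

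\textbf{Step 1.} Apply Lemma~\ref{lem:b1} to each distribution separately, with the common support $S$:
\begin{equation*}
\log\frac{\mathcal P_S[s_{\mathrm{IP}}](u)}{\mathcal P_S[s_{\mathrm{IP}}](v)}=s_{\mathrm{IP}}(u)-s_{\mathrm{IP}}(v),\qquad
\log\frac{\mathcal P_S[s_N](u)}{\mathcal P_S[s_N](v)}=s_N(u)-s_N(v).
\end{equation*}
The normalizers $Z_S(s_{\mathrm{IP}})$ and $Z_S(s_N)$ never need to be compared. Each one cancels within its own ratio, which is why the theorem requires only that both distributions share the same $S$ and not that they share a normalizer.

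\textbf{Step 2.} Subtract the two identities and use $s_{\mathrm{IP}}=s_N+\delta_c$ from Eq.~\eqref{eq:B.10}, which holds pointwise on all of $\mathcal V$. This gives $[s_N(u)+\delta_c(u)]-[s_N(v)+\delta_c(v)]-[s_N(u)-s_N(v)]=\delta_c(u)-\delta_c(v)$, which is Eq.~\eqref{eq:B.19}.

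There is no substantive obstacle here; the only point needing care is the hypothesis on the support. In the actual decoders, vanilla VCD normalizes on $\mathcal S_c$ and IP-VCD on $\mathcal S_g$. The theorem deliberately fixes a single $S$, and the identity only holds for tokens $u,v$ lying in the common support. I would therefore remark after the proof that for the deployed methods the statement applies to $u,v\in\mathcal S_c\cap\mathcal S_g$, by running Steps 1--2 with each distribution on its own candidate set. This works because Lemma~\ref{lem:b1} only needs $u,v$ to lie in the set being normalized over. By Eq.~\eqref{eq:B.12}, the analogous comparison with guided VCD would instead produce the additional term $\alpha(\Delta_c-\Delta_r)$.
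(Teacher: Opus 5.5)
Your proof is correct and matches the paper's argument: Lemma~\ref{lem:b1} turns each log-ratio into a score difference on the common support $S$, and substituting $s_{\mathrm{IP}}=s_N+\delta_c$ from Eq.~\eqref{eq:B.10} leaves $\delta_c(u)-\delta_c(v)$. Your closing remark, that for the deployed decoders the identity holds for $u,v\in\mathcal S_c\cap\mathcal S_g$, is the same caveat the paper states right after its proof.
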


\begin{proof}
By Lemma~\ref{lem:b1}, the left-hand side equals
\begin{equation*}
\begin{aligned}
&[s_{\mathrm{IP}}(u)-s_{\mathrm{IP}}(v)]-[s_N(u)-s_N(v)]\\
&=[s_N(u)+\delta_c(u)-s_N(v)-\delta_c(v)]\\
&\qquad-[s_N(u)-s_N(v)]
=\delta_c(u)-\delta_c(v).
\end{aligned}
\end{equation*}
\end{proof}
For the actual distributions, which use $\mathcal S_c$ and $\mathcal S_g$ respectively, the same conclusion holds for $u,v\in\mathcal S_c\cap\mathcal S_g$.
For candidates that newly enter $\mathcal S_g$, the instruction effect within the same candidate set can be examined separately by normalizing $s_N$ on $\mathcal S_g$.

Exponentiating Eq.~\eqref{eq:B.19} gives
\begin{equation}
\frac{\mathcal P_S[s_{\mathrm{IP}}](u)}{\mathcal P_S[s_{\mathrm{IP}}](v)}
=
\frac{\mathcal P_S[s_N](u)}{\mathcal P_S[s_N](v)}
\frac{p_g(u)/p_c(u)}{p_g(v)/p_c(v)}.
\label{eq:B.20}
\end{equation}
The change in candidate odds caused by the instruction on the original image is therefore kept in full in the odds after visual contrast.
Correspondingly, relative to the guided original-image base $p_g$, the visual change applied by IP-VCD is
\begin{equation}
\begin{gathered}
\log\frac{p_{\mathrm{IP}}(u)}{p_{\mathrm{IP}}(v)}
-\log\frac{p_g(u)}{p_g(v)}
=\alpha[d(u)-d(v)],\\
u,v\in\mathcal S_g.
\end{gathered}
\label{eq:B.21}
\end{equation}
Equations~\eqref{eq:B.19} and~\eqref{eq:B.21} show that the instruction increment on the original image is kept at its original size, while $\alpha$ sets the strength of the visual contrast term.

\subsubsection{Unique Coefficients Within Three-Condition Linear Combinations}

\begin{theorem}[Characterization of coefficients in the three-condition linear family]
Consider the three-condition linear family with token-independent coefficients
\begin{equation}
s_{a,b,e}=a g+b c+e r.
\label{eq:B.22}
\end{equation}
Require the family to satisfy two properties for positive conditional distributions under any input.
First, when $c,r$ are fixed and $g$ is replaced by $g'$, the score change of any candidate pair equals the change of that pair in $g'-g$.
Second, when $g=c$, the score difference of any candidate pair equals that of vanilla VCD.
Then the coefficients are uniquely
\begin{equation}
(a,b,e)=(1,\alpha,-\alpha).
\label{eq:B.23}
\end{equation}
\end{theorem}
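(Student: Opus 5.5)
The plan is to translate each of the two requirements into a linear identity that must hold for all admissible log-probability vectors. Then choose particular distributions that force the coefficients. Every statement concerns differences over a candidate pair $u,v$, so we work with pairwise margins $m_\ell=\ell(u)-\ell(v)$ as in Eq.~\eqref{eq:B.13}. The score $s_{a,b,e}$ gives the margin $a\,m_g+b\,m_c+e\,m_r$.

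\textbf{First property.} Fix $c,r$ and replace $g$ by $g'$. The pair margin changes by $a(m_{g'}-m_g)$, and the requirement is that this equal $m_{g'}-m_g$. Since $|\mathcal V|\ge2$, we can pick two positive distributions $g,g'$ on $\mathcal V$ with $m_{g'}\neq m_g$ for some pair. For example, take $g$ uniform and $g'$ placing more mass on $u$. This gives $(a-1)(m_{g'}-m_g)=0$ with a nonzero factor, hence $a=1$. Conversely, $a=1$ makes the property hold identically, which will be needed for the sufficiency direction.

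\textbf{Second property.} Set $g=c$. The margin becomes $(a+b)m_c+e\,m_r$, and it must equal the vanilla margin $(1+\alpha)m_c-\alpha m_r$ for all positive $c,r$. The conditions on $c$ and $r$ are independent, since the image conditions are arbitrary inputs. First choose $r$ uniform, so $m_r=0$, and $c$ with $m_c\neq0$; this gives $a+b=1+\alpha$. Then choose $c$ uniform and $r$ with $m_r\neq0$; this gives $e=-\alpha$. Combining with $a=1$ yields $b=\alpha$, which is Eq.~\eqref{eq:B.23}.

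\textbf{Sufficiency.} We check that $(1,\alpha,-\alpha)$ satisfies both requirements. This is immediate from Eq.~\eqref{eq:B.10}, because $s_{\mathrm{IP}}=s_N+\delta_c$: the first property is the pair version of Theorem~\ref{thm:b1}, and at $g=c$ we get $\delta_c=0$, so $s_{\mathrm{IP}}=s_N$.

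\textbf{Main obstacle.} The main point needing care is the quantifier ``for any input.” The argument requires that the family of admissible $(g,c,r)$ be rich enough that $m_g$, $m_c$ and $m_r$ can be varied independently. I will state explicitly that the requirement ranges over all positive distributions on $\mathcal V$, which the theorem's phrasing grants. Lemma~\ref{lem:b1} guarantees that working with log-probabilities rather than logits adds only token-independent constants, which cancel in the margins, so the choice of normalization does not affect the conclusion.
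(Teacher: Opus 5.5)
Your proposal is correct and follows essentially the same route as the paper's proof. Both arguments read the two requirements as linear identities in pairwise margins, $a(m_{g'}-m_g)=m_{g'}-m_g$ and $(a+b)m_c+e\,m_r=(1+\alpha)m_c-\alpha m_r$, and then vary $m_g$, $m_c$, $m_r$ independently to force $a=1$, $a+b=1+\alpha$ and $e=-\alpha$; your uniform-versus-tilted choices make concrete the paper's remark that pairwise log odds can vary freely, and your added sufficiency check is correct, though it follows from linearity (your observation that $a=1$ makes the first property hold identically), whereas Theorem~\ref{thm:b1} is only the special case in which the guided slot moves from $c$ to $g$.
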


\begin{proof}
The first requirement gives
\begin{equation*}
\begin{gathered}
a\big[(g'(u)-g(u))-(g'(v)-g(v))\big]\\
=(g'(u)-g(u))-(g'(v)-g(v)).
\end{gathered}
\end{equation*}
The log odds of a candidate pair under positive distributions can vary continuously, so this identity requires $a=1$.
The second requirement asks that, for $g=c$ and any $c,r,u,v$,
\begin{equation*}
\begin{aligned}
&(a+b)[c(u)-c(v)]+e[r(u)-r(v)]\\
&=(1+\alpha)[c(u)-c(v)]-\alpha[r(u)-r(v)].
\end{aligned}
\end{equation*}
Varying the pairwise odds of $c$ and of $r$ separately gives $a+b=1+\alpha$ and $e=-\alpha$.
Substituting $a=1$ gives $b=\alpha$.
\end{proof}

Uniqueness refers to the linear family of Eq.~\eqref{eq:B.22} and the two design requirements above.
Adding the same token-independent constant to all scores leaves the decoding distribution unchanged.

\subsubsection{Probability Form and Special Cases}

Substituting the three-condition score into the restricted normalization gives
\begin{equation}
p_{\mathrm{IP}}(v)
=\frac{\mathbf1\{v\in\mathcal S_g\}\,p_g(v)
\left[p_c(v)/p_r(v)\right]^\alpha}
{\sum_{w\in\mathcal S_g}p_g(w)\left[p_c(w)/p_r(w)\right]^\alpha}.
\label{eq:B.24}
\end{equation}
This form, a base distribution multiplied by a likelihood ratio, is a decoding-time product-of-experts construction \citep{dexperts}.
All three distributions are computed by the same frozen model under their respective input conditions.

If $g=c$, then $\mathcal S_g=\mathcal S_c$, and both the score and the support recover vanilla VCD.
If $c=r$ or $\alpha=0$, then
\begin{equation}
p_{\mathrm{IP}}(v)
=\frac{\mathbf1\{v\in\mathcal S_g\}p_g(v)}{p_g(\mathcal S_g)}.
\label{eq:B.25}
\end{equation}
This is the conditional distribution of the guided base on the retained set.
Since $\mathcal S_g$ contains every maximum-probability candidate, greedy selection is the same as guided Direct.
If $\beta=0$, Eq.~\eqref{eq:B.25} further equals the full $p_g$.

\subsection{The Unique Optimal Distribution Under KL Regularization}

\subsubsection{Objective and Closed-Form Solution}

Fix the generated prefix and the support $S=\mathcal S_g$, and write the conditional distribution of the guided base on this support as
\begin{equation}
G(v)=\frac{p_g(v)}{p_g(S)},\qquad v\in S,
\label{eq:B.26}
\end{equation}
and let $\Delta(S)$ be the probability simplex on $S$.
The visual contrast term is $d(v)=c(v)-r(v)$.
Consider the local objective
\begin{equation}
\begin{gathered}
\mathcal L_\alpha(\pi)
=\alpha\sum_{v\in S}\pi(v)d(v)
-\sum_{v\in S}\pi(v)\log\frac{\pi(v)}{G(v)},\\
\qquad\pi\in\Delta(S),
\end{gathered}
\label{eq:B.27}
\end{equation}
with the convention $0\log0=0$.
The first term rewards support from the original image relative to the reference, and the second term uses the KL divergence to limit departures from the guided base.

\begin{theorem}[Variational optimality on a fixed support]\label{thm:b3}
The unique maximizer of objective~\eqref{eq:B.27} is
\begin{equation}
\begin{gathered}
\pi_\alpha^*(v)
=\frac{G(v)e^{\alpha d(v)}}{\overline Z_\alpha},\\
\overline Z_\alpha=\sum_{w\in S}G(w)e^{\alpha d(w)}.
\end{gathered}
\label{eq:B.28}
\end{equation}
This distribution is exactly $p_{\mathrm{IP}}$.
\end{theorem}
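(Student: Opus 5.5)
The plan is the standard Gibbs variational argument: rewrite the objective as a negative KL divergence to $\pi_\alpha^*$ plus a constant, then use Gibbs' inequality.

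First I check that $\pi_\alpha^*$ is a well-defined element of $\Delta(S)$. Since $S=\mathcal S_g$ is nonempty and all logits are finite, $G(v)>0$ and $d(v)$ is finite for every $v\in S$. Hence $\overline Z_\alpha\in(0,\infty)$ and $\pi_\alpha^*(v)>0$ on $S$.

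Next, for any $\pi\in\Delta(S)$, I substitute $\log G(v)+\alpha d(v)=\log\pi_\alpha^*(v)+\log\overline Z_\alpha$ into \eqref{eq:B.27}. This should give the identity
\begin{equation*}
\mathcal L_\alpha(\pi)=\log\overline Z_\alpha-\sum_{v\in S}\pi(v)\log\frac{\pi(v)}{\pi_\alpha^*(v)}
=\log\overline Z_\alpha-D_{\mathrm{KL}}(\pi\Vert\pi_\alpha^*).
\end{equation*}
Terms with $\pi(v)=0$ vanish on both sides under the convention $0\log0=0$. Because $\pi_\alpha^*$ has full support on $S$, the KL divergence is finite.

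By Gibbs' inequality, which follows from Jensen applied to the strictly concave $\log$, $D_{\mathrm{KL}}(\pi\Vert\pi_\alpha^*)\ge0$, with equality iff $\pi=\pi_\alpha^*$. So $\mathcal L_\alpha\le\log\overline Z_\alpha$, and this bound is attained exactly at $\pi_\alpha^*$, which gives existence and uniqueness together. Handling zeros of $\pi$ in the equality case is the only delicate point. I would settle it with the log-sum inequality, or by noting that strict concavity of $\log$ forces $\pi(v)/\pi_\alpha^*(v)$ to be constant on the support of $\pi$, and that constant must equal $1$ with full support.

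Finally, I identify the optimizer with $p_{\mathrm{IP}}$. Write $\pi_\alpha^*(v)\propto G(v)e^{\alpha d(v)}=p_g(v)e^{\alpha(c(v)-r(v))}/p_g(S)$. The factor $p_g(S)$ does not depend on the token, so it cancels in the normalization, leaving $\pi_\alpha^*(v)\propto\exp\{g(v)+\alpha(c(v)-r(v))\}=\exp s_{\mathrm{IP}}(v)$ on $\mathcal S_g$. This matches $\mathcal P_{\mathcal S_g}[s_{\mathrm{IP}}]$ from \eqref{eq:B.5}, and equivalently \eqref{eq:B.24}. Lemma~\ref{lem:b1} confirms that token-independent constants do not matter here.

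I expect no real obstacle. The one step needing care is the equality case of Gibbs' inequality when $\pi$ has zeros, and the full support of $\pi_\alpha^*$ resolves it.
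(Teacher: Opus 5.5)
Your proposal is correct and follows the paper's proof essentially step for step. You rewrite $\mathcal L_\alpha(\pi)$ as $\log\overline Z_\alpha-D_{\mathrm{KL}}(\pi\Vert\pi_\alpha^*)$, use nonnegativity of KL with the equality case forced by the full support of $\pi_\alpha^*$ on $S$, and cancel $p_g(S)$ to recover Eq.~\eqref{eq:B.24}; the only difference is that the paper gets Gibbs' inequality from $\log x\le x-1$ rather than from Jensen or the log-sum inequality.
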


\begin{proof}
By Eq.~\eqref{eq:B.28},
\begin{equation*}
\log\pi_\alpha^*(v)=\log G(v)+\alpha d(v)-\log\overline Z_\alpha.
\end{equation*}
For any $\pi\in\Delta(S)$, substituting into the KL divergence gives
\begin{equation*}
\begin{aligned}
D_{\mathrm{KL}}(\pi\|\pi_\alpha^*)
&=\sum_v\pi(v)[\log\pi(v)-\log G(v)\\
&\qquad\qquad-\alpha d(v)+\log\overline Z_\alpha]\\
&=D_{\mathrm{KL}}(\pi\|G)-\alpha\mathbb E_\pi[d]+\log\overline Z_\alpha.
\end{aligned}
\end{equation*}
Rearranging,
\begin{equation}
\boxed{
\mathcal L_\alpha(\pi)
=\log\overline Z_\alpha-D_{\mathrm{KL}}(\pi\|\pi_\alpha^*).
}
\label{eq:B.29}
\end{equation}
Nonnegativity of KL follows directly from $\log x\leq x-1$.
Summing over terms with $\pi(v)>0$,
\begin{equation*}
\begin{gathered}
-\sum_{v:\pi(v)>0}\pi(v)\log\frac{\pi_\alpha^*(v)}{\pi(v)}\\
\geq\sum_{v:\pi(v)>0}[\pi(v)-\pi_\alpha^*(v)]\geq0.
\end{gathered}
\end{equation*}
Equality requires every ratio at positive-probability positions to be 1 and $\pi_\alpha^*$ to put zero total mass elsewhere.
Since $\pi_\alpha^*$ is positive everywhere on $S$, equality holds exactly when $\pi=\pi_\alpha^*$.
The unique maximum of Eq.~\eqref{eq:B.29} is therefore $\log\overline Z_\alpha$, attained by $\pi_\alpha^*$.
Substituting $G=p_g/p_g(S)$ into Eq.~\eqref{eq:B.28}, the common factor $p_g(S)$ cancels and gives Eq.~\eqref{eq:B.24}.
\end{proof}

The log probabilities of the optimal distribution differ from the three-condition score only by a candidate-independent normalization constant, so greedy decoding at each step selects the mode of this distribution.
The objective determines the next-token distribution under a fixed prefix; DU evaluates the final decision from the complete response and the fixed abstention labels.

\subsubsection{How the Contrast Strength Changes the Distribution}

Let $\psi(\alpha)=\log\overline Z_\alpha$.
A finite vocabulary allows term-by-term differentiation, giving
\begin{equation}
\begin{gathered}
\psi'(\alpha)=\mathbb E_{\pi_\alpha^*}[d],\qquad
\psi''(\alpha)=\operatorname{Var}_{\pi_\alpha^*}(d)\geq0.
\end{gathered}
\label{eq:B.30}
\end{equation}

\begin{proof}
Differentiating $\overline Z_\alpha$ and dividing by itself gives the first equation.
Differentiating again, the second equals $\mathbb E[d^2]-(\mathbb E[d])^2$.
Equivalently, for each candidate,
\begin{equation}
\frac{\partial\pi_\alpha^*(v)}{\partial\alpha}
=\pi_\alpha^*(v)\big[d(v)-\mathbb E_{\pi_\alpha^*}[d]\big].
\label{eq:B.31}
\end{equation}
\end{proof}
With the three conditional distributions and the candidate set fixed, increasing $\alpha$ raises the probability of candidates whose visual contrast term is above the current mean, and the expected visual contrast term does not decrease.

For a predefined candidate group $A\subset S$, let $B=S\setminus A$, with both groups nonempty.
Summing gives
\begin{equation}
\begin{aligned}
\frac{\partial\pi_\alpha^*(A)}{\partial\alpha}
=\pi_\alpha^*(A)\pi_\alpha^*(B)
\big[&\mathbb E_{\pi_\alpha^*}[d\mid A]\\
&-\mathbb E_{\pi_\alpha^*}[d\mid B]\big].
\end{aligned}
\label{eq:B.32}
\end{equation}
The growth of a group's probability depends on the difference between the conditional means of the visual contrast term in the two groups, which gives the condition under which the contrast strength changes the probability of a whole group.

\subsection{From Single Candidates to Candidate-Group Probabilities}

\subsubsection{A General Grouping Identity for Exponential Tilting}

Let $p$ be a positive distribution on $\mathcal V$, let $A,B$ be a nonempty partition of $\mathcal V$, and let $A\cap S$ and $B\cap S$ both be nonempty.
For any finite function $f$, define
\begin{equation}
q(v)=\frac{\mathbf1\{v\in S\}p(v)e^{f(v)}}{\sum_{w\in S}p(w)e^{f(w)}}.
\label{eq:B.33}
\end{equation}
Write $O_p(A)=p(A)/p(B)$ and let $p_S=p(\cdot\mid S)$.
Then
\begin{equation}
\boxed{
\begin{aligned}
&\log O_q(A)-\log O_p(A)
=\log\frac{p(S\mid A)}{p(S\mid B)}\\
&\quad+\log\mathbb E_{p_S(\cdot\mid A)}[e^f]
-\log\mathbb E_{p_S(\cdot\mid B)}[e^f].
\end{aligned}}
\label{eq:B.34}
\end{equation}

\begin{proof}
Summing over each group, the normalization constant cancels:
\begin{equation*}
\begin{aligned}
\frac{q(A)}{q(B)}
&=\frac{\sum_{v\in A\cap S}p(v)e^{f(v)}}{\sum_{v\in B\cap S}p(v)e^{f(v)}}\\
&=\frac{p(A)p(S\mid A)}{p(B)p(S\mid B)}
\frac{\mathbb E_{p(\cdot\mid A\cap S)}[e^f]}{\mathbb E_{p(\cdot\mid B\cap S)}[e^f]}.
\end{aligned}
\end{equation*}
Taking logarithms gives Eq.~\eqref{eq:B.34}.
\end{proof}
The first term measures how much candidate filtering changes the between-group odds, and the other two terms measure the reweighting of the retained candidates.
If $A\cap S$ is empty, then $q(A)=0$; if $B\cap S$ is empty, then $q(A)=1$, and the probability follows directly from the support.

For IP-VCD, taking $p=p_g$, $f=\alpha(c-r)$, and $S=\mathcal S_g$, Eq.~\eqref{eq:B.34} gives the full probability change of any prespecified group of abstention tokens relative to the guided original-image distribution.

\subsubsection{Group Probability Change After Removing the Reference Interaction}

Guided VCD and IP-VCD share $\mathcal S_g$.
Let $k=\alpha(\delta_r-\delta_c)$.
By Eq.~\eqref{eq:B.12},
\begin{equation}
p_{\mathrm{IP}}(v)
=\frac{p_G(v)e^{k(v)}}{\mathbb E_{p_G}[e^k]},\qquad v\in\mathcal S_g.
\label{eq:B.35}
\end{equation}
Hence, for a nonempty partition $A,B$ of the candidate set,
\begin{equation}
\begin{aligned}
&\log O_{p_{\mathrm{IP}}}(A)-\log O_{p_G}(A)\\
&\quad=\log\mathbb E_{p_G(\cdot\mid A)}[e^k]
-\log\mathbb E_{p_G(\cdot\mid B)}[e^k].
\end{aligned}
\label{eq:B.36}
\end{equation}

\begin{proof}
Substitute $s_{\mathrm{IP}}=s_G+k$ into the restricted softmax and divide the denominator by $Z_{\mathcal S_g}(s_G)$ to obtain Eq.~\eqref{eq:B.35}; then apply Eq.~\eqref{eq:B.34} on the same support.
\end{proof}

The group probability increases strictly if and only if
\begin{equation}
\begin{gathered}
p_{\mathrm{IP}}(A)>p_G(A)
\iff
\mathbb E_{p_G(\cdot\mid A)}[e^k]\\
>\mathbb E_{p_G(\cdot\mid B)}[e^k].
\end{gathered}
\label{eq:B.37}
\end{equation}
An intuitive sufficient condition is $\min_{u\in A}k(u)>\max_{v\in B}k(v)$.
For a single candidate, the exact condition is $k(u)>\log\mathbb E_{p_G}[e^k]$.
These relations connect candidate score changes with changes in normalized probability: the advantage of one group over the other depends on the weighted increments of the two groups.

\subsubsection{Decomposing VCD into Candidate Support, Between-Group Preference, and Within-Group Allocation}

Let $p,q$ be the original and reference probabilities of a VCD instance, let $S$ be the support, and let the contrastive distribution be
\begin{equation}
m(v)\propto\mathbf1\{v\in S\}p(v)^{1+\alpha}q(v)^{-\alpha}.
\label{eq:B.38}
\end{equation}
Define $p_S=p(\cdot\mid S)$ and $q_S=q(\cdot\mid S)$, and on the two retained groups define $p_A=p_S(\cdot\mid A)$, $q_A=q_S(\cdot\mid A)$, and likewise $p_B,q_B$.
For $\rho>1$, the R\'enyi divergence follows the standard definition for finite distributions \citep{vanerven2014renyi}
\begin{equation}
D_\rho(P\|Q)=\frac{1}{\rho-1}\log\sum_v P(v)^\rho Q(v)^{1-\rho}.
\label{eq:B.39}
\end{equation}

\begin{proposition}[Exact decomposition of the group-level contrast effect]
When $\alpha>0$ and both groups have positive mass on $S$,
\begin{equation}
{
\boxed{
\begin{aligned}
&\log O_m(A)-\log O_p(A)\\
&\quad=\underbrace{\log O_{p_S}(A)-\log O_p(A)}_{\text{candidate support}}\\
&\qquad+\underbrace{\alpha[\log O_{p_S}(A)-\log O_{q_S}(A)]}_{\text{reference between-group preference}}\\
&\qquad+\underbrace{\alpha\big[D_{1+\alpha}(p_A\|q_A)-D_{1+\alpha}(p_B\|q_B)\big]}_{\text{within-group allocation}}.
\end{aligned}}}
\label{eq:B.40}
\end{equation}
\end{proposition}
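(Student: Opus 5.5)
The plan is to compute the between-group odds of $m$ directly, split them into the three labelled terms, and then check that the labelled terms telescope back to the total. Nothing earlier in the paper is needed beyond the cancellation of the normalizing constant (Lemma~\ref{lem:b1} in group form, as in the proof of Eq.~\eqref{eq:B.34}).

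\textbf{Step 1: raw odds of $m$.} Summing Eq.~\eqref{eq:B.38} over each retained group, the normalizer cancels and
\begin{equation*}
O_m(A)=\frac{\sum_{v\in A\cap S}p(v)^{1+\alpha}q(v)^{-\alpha}}{\sum_{v\in B\cap S}p(v)^{1+\alpha}q(v)^{-\alpha}}.
\end{equation*}

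\textbf{Step 2: factor each group sum.} For $v\in A\cap S$ I will write
\begin{equation*}
p(v)=p(S)\,p_S(A)\,p_A(v),\qquad q(v)=q(S)\,q_S(A)\,q_A(v).
\end{equation*}
The $A$-sum then becomes
\begin{equation*}
p(S)^{1+\alpha}q(S)^{-\alpha}\,p_S(A)^{1+\alpha}q_S(A)^{-\alpha}\sum_{v\in A\cap S}p_A(v)^{1+\alpha}q_A(v)^{-\alpha}.
\end{equation*}
The $B$-sum factors in the same way. The factors $p(S)^{1+\alpha}q(S)^{-\alpha}$ are common to both sums and cancel.

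By Eq.~\eqref{eq:B.39} with $\rho=1+\alpha>1$, the remaining inner sum equals $\exp\{\alpha D_{1+\alpha}(p_A\|q_A)\}$, and likewise $\exp\{\alpha D_{1+\alpha}(p_B\|q_B)\}$ for group $B$. This identification is the one step that needs care: the exponent $1-\rho=-\alpha$ must match, the prefactor $1/(\rho-1)=1/\alpha$ must cancel against the outer factor $\alpha$, and all quantities must be finite. Finiteness holds because $p$ and $q$ are positive, by the finite-logit assumption, so $q_A$ has full support on $A\cap S$. Both groups have positive mass on $S$ by hypothesis, so all conditionals are well defined.

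\textbf{Step 3: take logarithms.} Step 2 gives
\begin{equation*}
\log O_m(A)=(1+\alpha)\log O_{p_S}(A)-\alpha\log O_{q_S}(A)+\alpha\big[D_{1+\alpha}(p_A\|q_A)-D_{1+\alpha}(p_B\|q_B)\big].
\end{equation*}

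\textbf{Step 4: arrange into the three labelled terms.} Subtract $\log O_p(A)$ from both sides. Then split
\begin{equation*}
(1+\alpha)\log O_{p_S}(A)=\log O_{p_S}(A)+\alpha\log O_{p_S}(A).
\end{equation*}
The first piece, together with $-\log O_p(A)$, is the candidate-support term $\log O_{p_S}(A)-\log O_p(A)$. This term is zero when $S=\mathcal V$ and otherwise equals $\log[p(S\mid A)/p(S\mid B)]$, as in Eq.~\eqref{eq:B.34}. The second piece, together with $-\alpha\log O_{q_S}(A)$, is the reference between-group preference term. The Rényi difference is the within-group allocation term. Together these give Eq.~\eqref{eq:B.40}.

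The rest is routine bookkeeping.
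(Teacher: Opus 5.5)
Your proposal is correct and is essentially the paper's proof: write each group's unnormalized mass as $p_S(A)^{1+\alpha}q_S(A)^{-\alpha}\exp\{\alpha D_{1+\alpha}(p_A\|q_A)\}$ (and likewise for $B$), take the logarithm of the ratio, subtract $\log O_p(A)$, and split $(1+\alpha)\log O_{p_S}(A)$. The only difference is cosmetic: you write $p(v)=p(S)\,p_S(A)\,p_A(v)$ and cancel $p(S)^{1+\alpha}q(S)^{-\alpha}$ explicitly, whereas the paper first replaces $p,q$ by $p_S,q_S$ and notes that this introduces a proportionality constant shared by the two groups.
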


\begin{proof}
Replacing $p,q$ by $p_S,q_S$ only introduces a proportionality constant shared by the two groups.
The unnormalized mass on group $A$ can be written as
\begin{equation}
\begin{aligned}
F_A
&=\sum_{v\in A\cap S}p_S(v)^{1+\alpha}q_S(v)^{-\alpha}\\
&=p_S(A)^{1+\alpha}q_S(A)^{-\alpha}\\
&\quad\times\sum_{v\in A\cap S}p_A(v)^{1+\alpha}q_A(v)^{-\alpha}\\
&=p_S(A)^{1+\alpha}q_S(A)^{-\alpha}
\exp\{\alpha D_{1+\alpha}(p_A\|q_A)\}.
\end{aligned}
\label{eq:B.41}
\end{equation}
The same holds for $B$.
Since $O_m(A)=F_A/F_B$,
\begin{equation*}
\begin{aligned}
\log O_m(A)
={}&(1+\alpha)\log O_{p_S}(A)\\
&-\alpha\log O_{q_S}(A)\\
&+\alpha\big[D_{1+\alpha}(p_A\|q_A)\\
&\qquad-D_{1+\alpha}(p_B\|q_B)\big].
\end{aligned}
\end{equation*}
Subtracting $\log O_p(A)$ and splitting $(1+\alpha)\log O_{p_S}(A)$ gives Eq.~\eqref{eq:B.40}.
\end{proof}
When $\alpha=0$, $m=p_S$, and the probability change is given by the candidate-support term.

This decomposition grounds the change in group-level abstention in three computable quantities: which candidates pass the threshold, how the reference distribution favors each group, and how each group distributes probability among its own tokens.
When the retained set contains only two candidates, one per group, both within-group divergences are zero and Eq.~\eqref{eq:B.40} recovers the score-margin decomposition of a candidate pair.
The local abstention group is a prespecified set of tokens; the abstention probability of a complete response is computed as an event over terminated sequences.

\subsection{Complete-Response Probabilities in Autoregressive Generation}

\subsubsection{Accumulating Local Normalization Along a Path}

Let $\mathcal Y_T$ be the set of all output token sequences that end at the first end-of-sequence token or at the maximum length $T$.
For any method $M$, the stepwise normalized distributions define the complete sequence probability
\begin{equation}
P_M(y)=\prod_{t=1}^{|y|}p_{M,t}(y_t\mid y_{<t}),\qquad y\in\mathcal Y_T.
\label{eq:B.42}
\end{equation}
Each internal node of the generation tree distributes its probability mass to its children according to the normalized next-token probabilities, so the probabilities of all leaves sum to 1.
For a response ended by the end-of-sequence token, the product includes that token; for a response that reaches the length limit, the product includes the last token actually generated.

For IP-VCD, define
\begin{equation}
\begin{aligned}
Z_{\mathrm{IP},t}(y_{<t})
&=\sum_{v\in\mathcal S_{g,t}(y_{<t})}\\
&\quad\times\exp\{g_t(v)+\alpha[c_t(v)-r_t(v)]\}.
\end{aligned}
\label{eq:B.43}
\end{equation}
For paths that satisfy the support constraint at every step, substituting into Eq.~\eqref{eq:B.42} and taking logarithms gives
\begin{equation}
\begin{aligned}
\log P_{\mathrm{IP}}(y)
={}&\sum_t g_t(y_t)
+\alpha\sum_t[c_t(y_t)-r_t(y_t)]\\
&-\sum_t\log Z_{\mathrm{IP},t}(y_{<t}).
\end{aligned}
\label{eq:B.44}
\end{equation}
Once a path contains a token outside the support, $P_{\mathrm{IP}}(y)=0$.
In Eq.~\eqref{eq:B.44}, all conditional distributions are evaluated along the same $y$, and the local normalization constants are determined by the prefixes of that path.

Further let $P_g(y)=\prod_t e^{g_t(y_t)}$, $P_c(y)=\prod_t e^{c_t(y_t)}$, $P_r(y)=\prod_t e^{r_t(y_t)}$, and $\chi_g(y)=\prod_t\mathbf1\{y_t\in\mathcal S_{g,t}(y_{<t})\}$.
For fixed $\alpha$, the full product form is
\begin{equation}
P_{\mathrm{IP}}(y)
=\frac{\chi_g(y)P_g(y)[P_c(y)/P_r(y)]^\alpha}
{\prod_t Z_{\mathrm{IP},t}(y_{<t})}.
\label{eq:B.45}
\end{equation}
Equation~\eqref{eq:B.45} keeps the normalization constant of every prefix.
Computing these quantities step by step along a given response gives its complete sequence probability.

\subsubsection{Path Difference Between IP-VCD and Guided VCD}

Let $Z_{G,t}$ be the stepwise normalization constant of guided VCD.
On the same given prefix, the two methods use the same $\mathcal S_{g,t}$.
For paths with positive probability under both, Eqs.~\eqref{eq:B.12} and~\eqref{eq:B.44} give
\begin{equation}
\begin{aligned}
\log\frac{P_{\mathrm{IP}}(y)}{P_G(y)}
={}&\alpha\sum_t[\delta_{r,t}(y_t)-\delta_{c,t}(y_t)]\\
&-\sum_t\log\frac{Z_{\mathrm{IP},t}(y_{<t})}{Z_{G,t}(y_{<t})}.
\end{aligned}
\label{eq:B.46}
\end{equation}

\begin{proof}
Write $\log p_{\mathrm{IP},t}-\log p_{G,t}$ for each step, use the score difference~\eqref{eq:B.12}, and sum along the path.
\end{proof}

Equation~\eqref{eq:B.46} accounts both for the removed reference interaction and for the redistribution of probability mass at each step.
Greedy paths form their own prefixes after the first step at which the selected tokens differ; when comparing complete paths, the probabilities above are computed step by step along each given response.

\subsubsection{Probability Change of the Complete Abstention Event}

Let $\mathcal A\subset\mathcal Y_T$ be the abstention event labeled by the semantics of the complete response, and $\mathcal B=\mathcal Y_T\setminus\mathcal A$.
Take a base path distribution $P_0$ that is positive on $\mathcal Y_T$, such as the untruncated guided distribution $P_g$, and another normalized path distribution $P_1$.
Define the log-likelihood change of a path as
\begin{equation}
\begin{gathered}
D(y)=\log\frac{P_1(y)}{P_0(y)},\\
e^{D(y)}=0\ \text{when}\ P_1(y)=0.
\end{gathered}
\label{eq:B.47}
\end{equation}
As long as both events have positive mass under the compared distributions,
\begin{equation}
\boxed{
\begin{aligned}
&\log\frac{P_1(\mathcal A)}{P_1(\mathcal B)}
-\log\frac{P_0(\mathcal A)}{P_0(\mathcal B)}\\
&\quad=\log\mathbb E_{P_0(\cdot\mid\mathcal A)}[e^D]
-\log\mathbb E_{P_0(\cdot\mid\mathcal B)}[e^D].
\end{aligned}}
\label{eq:B.48}
\end{equation}

\begin{proof}
Since $P_1(y)=P_0(y)e^{D(y)}$, summing over the event $\mathcal A$ gives
\begin{equation*}
P_1(\mathcal A)=P_0(\mathcal A)\mathbb E_{P_0(\cdot\mid\mathcal A)}[e^D].
\end{equation*}
The same holds for $\mathcal B$; taking the ratio of the two and then the logarithm gives the result.
\end{proof}

For a prespecified finite set of responses $\mathcal C\subseteq\mathcal Y_T$, constructing $P_j(\cdot\mid\mathcal C)$ and grouping by $\mathcal A\cap\mathcal C$ and $\mathcal B\cap\mathcal C$, Eq.~\eqref{eq:B.48} still holds.
The computation uses the complete token sequence, the way generation ends, and the semantic label of the response, and counts identical token paths once.
Token admission, candidate ranking, and the complete abstention event are thus each computed in their own sample space.

\subsection{Extension to IP-M3ID}

M3ID contrasts a condition with visual input against a condition with the image removed, and controls the strength of this difference through next-token confidence and a time weight \citep{m3id}.
In IP-M3ID, $c_t$ still denotes the unguided original-image log probability, and $r_t$ becomes the image-free log probability that keeps the same text and generated prefix.
Let $\tau_t^c$ be the sequence position index of the unguided condition and $\kappa$ the confidence threshold.
The weight of the current construction can be written as
\begin{equation}
\begin{gathered}
w_t^c
=\mathbf1\{\max_v e^{c_t(v)}<\kappa\}
\big(e^{\lambda\tau_t^c}-1\big),\\
\lambda\geq0,\quad\tau_t^c\geq0.
\end{gathered}
\label{eq:B.49}
\end{equation}
Vanilla M3ID defines the time weight by the generation step \citep{m3id}; IP-M3ID uses the sum of the base-instruction length and the number of generated tokens, $\tau_t^c=(t-1)+k_0^c$, where $k_0^c$ is the number of tokens in the base instruction.
$\lambda=0.02$ and $\kappa=0.3$ follow the original settings.
The weight is determined jointly by $c_t$ and this position index and is the same for all candidates on a fixed prefix.

IP-M3ID and its unguided form with the same weight are
\begin{equation}
\begin{aligned}
s_{\mathrm{IPM},t}&=g_t+w_t^c(c_t-r_t),\\
s_{\mathrm{NM},t}&=c_t+w_t^c(c_t-r_t),\\
s_{\mathrm{IPM},t}-s_{\mathrm{NM},t}&=\delta_{c,t}.
\end{aligned}
\label{eq:B.50}
\end{equation}
IP-M3ID normalizes over the full vocabulary.
The candidate-pair preservation property of Theorem~\ref{thm:b1} therefore holds over the full vocabulary, and Theorem~\ref{thm:b3} holds on each fixed prefix with $\alpha$ replaced by $w_t^c$.
Path probabilities keep $\sum_t w_t^c[c_t(y_t)-r_t(y_t)]$ with the actual weight at each step.

For guided M3ID, let $w_t^g=\mathbf1\{\max_v e^{g_t(v)}<\kappa\}(e^{\lambda\tau_t^g}-1)$, where $\tau_t^g$ is the sequence position of the guided condition.
With the guided image-free distribution $h_t$ as the reference,
\begin{equation*}
s_{\mathrm{GM},t}=g_t+w_t^g(g_t-h_t).
\end{equation*}
Using $g_t-h_t=(c_t-r_t)+(\delta_{c,t}-\delta_{r,t})$ gives the full difference
\begin{equation}
\boxed{
\begin{aligned}
s_{\mathrm{IPM},t}-s_{\mathrm{GM},t}
={}&(w_t^c-w_t^g)(c_t-r_t)\\
&+w_t^g(\delta_{r,t}-\delta_{c,t}).
\end{aligned}}
\label{eq:B.51}
\end{equation}
The first term reflects the difference in gating and time weights, and the second reflects the reference instruction interaction.
When the two weights are equal, the first term vanishes and the difference is determined by the instruction interaction.

\subsection{Worked Substitution for the Beef Carpaccio Example}

In the beef carpaccio example of LLaVA-Mistral, let $u$ and $v$ be the first tokens of abstention and of the answer Toast.
With the input, generated prefix, and noise fixed, the ``abstention minus answer'' score margins under the four conditions are
\begin{equation*}
\begin{gathered}
(m_c,m_r,m_g,m_h)\\
=(-12.5625,-11.546875,1.4375,3.0625).
\end{gathered}
\end{equation*}
By Eq.~\eqref{eq:B.13},
\begin{equation}
\begin{aligned}
\Delta_c&=m_g-m_c=1.4375-(-12.5625)=14,\\
\Delta_r&=m_h-m_r=3.0625-(-11.546875)\\
&=14.609375.
\end{aligned}
\label{eq:B.52}
\end{equation}
With $\alpha=1$, the vanilla visual contrast margin is
\begin{equation}
m_N=2m_c-m_r=-13.578125.
\label{eq:B.53}
\end{equation}
Guided VCD and IP-VCD give
\begin{equation}
\begin{aligned}
m_G&=m_N+\Delta_c+(\Delta_c-\Delta_r)\\
&=-13.578125+14-0.609375\\
&=-0.1875,\\
m_{\mathrm{IP}}&=m_N+\Delta_c=-13.578125+14\\
&=0.421875.
\end{aligned}
\label{eq:B.54}
\end{equation}
Both candidates pass guided filtering.
Relative to guided VCD, IP-VCD multiplies the odds of the abstention candidate over the answer candidate by $\exp(0.609375)$.
The highest-scoring token at this position changes from the answer first token $v$ to the abstention first token $u$, and the complete response changes from Toast to an abstention.

We further insert a diagnostic parameter $\eta\in[0,1]$ on the common support and define
\begin{equation}
\begin{gathered}
s_\eta=s_{\mathrm{IP}}+\eta\alpha(\delta_c-\delta_r),\\
m_\eta=0.421875-0.609375\eta.
\end{gathered}
\label{eq:B.55}
\end{equation}
With the prefix fixed, the exact crossing point of the pair's scores on the common support is
\begin{equation}
\eta_* = \frac{0.421875}{0.609375}=\frac9{13}\approx0.692308.
\label{eq:B.56}
\end{equation}
For $\eta<9/13$, the abstention candidate leads; for $\eta>9/13$, the answer candidate leads.
The existing conditions $\eta=0$, $0.5$, and $1$ select the abstention first token, the abstention first token, and the answer first token, respectively.
The example links the decomposition of the instruction response to changes in candidate ranking and in the complete response.
\section{Full Results on Nine Models}
\label{app:full}

Tables~\ref{tab:foodfull} and~\ref{tab:vizfull} collect the main results of all methods by model, reporting overall accuracy on Food-101 and answer score on VizWiz, together with abstention precision, abstention recall, and DU.
Overall accuracy, answer score, and DU use all inputs as the denominator; abstention precision and recall use the number of abstentions and the number of abstention-labeled inputs.
When there is no abstention, precision is shown as 0.00.
Bold and underline mark the highest and second-highest DU within the same model and the same evaluation set; the 505-question subset marks only the highest value in bold.

\subsection{Food-101}

\begin{table*}[tp]
\centering
\footnotesize
\setlength{\tabcolsep}{4pt}
\resizebox{0.98\textwidth}{!}{%
\begin{tabular}{llrrrrrr}
\toprule
Model & Method & Overall accuracy & Abstention precision & Abstention recall & Decision utility & Supported abstentions & Unsupported abstentions \\
\midrule
Qwen2.5-VL & Direct & 30.40 & 0.00 & 0.00 & 30.40 & 0 & 0 \\
Qwen2.5-VL & Vanilla VCD & 28.34 & 0.00 & 0.00 & 28.34 & 0 & 0 \\
Qwen2.5-VL & Vanilla M3ID & 28.71 & 0.00 & 0.00 & 28.71 & 0 & 0 \\
Qwen2.5-VL & DoLa & 24.50 & 0.00 & 0.00 & 24.50 & 0 & 0 \\
Qwen2.5-VL & DeCo & 30.24 & 0.00 & 0.00 & 30.24 & 0 & 0 \\
Qwen2.5-VL & SID & 26.40 & 0.00 & 0.00 & 26.40 & 0 & 0 \\
Qwen2.5-VL & CDA & 33.91 & 23.54 & 23.24 & 37.58 & 89 & 289 \\
Qwen2.5-VL & \textbf{IP-VCD (Ours)} & 35.19 & 29.23 & 55.87 & \underline{44.02} & 214 & 518 \\
Qwen2.5-VL & \textbf{IP-M3ID (Ours)} & 42.16 & 25.31 & 37.34 & \textbf{48.06} & 143 & 422 \\
LLaVA-Mistral & Direct & 33.87 & 77.78 & 0.74 & 34.16 & 7 & 2 \\
LLaVA-Mistral & Vanilla VCD & 38.24 & 0.00 & 0.00 & 38.24 & 0 & 0 \\
LLaVA-Mistral & Vanilla M3ID & 34.90 & 100.00 & 0.21 & 34.98 & 2 & 0 \\
LLaVA-Mistral & DoLa & 33.29 & 71.43 & 0.53 & 33.50 & 5 & 2 \\
LLaVA-Mistral & DeCo & 26.90 & 50.00 & 0.11 & 26.94 & 1 & 1 \\
LLaVA-Mistral & SID & 38.28 & 0.00 & 0.00 & 38.28 & 0 & 0 \\
LLaVA-Mistral & CDA & 30.03 & 71.43 & 1.05 & 30.45 & 10 & 4 \\
LLaVA-Mistral & \textbf{IP-VCD (Ours)} & 28.38 & 53.04 & 61.54 & \textbf{52.48} & 584 & 517 \\
LLaVA-Mistral & \textbf{IP-M3ID (Ours)} & 13.57 & 42.29 & 77.45 & \underline{43.89} & 735 & 1003 \\
MiniCPM-V2.6 & Direct & 43.03 & 100.00 & 0.50 & 43.15 & 3 & 0 \\
MiniCPM-V2.6 & Vanilla VCD & 43.61 & 100.00 & 0.17 & 43.65 & 1 & 0 \\
MiniCPM-V2.6 & Vanilla M3ID & 42.70 & 100.00 & 0.34 & 42.78 & 2 & 0 \\
MiniCPM-V2.6 & DoLa & 31.11 & 100.00 & 0.17 & 31.15 & 1 & 0 \\
MiniCPM-V2.6 & DeCo & 41.96 & 100.00 & 0.50 & 42.08 & 3 & 0 \\
MiniCPM-V2.6 & CDA & 27.64 & 0.00 & 0.00 & 27.64 & 0 & 4 \\
MiniCPM-V2.6 & \textbf{IP-VCD (Ours)} & 37.25 & 46.74 & 42.04 & \textbf{47.61} & 251 & 286 \\
MiniCPM-V2.6 & \textbf{IP-M3ID (Ours)} & 31.27 & 38.24 & 56.11 & \underline{45.09} & 335 & 541 \\
InternVL3.5 & Direct & 17.08 & 0.00 & 0.00 & 17.08 & 0 & 0 \\
InternVL3.5 & Vanilla VCD & 19.10 & 100.00 & 0.09 & 19.14 & 1 & 0 \\
InternVL3.5 & Vanilla M3ID & 17.57 & 0.00 & 0.00 & 17.57 & 0 & 0 \\
InternVL3.5 & DoLa & 16.17 & 0.00 & 0.00 & 16.17 & 0 & 0 \\
InternVL3.5 & DeCo & 17.95 & 0.00 & 0.00 & 17.95 & 0 & 0 \\
InternVL3.5 & SID & 18.52 & 0.00 & 0.00 & 18.52 & 0 & 0 \\
InternVL3.5 & CDA & 20.38 & 100.00 & 0.09 & 20.42 & 1 & 0 \\
InternVL3.5 & \textbf{IP-VCD (Ours)} & 27.72 & 82.50 & 3.07 & \underline{29.08} & 33 & 7 \\
InternVL3.5 & \textbf{IP-M3ID (Ours)} & 27.48 & 85.45 & 4.37 & \textbf{29.41} & 47 & 8 \\
Qwen3-VL & Direct & 53.92 & 50.00 & 0.27 & 53.96 & 1 & 1 \\
Qwen3-VL & Vanilla VCD & 56.97 & 0.00 & 0.00 & \underline{56.97} & 0 & 0 \\
Qwen3-VL & Vanilla M3ID & 53.71 & 50.00 & 0.27 & 53.75 & 1 & 1 \\
Qwen3-VL & DoLa & 51.82 & 100.00 & 0.27 & 51.86 & 1 & 0 \\
Qwen3-VL & DeCo & 53.71 & 0.00 & 0.00 & 53.71 & 0 & 0 \\
Qwen3-VL & SID & 48.14 & 0.00 & 0.00 & 48.14 & 0 & 0 \\
Qwen3-VL & CDA & 2.76 & 0.00 & 0.00 & 2.76 & 0 & 1 \\
Qwen3-VL & \textbf{IP-VCD (Ours)} & 53.01 & 36.80 & 46.70 & \textbf{60.02} & 170 & 292 \\
Qwen3-VL & \textbf{IP-M3ID (Ours)} & 49.83 & 37.78 & 41.21 & 56.02 & 150 & 247 \\
\bottomrule

\end{tabular}}
\caption{Full Food-101 results on nine models (part 1 of 2).
Each method is evaluated on 2,424 questions; rates are percentages and the two abstention columns are counts.}
\label{tab:foodfull}
\end{table*}

\addtocounter{table}{-1}
\begin{table*}[tp]
\centering
\footnotesize
\setlength{\tabcolsep}{4pt}
\resizebox{0.98\textwidth}{!}{%
\begin{tabular}{llrrrrrr}
\toprule
Model & Method & Overall accuracy & Abstention precision & Abstention recall & Decision utility & Supported abstentions & Unsupported abstentions \\
\midrule
Qwen3.5 & Direct & 51.32 & 0.00 & 0.00 & 51.32 & 0 & 0 \\
Qwen3.5 & Vanilla VCD & 51.90 & 0.00 & 0.00 & \underline{51.90} & 0 & 0 \\
Qwen3.5 & Vanilla M3ID & 50.33 & 0.00 & 0.00 & 50.33 & 0 & 0 \\
Qwen3.5 & DoLa & 51.69 & 0.00 & 0.00 & 51.69 & 0 & 0 \\
Qwen3.5 & DeCo & 51.16 & 0.00 & 0.00 & 51.16 & 0 & 0 \\
Qwen3.5 & CDA & 26.28 & 0.65 & 0.21 & 26.32 & 1 & 152 \\
Qwen3.5 & \textbf{IP-VCD (Ours)} & 51.90 & 46.81 & 4.53 & \textbf{52.81} & 22 & 25 \\
Qwen3.5 & \textbf{IP-M3ID (Ours)} & 49.92 & 45.71 & 9.88 & \underline{51.90} & 48 & 57 \\
Gemma-3 & Direct & 54.79 & 0.00 & 0.00 & \textbf{54.79} & 0 & 0 \\
Gemma-3 & Vanilla VCD & 53.71 & 100.00 & 0.24 & 53.75 & 1 & 0 \\
Gemma-3 & Vanilla M3ID & 54.46 & 0.00 & 0.00 & 54.46 & 0 & 0 \\
Gemma-3 & DoLa & 54.29 & 0.00 & 0.00 & 54.29 & 0 & 0 \\
Gemma-3 & DeCo & 54.58 & 0.00 & 0.00 & 54.58 & 0 & 0 \\
Gemma-3 & SID & 51.94 & 0.00 & 0.00 & 51.94 & 0 & 0 \\
Gemma-3 & CDA & 29.74 & 16.67 & 0.71 & 29.87 & 3 & 15 \\
Gemma-3 & \textbf{IP-VCD (Ours)} & 53.05 & 100.00 & 0.24 & 53.09 & 1 & 0 \\
Gemma-3 & \textbf{IP-M3ID (Ours)} & 54.62 & 0.00 & 0.00 & \underline{54.62} & 0 & 0 \\
OneVision & Direct & 30.57 & 0.00 & 0.00 & 30.57 & 0 & 0 \\
OneVision & Vanilla VCD & 35.02 & 0.00 & 0.00 & \underline{35.02} & 0 & 0 \\
OneVision & Vanilla M3ID & 32.05 & 0.00 & 0.00 & 32.05 & 0 & 0 \\
OneVision & DoLa & 18.94 & 0.00 & 0.00 & 18.94 & 0 & 0 \\
OneVision & DeCo & 31.27 & 0.00 & 0.00 & 31.27 & 0 & 0 \\
OneVision & SID & 33.75 & 0.00 & 0.00 & 33.75 & 0 & 0 \\
OneVision & CDA & 25.87 & 100.00 & 0.12 & 25.91 & 1 & 0 \\
OneVision & \textbf{IP-VCD (Ours)} & 36.39 & 75.00 & 1.11 & \textbf{36.76} & 9 & 3 \\
OneVision & \textbf{IP-M3ID (Ours)} & 30.32 & 41.46 & 2.09 & 31.02 & 17 & 24 \\
Phi-3.5-Vision & Direct & 35.73 & 0.00 & 0.00 & 35.73 & 0 & 0 \\
Phi-3.5-Vision & Vanilla VCD & 37.79 & 0.00 & 0.00 & \underline{37.79} & 0 & 0 \\
Phi-3.5-Vision & Vanilla M3ID & 35.56 & 0.00 & 0.00 & 35.56 & 0 & 0 \\
Phi-3.5-Vision & DoLa & 30.61 & 0.00 & 0.00 & 30.61 & 0 & 0 \\
Phi-3.5-Vision & DeCo & 29.08 & 0.00 & 0.00 & 29.08 & 0 & 0 \\
Phi-3.5-Vision & SID & 35.89 & 0.00 & 0.00 & 35.89 & 0 & 0 \\
Phi-3.5-Vision & CDA & 27.48 & 100.00 & 0.70 & 27.72 & 6 & 0 \\
Phi-3.5-Vision & \textbf{IP-VCD (Ours)} & 36.88 & 69.23 & 5.26 & \textbf{38.74} & 45 & 20 \\
Phi-3.5-Vision & \textbf{IP-M3ID (Ours)} & 33.50 & 48.31 & 5.02 & 35.27 & 43 & 46 \\
\bottomrule

\end{tabular}}
\caption{Full Food-101 results on nine models (part 2 of 2).
Each method is evaluated on 2,424 questions; rates are percentages and the two abstention columns are counts.}
\label{tab:foodfull2}
\end{table*}

\subsection{VizWiz}

\begin{table*}[tp]
\centering
\footnotesize
\setlength{\tabcolsep}{3.4pt}
\resizebox{0.98\textwidth}{!}{%
\begin{tabular}{llrrrrrrr}
\toprule
Model & Method & N & Answer score & Abstention precision & Abstention recall & Decision utility & Supported abstentions & Unsupported abstentions \\
\midrule
Qwen2.5-VL & Direct & 512 & 26.27 & 92.86 & 23.49 & 33.89 & 39 & 3 \\
Qwen2.5-VL & Vanilla VCD & 512 & 24.59 & 91.30 & 12.65 & 28.69 & 21 & 2 \\
Qwen2.5-VL & Vanilla M3ID & 512 & 26.00 & 93.10 & 16.27 & 31.27 & 27 & 2 \\
Qwen2.5-VL & DoLa & 512 & 20.29 & 86.49 & 19.28 & 26.54 & 32 & 5 \\
Qwen2.5-VL & DeCo & 512 & 23.07 & 87.10 & 16.27 & 28.34 & 27 & 4 \\
Qwen2.5-VL & CDA & 512 & 13.12 & 41.77 & 59.64 & 32.46 & 99 & 138 \\
Qwen2.5-VL & \textbf{IP-VCD (Ours)} & 512 & 14.00 & 52.19 & 78.92 & \textbf{39.59} & 131 & 120 \\
Qwen2.5-VL & \textbf{IP-M3ID (Ours)} & 512 & 11.89 & 53.20 & 80.12 & \underline{37.87} & 133 & 117 \\
Qwen2.5-VL & Direct & 505 & 26.46 & 92.68 & 23.17 & 33.98 & - & - \\
Qwen2.5-VL & Vanilla VCD & 505 & 24.55 & 91.30 & 12.80 & 28.71 & - & - \\
Qwen2.5-VL & Vanilla M3ID & 505 & 26.12 & 92.86 & 15.85 & 31.27 & 26 & 2 \\
Qwen2.5-VL & DoLa & 505 & 20.40 & 88.89 & 19.51 & 26.73 & - & - \\
Qwen2.5-VL & DeCo & 505 & 23.21 & 87.10 & 16.46 & 28.55 & - & - \\
Qwen2.5-VL & SID & 505 & 26.48 & 94.12 & 9.76 & 29.64 & - & - \\
Qwen2.5-VL & CDA & 505 & 13.31 & 42.24 & 59.76 & 32.71 & - & - \\
Qwen2.5-VL & IP-VCD & 505 & 14.20 & 52.65 & 78.66 & \textbf{39.74} & - & - \\
Qwen2.5-VL & IP-M3ID & 505 & 12.06 & 53.47 & 79.88 & 38.00 & - & - \\
LLaVA-Mistral & Direct & 512 & 19.38 & 85.71 & 25.30 & 27.58 & 42 & 7 \\
LLaVA-Mistral & Vanilla VCD & 512 & 16.35 & 83.33 & 6.02 & 18.30 & 10 & 2 \\
LLaVA-Mistral & Vanilla M3ID & 512 & 19.90 & 86.05 & 22.29 & 27.13 & 37 & 6 \\
LLaVA-Mistral & DoLa & 512 & 18.91 & 83.72 & 21.69 & 25.94 & 36 & 7 \\
LLaVA-Mistral & DeCo & 512 & 12.70 & 85.71 & 18.07 & 18.55 & 30 & 5 \\
LLaVA-Mistral & SID & 512 & 16.09 & 87.50 & 16.87 & 21.56 & 28 & 4 \\
LLaVA-Mistral & CDA & 512 & 15.47 & 51.43 & 32.53 & 26.02 & 54 & 51 \\
LLaVA-Mistral & \textbf{IP-VCD (Ours)} & 512 & 14.51 & 53.88 & 79.52 & \textbf{40.29} & 132 & 113 \\
LLaVA-Mistral & \textbf{IP-M3ID (Ours)} & 512 & 10.70 & 42.98 & 90.36 & \underline{40.00} & 150 & 199 \\
MiniCPM-V2.6 & Direct & 512 & 29.18 & 94.74 & 10.84 & 32.70 & 18 & 1 \\
MiniCPM-V2.6 & Vanilla VCD & 512 & 25.18 & 87.50 & 4.22 & 26.54 & 7 & 1 \\
MiniCPM-V2.6 & Vanilla M3ID & 512 & 29.63 & 92.86 & 7.83 & 32.17 & 13 & 1 \\
MiniCPM-V2.6 & DoLa & 512 & 22.36 & 72.73 & 9.64 & 25.49 & 16 & 6 \\
MiniCPM-V2.6 & DeCo & 512 & 29.36 & 95.83 & 13.86 & 33.85 & 23 & 1 \\
MiniCPM-V2.6 & CDA & 512 & 12.54 & 49.72 & 54.22 & 30.12 & 90 & 91 \\
MiniCPM-V2.6 & \textbf{IP-VCD (Ours)} & 512 & 26.52 & 64.04 & 68.67 & \underline{48.79} & 114 & 64 \\
MiniCPM-V2.6 & \textbf{IP-M3ID (Ours)} & 512 & 26.35 & 56.50 & 75.90 & \textbf{50.96} & 126 & 97 \\
InternVL3.5 & Direct & 512 & 24.39 & 94.74 & 10.84 & 27.91 & 18 & 1 \\
InternVL3.5 & Vanilla VCD & 512 & 22.64 & 92.86 & 7.83 & 25.18 & 13 & 1 \\
InternVL3.5 & Vanilla M3ID & 512 & 24.28 & 93.75 & 9.04 & 27.21 & 15 & 1 \\
InternVL3.5 & DoLa & 512 & 22.19 & 100.00 & 4.82 & 23.75 & 8 & 0 \\
InternVL3.5 & DeCo & 512 & 21.39 & 100.00 & 6.63 & 23.54 & 11 & 0 \\
InternVL3.5 & SID & 512 & 23.59 & 92.31 & 7.23 & 25.94 & 12 & 1 \\
InternVL3.5 & CDA & 512 & 19.96 & 68.49 & 30.12 & 29.73 & 50 & 23 \\
InternVL3.5 & \textbf{IP-VCD (Ours)} & 512 & 16.43 & 69.42 & 50.60 & \underline{32.83} & 84 & 37 \\
InternVL3.5 & \textbf{IP-M3ID (Ours)} & 512 & 17.13 & 69.85 & 57.23 & \textbf{35.68} & 95 & 41 \\
\bottomrule

\end{tabular}}
\caption{VizWiz results on nine models and the SID subset comparison (part 1 of 2).
$N$ is the number of evaluated questions; rates are percentages and the two abstention columns are counts.
For each model, results on the full 512 questions are listed first, followed by the applicable 505-question common subset; the subset construction is described in \S\ref{app:sid}.}
\label{tab:vizfull}
\end{table*}

\addtocounter{table}{-1}
\begin{table*}[tp]
\centering
\footnotesize
\setlength{\tabcolsep}{3.4pt}
\resizebox{0.98\textwidth}{!}{%
\begin{tabular}{llrrrrrrr}
\toprule
Model & Method & N & Answer score & Abstention precision & Abstention recall & Decision utility & Supported abstentions & Unsupported abstentions \\
\midrule
Qwen3-VL & Direct & 512 & 37.52 & 96.43 & 16.27 & 42.79 & 27 & 1 \\
Qwen3-VL & Vanilla VCD & 512 & 35.37 & 96.15 & 15.06 & 40.25 & 25 & 1 \\
Qwen3-VL & Vanilla M3ID & 512 & 37.52 & 96.30 & 15.66 & 42.60 & 26 & 1 \\
Qwen3-VL & DoLa & 512 & 36.23 & 83.87 & 15.66 & 41.31 & 26 & 5 \\
Qwen3-VL & DeCo & 512 & 37.17 & 92.59 & 15.06 & 42.05 & 25 & 2 \\
Qwen3-VL & CDA & 512 & 17.52 & 62.09 & 57.23 & 36.07 & 95 & 58 \\
Qwen3-VL & \textbf{IP-VCD (Ours)} & 512 & 27.21 & 66.86 & 70.48 & \underline{50.06} & 117 & 58 \\
Qwen3-VL & \textbf{IP-M3ID (Ours)} & 512 & 27.64 & 59.45 & 77.71 & \textbf{52.83} & 129 & 88 \\
Qwen3-VL & Direct & 505 & 37.74 & 96.43 & 16.46 & 43.09 & - & - \\
Qwen3-VL & Vanilla VCD & 505 & 35.60 & 96.15 & 15.24 & 40.55 & - & - \\
Qwen3-VL & Vanilla M3ID & 505 & 37.74 & 96.30 & 15.85 & 42.89 & 26 & 1 \\
Qwen3-VL & DoLa & 505 & 36.50 & 83.87 & 15.85 & 41.64 & - & - \\
Qwen3-VL & DeCo & 505 & 37.39 & 92.59 & 15.24 & 42.34 & - & - \\
Qwen3-VL & SID & 505 & 37.01 & 92.86 & 15.85 & 42.16 & - & - \\
Qwen3-VL & CDA & 505 & 17.70 & 62.91 & 57.93 & 36.51 & - & - \\
Qwen3-VL & IP-VCD & 505 & 27.39 & 66.86 & 70.12 & 50.16 & - & - \\
Qwen3-VL & IP-M3ID & 505 & 28.02 & 60.19 & 77.44 & \textbf{53.17} & - & - \\
Qwen3.5 & Direct & 512 & 37.15 & 100.00 & 0.60 & 37.34 & 1 & 0 \\
Qwen3.5 & Vanilla VCD & 512 & 32.19 & 100.00 & 3.01 & 33.16 & 5 & 0 \\
Qwen3.5 & Vanilla M3ID & 512 & 37.05 & 100.00 & 0.60 & 37.25 & 1 & 0 \\
Qwen3.5 & DoLa & 512 & 36.88 & 100.00 & 0.60 & 37.07 & 1 & 0 \\
Qwen3.5 & DeCo & 512 & 35.51 & 100.00 & 3.01 & 36.48 & 5 & 0 \\
Qwen3.5 & CDA & 512 & 22.21 & 49.02 & 15.06 & 27.09 & 25 & 26 \\
Qwen3.5 & \textbf{IP-VCD (Ours)} & 512 & 27.13 & 71.43 & 48.19 & \underline{42.75} & 80 & 32 \\
Qwen3.5 & \textbf{IP-M3ID (Ours)} & 512 & 30.06 & 66.94 & 50.00 & \textbf{46.27} & 83 & 41 \\
Gemma-3 & Direct & 512 & 14.39 & 62.50 & 3.01 & 15.37 & 5 & 3 \\
Gemma-3 & Vanilla VCD & 512 & 9.39 & 52.00 & 7.83 & 11.93 & 13 & 12 \\
Gemma-3 & Vanilla M3ID & 512 & 12.93 & 72.73 & 4.82 & 14.49 & 8 & 3 \\
Gemma-3 & DoLa & 512 & 8.96 & 84.62 & 6.63 & 11.11 & 11 & 2 \\
Gemma-3 & DeCo & 512 & 8.59 & 80.00 & 4.82 & 10.16 & 8 & 2 \\
Gemma-3 & SID & 512 & 12.64 & 66.67 & 4.82 & 14.20 & 8 & 4 \\
Gemma-3 & CDA & 512 & 4.75 & 53.17 & 40.36 & 17.83 & 67 & 59 \\
Gemma-3 & \textbf{IP-VCD (Ours)} & 512 & 11.66 & 65.66 & 39.16 & \textbf{24.36} & 65 & 34 \\
Gemma-3 & \textbf{IP-M3ID (Ours)} & 512 & 11.54 & 73.81 & 37.35 & \underline{23.65} & 62 & 22 \\
OneVision & Direct & 512 & 30.61 & 100.00 & 7.23 & 32.95 & 12 & 0 \\
OneVision & Vanilla VCD & 512 & 25.53 & 100.00 & 4.82 & 27.09 & 8 & 0 \\
OneVision & Vanilla M3ID & 512 & 30.08 & 100.00 & 10.24 & 33.40 & 17 & 0 \\
OneVision & DoLa & 512 & 17.40 & 70.59 & 7.23 & 19.75 & 12 & 5 \\
OneVision & DeCo & 512 & 29.65 & 100.00 & 4.82 & 31.21 & 8 & 0 \\
OneVision & SID & 512 & 26.37 & 100.00 & 8.43 & 29.10 & 14 & 0 \\
OneVision & CDA & 512 & 26.43 & 79.49 & 18.67 & 32.48 & 31 & 8 \\
OneVision & \textbf{IP-VCD (Ours)} & 512 & 25.84 & 76.70 & 47.59 & \textbf{41.27} & 79 & 24 \\
OneVision & \textbf{IP-M3ID (Ours)} & 512 & 26.80 & 72.45 & 42.77 & \underline{40.66} & 71 & 27 \\
Phi-3.5-Vision & Direct & 512 & 21.88 & 71.64 & 28.92 & 31.25 & 48 & 19 \\
Phi-3.5-Vision & Vanilla VCD & 512 & 17.68 & 73.81 & 18.67 & 23.73 & 31 & 11 \\
Phi-3.5-Vision & Vanilla M3ID & 512 & 21.88 & 71.43 & 21.08 & 28.71 & 35 & 14 \\
Phi-3.5-Vision & DoLa & 512 & 18.26 & 67.14 & 28.31 & 27.44 & 47 & 23 \\
Phi-3.5-Vision & DeCo & 512 & 16.97 & 72.46 & 30.12 & 26.74 & 50 & 19 \\
Phi-3.5-Vision & SID & 512 & 17.89 & 78.95 & 18.07 & 23.75 & 30 & 8 \\
Phi-3.5-Vision & CDA & 512 & 17.70 & 63.38 & 54.22 & 35.27 & 90 & 52 \\
Phi-3.5-Vision & \textbf{IP-VCD (Ours)} & 512 & 17.71 & 66.67 & 63.86 & \underline{38.42} & 106 & 53 \\
Phi-3.5-Vision & \textbf{IP-M3ID (Ours)} & 512 & 19.36 & 60.61 & 60.24 & \textbf{38.89} & 100 & 65 \\
\bottomrule

\end{tabular}}
\caption{VizWiz results on nine models and the SID subset comparison (part 2 of 2).
$N$ is the number of evaluated questions; rates are percentages and the two abstention columns are counts.}
\label{tab:vizfull2}
\end{table*}

\subsection{Gain Decomposition Relative to Vanilla M3ID}

\begin{table*}[tp]
\centering
\footnotesize
\setlength{\tabcolsep}{4pt}
\resizebox{0.85\textwidth}{!}{%
\begin{tabular}{lrrrrrr}
\toprule
Model & IP-VCD $\Delta$answer-score sum & IP-VCD $\Delta$TP & IP-VCD $\Delta$total credit & IP-M3ID $\Delta$answer-score sum & IP-M3ID $\Delta$TP & IP-M3ID $\Delta$total credit \\
\midrule
Qwen2.5-VL & -61.4 & 104 & 42.6 & -72.2 & 106 & 33.8 \\
LLaVA-Mistral & -27.6 & 95 & 67.4 & -47.1 & 113 & 65.9 \\
MiniCPM-V2.6 & -15.9 & 101 & 85.1 & -16.8 & 113 & 96.2 \\
InternVL3.5 & -40.2 & 69 & 28.8 & -36.6 & 80 & 43.4 \\
Qwen3-VL & -52.8 & 91 & 38.2 & -50.6 & 103 & 52.4 \\
Qwen3.5 & -50.8 & 79 & 28.2 & -35.8 & 82 & 46.2 \\
Gemma-3 & -6.5 & 57 & 50.5 & -7.1 & 54 & 46.9 \\
OneVision & -21.7 & 62 & 40.3 & -16.8 & 54 & 37.2 \\
Phi-3.5-Vision & -21.3 & 71 & 49.7 & -12.9 & 65 & 52.1 \\
\bottomrule

\end{tabular}}
\caption{Changes in the utility numerator of the two IP methods relative to vanilla M3ID on VizWiz.
The answer score is the sum of soft consensus scores over the 512 questions; the change in total credit equals the change in answer score plus the change in TP.
The DU difference (percentage points) is the change in total credit multiplied by 100/512.}
\label{tab:m3idnumerators}
\end{table*}

Changes in answer score use the sum of soft consensus scores; per-question response transitions on Food-101 are given in \S\ref{app:d3} and \S\ref{app:d4}.

\section{Candidate Mechanism and Response Changes}
\label{app:mechanism}

\subsection{Candidate Admission}
\label{app:d1}

The candidate analysis selects 101 images for each of the nine models and measures candidate scores before the first token is generated, 909 positions in total; 875 of them are used for pairwise comparisons between candidate tokens representing abstention and a specific answer.
Table~\ref{tab:eligibility} lists, per model, the number of abstention candidates that pass unguided filtering and guided filtering, and the number of pairs in which both candidates pass guided filtering.
For the 199 pairs in which both pass, we compare the score change of the abstention candidate relative to the answer candidate caused by appending the instruction, count the pairs in which this change is larger under the reference condition than under the original image, and compute the proportion with 199 as the denominator.
Abstention in complete responses is judged from the semantics of the response.

\begin{table}[htbp]
\centering
\footnotesize
\setlength{\tabcolsep}{3pt}
\resizebox{\columnwidth}{!}{%
\begin{tabular}{lccccc}
\toprule
Model & Proxy pairs & Unguided admission & Guided admission & Both admitted & Both admitted, $\Delta_r>\Delta_c$ \\
\midrule
Qwen2.5-VL & 96 & 0 & 84 & 52 & 44 \\
LLaVA-Mistral & 94 & 0 & 67 & 57 & 44 \\
MiniCPM-V2.6 & 94 & 0 & 20 & 12 & 7 \\
InternVL3.5 & 101 & 0 & 5 & 2 & 0 \\
Qwen3-VL & 101 & 0 & 39 & 16 & 13 \\
Qwen3.5 & 99 & 0 & 60 & 56 & 37 \\
Gemma-3 & 88 & 0 & 0 & 0 & 0 \\
OneVision & 101 & 0 & 1 & 1 & 0 \\
Phi-3.5-Vision & 101 & 0 & 6 & 3 & 2 \\
\bottomrule

\end{tabular}}
\caption{First-token proxy candidate measurements on nine models.
All columns are counts of candidate pairs.}
\label{tab:eligibility}
\end{table}

\subsection{Score Margins Under the Four Conditions}
\label{app:d2}

The beef carpaccio example fixes the image, generated prefix, and noise, and compares the scores of the abstention first token and the Toast first token.
The ``abstention minus answer'' margins of guided VCD and IP-VCD are $-0.1875$ and $0.421875$.
Equations~\eqref{eq:B.52} to~\eqref{eq:B.56} give the term-by-term substitution of the four condition scores and the analytic crossing point between the two combinations.

\subsection{Paired Changes on the Abstention-Labeled Set}
\label{app:d3}

The mechanism comparison uses one fixed instruction and compares, question by question, the complete responses of guided VCD and IP-VCD on all 949 abstention-labeled inputs of LLaVA-Mistral.
This comparison uses the same instruction with the UNKNOWN marker as \S\ref{sec:expsetup} and \S\ref{sec:ablation} of the main text, while the cross-task main results use the development-selected configuration.
Table~\ref{tab:transitions} lists the seven response transitions observed between the two methods, 949 questions in total; 893 questions keep the same behavior category, and 56 move among incorrect answer, correct answer, and abstention.
IP-VCD raises the number of abstentions on these abstention-labeled inputs from 33 to 66, and both methods answer 852 of all 2,424 questions correctly.

\begin{table}[htbp]
\centering
\footnotesize
\setlength{\tabcolsep}{6pt}
\begin{tabular}{llr}
\toprule
Guided VCD & IP-VCD & Count \\
\midrule
Supported abstention & Supported abstention & 29 \\
Supported abstention & Incorrect answer & 4 \\
Correct answer & Correct answer & 33 \\
Correct answer & Incorrect answer & 9 \\
Incorrect answer & Supported abstention & 37 \\
Incorrect answer & Correct answer & 6 \\
Incorrect answer & Incorrect answer & 831 \\
\bottomrule

\end{tabular}
\caption{Full paired transitions on the 949 inputs.
The first two columns are the response categories of guided VCD and IP-VCD.}
\label{tab:transitions}
\end{table}

\subsection{Correct Answers and Response Transitions on the Abstention-Labeled Set Under the Main-Result Configuration}
\label{app:d4}

On the abstention-labeled set of each model, we pair vanilla VCD and the IP configuration used in the main results question by question.
The set is determined by the frozen abstention labels of the same model, and both methods use the same inputs.
Table~\ref{tab:refposcounts} gives the numbers of correct answers and supported abstentions, and Table~\ref{tab:refpostransitions} lists all nine response transitions.
Both IP methods cover all nine models.

These paired records show two kinds of change: incorrect answers turned into correct answers, and incorrect answers turned into supported abstentions.
For example, IP-VCD on InternVL3.5 turns 43 incorrect answers into correct answers, raising the number of correct answers on the abstention-labeled set from 8 to 44; IP-VCD on LLaVA-Mistral turns 556 incorrect answers into supported abstentions.
The net number of correct answers is determined by transitions in all directions; for LLaVA-Mistral, correct answers on this set fall from 42 to 18.

\begin{table}[htbp]
\centering
\footnotesize
\setlength{\tabcolsep}{3.6pt}
\resizebox{\columnwidth}{!}{%
\begin{tabular}{llrrrrrr}
\toprule
Model & IP method & Abstention-labeled N & Vanilla VCD correct & IP correct & Change in correct & Vanilla VCD abstain & IP abstain \\
\midrule
Qwen2.5-VL & IP-VCD & 383 & 3 & 4 & +1 & 0 & 214 \\
Qwen2.5-VL & IP-M3ID & 383 & 3 & 5 & +2 & 0 & 143 \\
LLaVA-Mistral & IP-VCD & 949 & 42 & 18 & -24 & 0 & 584 \\
LLaVA-Mistral & IP-M3ID & 949 & 42 & 5 & -37 & 0 & 735 \\
MiniCPM-V2.6 & IP-VCD & 597 & 8 & 6 & -2 & 1 & 251 \\
MiniCPM-V2.6 & IP-M3ID & 597 & 8 & 4 & -4 & 1 & 335 \\
InternVL3.5 & IP-VCD & 1075 & 8 & 44 & +36 & 1 & 33 \\
InternVL3.5 & IP-M3ID & 1075 & 8 & 29 & +21 & 1 & 47 \\
Qwen3-VL & IP-VCD & 364 & 4 & 0 & -4 & 0 & 170 \\
Qwen3-VL & IP-M3ID & 364 & 4 & 1 & -3 & 0 & 150 \\
Qwen3.5 & IP-VCD & 486 & 16 & 25 & +9 & 0 & 22 \\
Qwen3.5 & IP-M3ID & 486 & 16 & 9 & -7 & 0 & 48 \\
Gemma-3 & IP-VCD & 422 & 14 & 9 & -5 & 1 & 1 \\
Gemma-3 & IP-M3ID & 422 & 14 & 6 & -8 & 1 & 0 \\
OneVision & IP-VCD & 814 & 36 & 32 & -4 & 0 & 9 \\
OneVision & IP-M3ID & 814 & 36 & 7 & -29 & 0 & 17 \\
Phi-3.5-Vision & IP-VCD & 856 & 49 & 33 & -16 & 0 & 45 \\
Phi-3.5-Vision & IP-M3ID & 856 & 49 & 20 & -29 & 0 & 43 \\
\bottomrule

\end{tabular}}
\caption{Answers and abstentions on the abstention-labeled set.
Units are questions; the change in correct answers is IP minus vanilla VCD.}
\label{tab:refposcounts}
\end{table}

\begin{table*}[tp]
\centering
\footnotesize
\setlength{\tabcolsep}{4.4pt}
\resizebox{\textwidth}{!}{%
\begin{tabular}{llccccccccc}
\toprule
Model & IP method & $C\rightarrow C$ & $C\rightarrow W$ & $C\rightarrow A$ & $W\rightarrow C$ & $W\rightarrow W$ & $W\rightarrow A$ & $A\rightarrow C$ & $A\rightarrow W$ & $A\rightarrow A$ \\
\midrule
Qwen2.5-VL & IP-VCD & 2 & 0 & 1 & 2 & 165 & 213 & 0 & 0 & 0 \\
Qwen2.5-VL & IP-M3ID & 0 & 2 & 1 & 5 & 233 & 142 & 0 & 0 & 0 \\
LLaVA-Mistral & IP-VCD & 10 & 4 & 28 & 8 & 343 & 556 & 0 & 0 & 0 \\
LLaVA-Mistral & IP-M3ID & 4 & 6 & 32 & 1 & 203 & 703 & 0 & 0 & 0 \\
MiniCPM-V2.6 & IP-VCD & 4 & 2 & 2 & 2 & 338 & 248 & 0 & 0 & 1 \\
MiniCPM-V2.6 & IP-M3ID & 1 & 2 & 5 & 3 & 256 & 329 & 0 & 0 & 1 \\
InternVL3.5 & IP-VCD & 1 & 7 & 0 & 43 & 991 & 32 & 0 & 0 & 1 \\
InternVL3.5 & IP-M3ID & 1 & 7 & 0 & 28 & 992 & 46 & 0 & 0 & 1 \\
Qwen3-VL & IP-VCD & 0 & 2 & 2 & 0 & 192 & 168 & 0 & 0 & 0 \\
Qwen3-VL & IP-M3ID & 0 & 3 & 1 & 1 & 210 & 149 & 0 & 0 & 0 \\
Qwen3.5 & IP-VCD & 11 & 5 & 0 & 14 & 434 & 22 & 0 & 0 & 0 \\
Qwen3.5 & IP-M3ID & 3 & 10 & 3 & 6 & 419 & 45 & 0 & 0 & 0 \\
Gemma-3 & IP-VCD & 4 & 10 & 0 & 5 & 402 & 0 & 0 & 0 & 1 \\
Gemma-3 & IP-M3ID & 2 & 12 & 0 & 4 & 403 & 0 & 0 & 1 & 0 \\
OneVision & IP-VCD & 20 & 16 & 0 & 12 & 757 & 9 & 0 & 0 & 0 \\
OneVision & IP-M3ID & 3 & 33 & 0 & 4 & 757 & 17 & 0 & 0 & 0 \\
Phi-3.5-Vision & IP-VCD & 25 & 24 & 0 & 8 & 754 & 45 & 0 & 0 & 0 \\
Phi-3.5-Vision & IP-M3ID & 9 & 38 & 2 & 11 & 755 & 41 & 0 & 0 & 0 \\
\bottomrule

\end{tabular}}
\caption{Full response transitions on the abstention-labeled set.
C, W, and A denote correct answer, incorrect answer, and supported abstention; arrows point from vanilla VCD to the corresponding IP method.
Units are questions, and the nine cells of each row sum to the number of abstention-labeled inputs in Table~\ref{tab:refposcounts}.}
\label{tab:refpostransitions}
\end{table*}

This section uses the main-result configuration; Table~\ref{tab:transitions} is the mechanism comparison under the same UNKNOWN instruction.

These transitions are counted over model-question paired records: the same image can appear for several models, and the abstention-labeled set is defined separately for each model.

The net number of correct answers is determined by transitions in all directions.
Table~\ref{tab:refpostransitions} lists corrections, incorrect-to-abstention transitions, and reverse transitions together.

Within the abstention-labeled set $H$, correct answers and supported abstentions both score 1, and incorrect answers score 0.
Let $n_{X\to Y}$ be the number of transitions from state $X$ under vanilla VCD to state $Y$ under IP.
The contribution of this set to the change in full-task utility is $(n_{W\to C}+n_{W\to A}-n_{C\to W}-n_{A\to W})/N$, where $N=2{,}424$ is the number of questions in the model's full evaluation and the transition counts come from its abstention-labeled set.
Switching between a correct answer and a supported abstention changes answer coverage but keeps this part of the decision credit.
The abstention labels describe the knowledge gaps the base model shows under class ranking and independent attempts; correct responses after decoding can still appear within them, so correction and abstention are two observable paths to gains on the same set.

\section{Component Comparisons and Baseline Implementations}
\label{app:components}

The guided variants serve as mechanism ablations.
The component comparison uses the instruction configuration already fixed for each model and its IP method, and separates two kinds of behavioral change through the numbers of correct answers and supported abstentions.
Guided Direct generates the response directly from the original image with the task and abstention instructions.
Guided VCD uses the instruction responses of both branches in the contrast; the VCD original-only variant amplifies the instruction effect of the original-image branch as the contrast strength grows; IP-VCD keeps the instruction increment on the original image at its original size.

For M3ID, moving from the guided or original-only form to IP-M3ID also changes the distribution on which the confidence gate depends and the text length on which the position weight depends.
The per-model results therefore reflect generation behavior after the instruction response, the gate, and the position weight change together.

\begin{table*}[tp]
\centering
\footnotesize
\setlength{\tabcolsep}{4.6pt}
\resizebox{0.96\textwidth}{!}{%
\begin{tabular}{llrrrrr}
\toprule
Model & Component variant & Correct answers & Supported abstentions & All abstentions & Unsupported abstentions & Decision utility (\%) \\
\midrule
Qwen2.5-VL & Guided VCD & 920 & 192 & 593 & 401 & 45.87 \\
Qwen2.5-VL & VCD original-only variant & 59 & 383 & 2327 & 1944 & 18.23 \\
Qwen2.5-VL & IP-VCD & 853 & 214 & 732 & 518 & 44.02 \\
Qwen2.5-VL & Guided M3ID & 883 & 254 & 962 & 708 & 46.91 \\
Qwen2.5-VL & M3ID original-only variant & 883 & 254 & 962 & 708 & 46.91 \\
Qwen2.5-VL & IP-M3ID & 1022 & 143 & 565 & 422 & 48.06 \\
LLaVA-Mistral & Guided VCD & 711 & 570 & 1054 & 484 & 52.85 \\
LLaVA-Mistral & VCD original-only variant & 33 & 939 & 2373 & 1434 & 40.10 \\
LLaVA-Mistral & IP-VCD & 688 & 584 & 1101 & 517 & 52.48 \\
LLaVA-Mistral & Guided M3ID & 228 & 887 & 2068 & 1181 & 46.00 \\
LLaVA-Mistral & M3ID original-only variant & 227 & 887 & 2068 & 1181 & 45.96 \\
LLaVA-Mistral & IP-M3ID & 329 & 735 & 1738 & 1003 & 43.89 \\
MiniCPM-V2.6 & Guided VCD & 873 & 240 & 506 & 266 & 45.92 \\
MiniCPM-V2.6 & VCD original-only variant & 548 & 465 & 1453 & 988 & 41.79 \\
MiniCPM-V2.6 & IP-VCD & 903 & 251 & 537 & 286 & 47.61 \\
MiniCPM-V2.6 & Guided M3ID & 743 & 361 & 958 & 597 & 45.54 \\
MiniCPM-V2.6 & M3ID original-only variant & 721 & 385 & 1055 & 670 & 45.63 \\
MiniCPM-V2.6 & IP-M3ID & 758 & 335 & 876 & 541 & 45.09 \\
InternVL3.5 & Guided VCD & 654 & 26 & 36 & 10 & 28.05 \\
InternVL3.5 & VCD original-only variant & 645 & 238 & 381 & 143 & 36.43 \\
InternVL3.5 & IP-VCD & 672 & 33 & 40 & 7 & 29.08 \\
InternVL3.5 & Guided M3ID & 647 & 33 & 38 & 5 & 28.05 \\
InternVL3.5 & M3ID original-only variant & 647 & 40 & 53 & 13 & 28.34 \\
InternVL3.5 & IP-M3ID & 666 & 47 & 55 & 8 & 29.41 \\
Qwen3-VL & Guided VCD & 1306 & 147 & 376 & 229 & 59.94 \\
Qwen3-VL & VCD original-only variant & 1023 & 266 & 995 & 729 & 53.18 \\
Qwen3-VL & IP-VCD & 1285 & 170 & 462 & 292 & 60.02 \\
Qwen3-VL & Guided M3ID & 1200 & 156 & 429 & 273 & 55.94 \\
Qwen3-VL & M3ID original-only variant & 1198 & 158 & 444 & 286 & 55.94 \\
Qwen3-VL & IP-M3ID & 1208 & 150 & 397 & 247 & 56.02 \\
Qwen3.5 & Guided VCD & 1249 & 13 & 18 & 5 & 52.06 \\
Qwen3.5 & VCD original-only variant & 1167 & 168 & 351 & 183 & 55.07 \\
Qwen3.5 & IP-VCD & 1258 & 22 & 47 & 25 & 52.81 \\
Qwen3.5 & Guided M3ID & 1188 & 64 & 142 & 78 & 51.65 \\
Qwen3.5 & M3ID original-only variant & 980 & 326 & 855 & 529 & 53.88 \\
Qwen3.5 & IP-M3ID & 1210 & 48 & 105 & 57 & 51.90 \\
Gemma-3 & Guided VCD & 1286 & 1 & 2 & 1 & 53.09 \\
Gemma-3 & VCD original-only variant & 1279 & 1 & 3 & 2 & 52.81 \\
Gemma-3 & IP-VCD & 1286 & 1 & 1 & 0 & 53.09 \\
Gemma-3 & Guided M3ID & 1322 & 0 & 0 & 0 & 54.54 \\
Gemma-3 & M3ID original-only variant & 1322 & 0 & 0 & 0 & 54.54 \\
Gemma-3 & IP-M3ID & 1324 & 0 & 0 & 0 & 54.62 \\
OneVision & Guided VCD & 850 & 5 & 6 & 1 & 35.27 \\
OneVision & VCD original-only variant & 804 & 116 & 202 & 86 & 37.95 \\
OneVision & IP-VCD & 882 & 9 & 12 & 3 & 36.76 \\
OneVision & Guided M3ID & 742 & 13 & 30 & 17 & 31.15 \\
OneVision & M3ID original-only variant & 717 & 117 & 254 & 137 & 34.41 \\
OneVision & IP-M3ID & 735 & 17 & 41 & 24 & 31.02 \\
Phi-3.5-Vision & Guided VCD & 891 & 38 & 52 & 14 & 38.33 \\
Phi-3.5-Vision & VCD original-only variant & 746 & 314 & 650 & 336 & 43.73 \\
Phi-3.5-Vision & IP-VCD & 894 & 45 & 65 & 20 & 38.74 \\
Phi-3.5-Vision & Guided M3ID & 818 & 51 & 91 & 40 & 35.85 \\
Phi-3.5-Vision & M3ID original-only variant & 822 & 53 & 93 & 40 & 36.10 \\
Phi-3.5-Vision & IP-M3ID & 812 & 43 & 89 & 46 & 35.27 \\
\bottomrule

\end{tabular}}
\caption{Behavioral composition of the VCD and M3ID component comparisons.
For each model, the VCD and M3ID components use the development-selected instruction configuration of the corresponding IP method, and each compares guided decoding, the original-only variant, and IP decoding; every construction is evaluated on 2,424 questions.
Correct answers and supported abstentions are counts, and DU is their sum divided by 2,424, in percent.}
\label{tab:vcdcomp}
\end{table*}

The VCD original-only variant amplifies the instruction increment of the original-image condition by a factor of $1+\alpha$ (Eq.~\eqref{eq:B.18}; $\alpha=1$ in this table).
On all nine models, it abstains more often than IP-VCD: 2,327 versus 732 for Qwen2.5-VL and 2,373 versus 1,101 for LLaVA-Mistral, with 1,944 versus 518 and 1,434 versus 517 unsupported abstentions.
This over-abstention lowers correct answers on these two models.
On InternVL3.5, Qwen3.5, OneVision, and Phi-3.5-Vision, the original-only variant reaches a higher DU than IP-VCD, showing that stronger abstention can also earn a higher utility on some abstention-labeled sets.
Instruction preservation sets the coefficient of the original-image instruction increment to 1; the final number of abstentions is also affected by the visual contrast term, the candidate support, and the subsequent generated prefix.

In Table~\ref{tab:vcdcomp}, IP-M3ID exceeds guided M3ID on five models and the M3ID original-only variant on four.
On LLaVA-Mistral, relative to guided M3ID, IP-M3ID answers 101 more questions correctly and makes 152 fewer supported abstentions, a net loss of 51 successful decisions.
On Qwen3.5, relative to the M3ID original-only variant, it answers 230 more questions correctly and makes 278 fewer supported abstentions, a net loss of 48 successful decisions.
The gate, the position weight, and the instruction effect jointly shape this trade-off; the coefficient fixed by instruction preservation and the coefficient that maximizes each model's final utility are two separate questions.

\begin{table}[htbp]
\centering
\footnotesize
\setlength{\tabcolsep}{3pt}
\resizebox{\columnwidth}{!}{%
\begin{tabular}{llrrrr}
\toprule
Model & Method & Overall accuracy & Abstention precision & Abstention recall & Decision utility \\
\midrule
Qwen2.5-VL & Guided Direct & 28.55 & 23.68 & 81.98 & 41.50 \\
Qwen2.5-VL & IP-VCD & 35.19 & 29.23 & 55.87 & 44.02 \\
LLaVA-Mistral & Guided Direct & 8.75 & 42.69 & 94.73 & 45.83 \\
LLaVA-Mistral & IP-VCD & 28.38 & 53.04 & 61.54 & 52.48 \\
MiniCPM-V2.6 & Guided Direct & 30.24 & 37.29 & 62.65 & 45.67 \\
MiniCPM-V2.6 & IP-VCD & 37.25 & 46.74 & 42.04 & 47.61 \\
Qwen3.5 & Guided Direct & 53.38 & 56.76 & 8.64 & 55.12 \\
Qwen3.5 & IP-VCD & 51.90 & 46.81 & 4.53 & 52.81 \\
Gemma-3 & Guided Direct & 54.21 & 100.00 & 0.24 & 54.25 \\
Gemma-3 & IP-VCD & 53.05 & 100.00 & 0.24 & 53.09 \\
\bottomrule

\end{tabular}}
\caption{Guided Direct and IP-VCD under the same instruction.
Five models; all four metrics are percentages.}
\label{tab:guideddirect}
\end{table}

\begin{table}[htbp]
\centering
\footnotesize
\setlength{\tabcolsep}{3pt}
\resizebox{\columnwidth}{!}{%
\begin{tabular}{llrrrr}
\toprule
Model & Abstention marker & Overall accuracy & Abstention precision & Abstention recall & DU \\
\midrule
\textbf{Qwen2.5-VL} & UNKNOWN & 25.12 & 22.78 & 84.33 & 38.45 \\
 & UNCLEAR & 39.65 & 28.84 & 51.44 & 47.77 \\
 & UNSURE & 28.55 & 23.68 & 81.98 & 41.50 \\
 & I cannot identify it & 35.81 & 26.26 & 66.84 & 46.37 \\
\textbf{LLaVA-Mistral} & UNKNOWN & 28.42 & 59.54 & 30.24 & 40.26 \\
 & UNCLEAR & 17.37 & 47.55 & 73.55 & 46.16 \\
 & UNSURE & 17.49 & 46.51 & 74.50 & 46.66 \\
 & I cannot identify it & 8.75 & 42.69 & 94.73 & 45.83 \\
\textbf{MiniCPM-V2.6} & UNKNOWN & 35.93 & 49.50 & 16.58 & 40.02 \\
 & UNCLEAR & 30.24 & 37.29 & 62.65 & 45.67 \\
 & UNSURE & 34.78 & 43.40 & 47.40 & 46.45 \\
 & I cannot identify it & 35.48 & 40.61 & 35.85 & 44.31 \\
\textbf{Qwen3.5} & UNKNOWN & 46.33 & 45.78 & 46.91 & 55.73 \\
 & UNCLEAR & 41.42 & 36.76 & 60.29 & 53.51 \\
 & UNSURE & 45.59 & 41.96 & 49.38 & 55.49 \\
 & I cannot identify it & 53.38 & 56.76 & 8.64 & 55.12 \\
\textbf{Gemma-3} & UNKNOWN & 54.66 & 0.00 & 0.00 & 54.66 \\
 & UNCLEAR & 54.21 & 100.00 & 0.24 & 54.25 \\
 & UNSURE & 54.37 & 100.00 & 0.24 & 54.41 \\
 & I cannot identify it & 54.21 & 100.00 & 0.24 & 54.25 \\
\bottomrule

\end{tabular}}
\caption{Guided Direct with four abstention wordings on five models.
Each wording is evaluated on the same 2,424 questions; the four metrics are percentages.}
\label{tab:phrasings}
\end{table}

Guided Direct has been run on these five models with four single-instruction conditions: UNKNOWN, UNCLEAR, UNSURE, and I cannot identify it.
Direct uses only the original-image prediction and has no reference branch, so there are 4 conditions per model.
The same-instruction comparison matched to the frozen IP configuration is given in Table~\ref{tab:guideddirect}.

When there is no abstention, abstention precision is shown as 0.00.

The four wordings change how the model trades off answering and abstention.
The DU ranges of guided Direct for Qwen2.5-VL, LLaVA-Mistral, MiniCPM-V2.6, Qwen3.5, and Gemma-3 are 9.32, 6.40, 6.43, 2.22, and 0.41 percentage points (computed from the utilities shown in the table with two decimals; from unrounded utilities they are 9.32, 6.39, 6.44, 2.23, and 0.41).
When LLaVA-Mistral switches from UNKNOWN to I cannot identify it, abstention recall rises from 30.24\% to 94.73\%, while the share of correct answers drops from 28.42\% to 8.75\%.
These changes show that adding or changing the abstention instruction alone is enough to change the final decisions, so Table~\ref{tab:guideddirect} uses guided Direct with the same instruction as the comparison, to separate the effect of the prompt from that of the decoding construction.

In the same-instruction comparison, IP-VCD raises DU on the first three models by 2.52, 6.65, and 1.94 percentage points (from the displayed values; computed from raw counts, the LLaVA-Mistral gain is 161/2,424, or 6.64 percentage points), and lowers it on Qwen3.5 and Gemma-3 by 2.31 and 1.16 percentage points.
The same decoding construction yields different net gains on different models, and the four-wording results also show that the models respond to instructions differently.
Gemma-3 abstains 0 or 1 time under each of the four guided Direct conditions; its 100\% abstention precision corresponds to a single abstention, and its recall below 0.25\% reflects limited coverage of abstention-labeled inputs.
The four-wording table shows this sensitivity, and the same-instruction comparison and the main results each use their own frozen configurations.

\begin{table}[htbp]
\centering
\footnotesize
\setlength{\tabcolsep}{3pt}
\resizebox{\columnwidth}{!}{%
\begin{tabular}{lcccccc}
\toprule
Model & Direct & Vanilla VCD & Guided Direct & Guided VCD & IP-VCD & IP-M3ID \\
\midrule
Qwen2.5-VL & 30.40 & 28.34 & 38.45 & 43.44 & 42.08 & 40.92 \\
LLaVA-Mistral & 34.16 & 38.24 & 40.26 & 36.51 & 37.87 & 40.18 \\
MiniCPM-V2.6 & 43.15 & 43.65 & 40.02 & 39.23 & 41.75 & 39.52 \\
InternVL3.5 & 17.08 & 19.14 & 27.15 & 27.15 & 27.89 & 26.28 \\
Qwen3-VL & 53.96 & 56.97 & 55.53 & 58.00 & 58.37 & 55.53 \\
Qwen3.5 & 51.32 & 51.90 & 55.73 & 50.25 & 49.63 & 51.90 \\
Gemma-3 & 54.79 & 53.75 & 54.66 & 52.68 & 52.76 & 54.62 \\
OneVision & 30.57 & 35.02 & 25.87 & 29.46 & 30.94 & 27.06 \\
Phi-3.5-Vision & 35.73 & 37.79 & 29.00 & 32.34 & 34.74 & 29.58 \\
\bottomrule

\end{tabular}}
\caption{Overview of the nine-model ablation under the unified UNKNOWN instruction.
Each method is evaluated on the same 2,424 questions, and values are DU (\%).
Direct and Vanilla VCD use the base instruction; the other four columns all use the same instruction with the UNKNOWN marker.
This table covers all nine models for Table 2 of the main text.}
\label{tab:ablationfull}
\end{table}

\begin{table*}[tp]
\centering
\footnotesize
\setlength{\tabcolsep}{4.4pt}
\resizebox{\textwidth}{!}{%
\begin{tabular}{llrrrrrrrrrr}
\toprule
Model & Method & $N_+$ & C & W & A & TP & FP & Overall accuracy & Abstention precision & Abstention recall & DU \\
\midrule
Qwen2.5-VL & Guided Direct & 383 & 609 & 397 & 1418 & 323 & 1095 & 25.12 & 22.78 & 84.33 & 38.45 \\
Qwen2.5-VL & Guided VCD & 383 & 809 & 846 & 769 & 244 & 525 & 33.37 & 31.73 & 63.71 & 43.44 \\
Qwen2.5-VL & IP-VCD & 383 & 769 & 797 & 858 & 251 & 607 & 31.72 & 29.25 & 65.54 & 42.08 \\
Qwen2.5-VL & IP-M3ID & 383 & 712 & 584 & 1128 & 280 & 848 & 29.37 & 24.82 & 73.11 & 40.92 \\
LLaVA-Mistral & Guided Direct & 949 & 689 & 1253 & 482 & 287 & 195 & 28.42 & 59.54 & 30.24 & 40.26 \\
LLaVA-Mistral & Guided VCD & 949 & 852 & 1527 & 45 & 33 & 12 & 35.15 & 73.33 & 3.48 & 36.51 \\
LLaVA-Mistral & IP-VCD & 949 & 852 & 1477 & 95 & 66 & 29 & 35.15 & 69.47 & 6.95 & 37.87 \\
LLaVA-Mistral & IP-M3ID & 949 & 735 & 1287 & 402 & 239 & 163 & 30.32 & 59.45 & 25.18 & 40.18 \\
MiniCPM-V2.6 & Guided Direct & 597 & 871 & 1353 & 200 & 99 & 101 & 35.93 & 49.50 & 16.58 & 40.02 \\
MiniCPM-V2.6 & Guided VCD & 597 & 925 & 1457 & 42 & 26 & 16 & 38.16 & 61.90 & 4.36 & 39.23 \\
MiniCPM-V2.6 & IP-VCD & 597 & 988 & 1402 & 34 & 24 & 10 & 40.76 & 70.59 & 4.02 & 41.75 \\
MiniCPM-V2.6 & IP-M3ID & 597 & 890 & 1408 & 126 & 68 & 58 & 36.72 & 53.97 & 11.39 & 39.52 \\
InternVL3.5 & Guided Direct & 1075 & 625 & 1762 & 37 & 33 & 4 & 25.78 & 89.19 & 3.07 & 27.15 \\
InternVL3.5 & Guided VCD & 1075 & 640 & 1761 & 23 & 18 & 5 & 26.40 & 78.26 & 1.67 & 27.15 \\
InternVL3.5 & IP-VCD & 1075 & 648 & 1743 & 33 & 28 & 5 & 26.73 & 84.85 & 2.60 & 27.89 \\
InternVL3.5 & IP-M3ID & 1075 & 615 & 1786 & 23 & 22 & 1 & 25.37 & 95.65 & 2.05 & 26.28 \\
Qwen3-VL & Guided Direct & 364 & 1158 & 749 & 517 & 188 & 329 & 47.77 & 36.36 & 51.65 & 55.53 \\
Qwen3-VL & Guided VCD & 364 & 1272 & 809 & 343 & 134 & 209 & 52.48 & 39.07 & 36.81 & 58.00 \\
Qwen3-VL & IP-VCD & 364 & 1278 & 793 & 353 & 137 & 216 & 52.72 & 38.81 & 37.64 & 58.37 \\
Qwen3-VL & IP-M3ID & 364 & 1163 & 773 & 488 & 183 & 305 & 47.98 & 37.50 & 50.27 & 55.53 \\
Qwen3.5 & Guided Direct & 486 & 1123 & 803 & 498 & 228 & 270 & 46.33 & 45.78 & 46.91 & 55.73 \\
Qwen3.5 & Guided VCD & 486 & 1176 & 1161 & 87 & 42 & 45 & 48.51 & 48.28 & 8.64 & 50.25 \\
Qwen3.5 & IP-VCD & 486 & 1142 & 1161 & 121 & 61 & 60 & 47.11 & 50.41 & 12.55 & 49.63 \\
Qwen3.5 & IP-M3ID & 486 & 1210 & 1109 & 105 & 48 & 57 & 49.92 & 45.71 & 9.88 & 51.90 \\
Gemma-3 & Guided Direct & 422 & 1325 & 1099 & 0 & 0 & 0 & 54.66 & 0.00 & 0.00 & 54.66 \\
Gemma-3 & Guided VCD & 422 & 1277 & 1146 & 1 & 0 & 1 & 52.68 & 0.00 & 0.00 & 52.68 \\
Gemma-3 & IP-VCD & 422 & 1278 & 1145 & 1 & 1 & 0 & 52.72 & 100.00 & 0.24 & 52.76 \\
Gemma-3 & IP-M3ID & 422 & 1324 & 1100 & 0 & 0 & 0 & 54.62 & 0.00 & 0.00 & 54.62 \\
OneVision & Guided Direct & 814 & 626 & 1797 & 1 & 1 & 0 & 25.83 & 100.00 & 0.12 & 25.87 \\
OneVision & Guided VCD & 814 & 713 & 1710 & 1 & 1 & 0 & 29.41 & 100.00 & 0.12 & 29.46 \\
OneVision & IP-VCD & 814 & 749 & 1674 & 1 & 1 & 0 & 30.90 & 100.00 & 0.12 & 30.94 \\
OneVision & IP-M3ID & 814 & 655 & 1768 & 1 & 1 & 0 & 27.02 & 100.00 & 0.12 & 27.06 \\
Phi-3.5-Vision & Guided Direct & 856 & 687 & 1715 & 22 & 16 & 6 & 28.34 & 72.73 & 1.87 & 29.00 \\
Phi-3.5-Vision & Guided VCD & 856 & 774 & 1640 & 10 & 10 & 0 & 31.93 & 100.00 & 1.17 & 32.34 \\
Phi-3.5-Vision & IP-VCD & 856 & 831 & 1576 & 17 & 11 & 6 & 34.28 & 64.71 & 1.29 & 34.74 \\
Phi-3.5-Vision & IP-M3ID & 856 & 705 & 1705 & 14 & 12 & 2 & 29.08 & 85.71 & 1.40 & 29.58 \\
\bottomrule

\end{tabular}}
\caption{Full counts and metrics of the four guided constructions under the unified UNKNOWN instruction.
Each row has a denominator of 2,424; $N_+$ is the number of abstention-labeled inputs of the same model.
C, W, and A are correct answers, incorrect answers, and all abstentions; TP and FP are supported and unsupported abstentions.
The last four columns are percentages, and precision is shown as 0.00 when there is no abstention, following the convention used throughout.}
\label{tab:ablationcounts}
\end{table*}

Under the unified UNKNOWN condition, IP-VCD reaches a higher DU than guided Direct on six models, guided VCD exceeds guided Direct on four models, and IP-VCD exceeds guided VCD on seven models.
Under the development-selected configuration, IP-VCD is not lower than guided VCD on seven models, strictly higher on six and tied on Gemma-3 (Table~\ref{tab:vcdcomp}).
Among the five representative models of the main text, the best IP method ranks within the top two among the four guided constructions; this ranking covers only the four guided constructions, with vanilla Direct and VCD listed separately in Table~\ref{tab:ablationfull}.

\begin{table}[htbp]
\centering
\footnotesize
\setlength{\tabcolsep}{3pt}
\resizebox{\columnwidth}{!}{%
\begin{tabular}{llrrr}
\toprule
Model & Setting & Guided Direct precision & Guided VCD DU & IP-VCD DU \\
\midrule
Qwen2.5-VL & UNKNOWN & 22.78 & 43.44 & 42.08 \\
Qwen3.5 & UNKNOWN & 45.78 & 50.25 & 49.63 \\
Qwen2.5-VL & Development-selected & 23.68 & 45.87 & 44.02 \\
LLaVA-Mistral & Development-selected & 42.69 & 52.85 & 52.48 \\
\bottomrule

\end{tabular}}
\caption{The four model and instruction settings in which guided VCD exceeds IP-VCD.
Each of the two settings covers nine models; in all four cases, the abstention precision of guided Direct is below 50\%.
DU and precision are percentages.}
\label{tab:gvcdleads}
\end{table}

Low precision means that most abstentions of guided Direct fall on inputs not labeled for abstention; the visual contrast term applied on top of the guided prediction can then raise utility by restoring correct answers.
This result gives the behavioral background under which the reference-side instruction can help; which per-question changes each branch causes is determined from the complete paired responses.
Under the unified UNKNOWN instruction, guided Direct, guided VCD, and both IP methods on MiniCPM-V2.6 all fall below vanilla VCD, which shows that the effect of the instruction itself also enters the final decisions.

The aggregate differences under the unified UNKNOWN instruction further separate correct answers from supported abstentions.
On LLaVA-Mistral, IP-M3ID answers 46 more questions correctly than guided Direct and makes 48 fewer supported abstentions, a net loss of 2 successful decisions and 0.08 percentage points of utility.
On Gemma-3, neither abstains, and IP-M3ID answers 1 fewer question correctly, 0.04 percentage points lower in utility.
On Qwen3.5, IP-M3ID answers 87 more questions correctly and makes 180 fewer supported abstentions, a net loss of 93 successful decisions and 3.84 percentage points.
On Qwen2.5-VL, IP-VCD answers 40 fewer questions correctly than guided VCD and makes 7 more supported abstentions, a net loss of 33 successful decisions and 1.36 percentage points.
All four comparisons come directly from the unified UNKNOWN condition in Table~\ref{tab:ablationcounts}.
On LLaVA-Mistral, IP-VCD and guided VCD both answer 852 questions correctly, and supported abstentions rise from 33 to 66, raising utility by 1.36 percentage points.

These numerator changes are aggregate differences.
Per-question response transitions are given in Table~\ref{tab:transitions} (the LLaVA-Mistral comparison under the same UNKNOWN instruction) and Table~\ref{tab:refpostransitions} (the main-result configuration on nine models).
An unchanged number of correct answers can coexist with simultaneous corrections and new errors.

The visual-task implementation of CDA dynamically combines three predictions, parametric knowledge, visual context, and abstention, according to knowledge relevance under the current input and response prefix, and computes calibration terms from two empty-input predictions \citep{cda}.
Let $H_p,H_c$ be the entropies of the parametric-knowledge and visual-context branches, and $H_{0p},H_{0c}$ the corresponding empty-input entropies.
Then
\begin{equation*}
\begin{aligned}
r_p&=\frac{\max(H_p-H_{0p},0)}{H_{0p}},\\
r_c&=\frac{\max(H_c-H_{0c},0)}{H_{0c}},
\end{aligned}
\end{equation*}
\begin{equation*}
\begin{gathered}
w_p=\frac{r_p^2}{r_p+r_c},\qquad
w_c=\frac{r_c^2}{r_p+r_c},\\
w_a=1-w_p-w_c.
\end{gathered}
\end{equation*}
When $r_p+r_c=0$, continuous extension gives $(w_p,w_c,w_a)=(0,0,1)$.
The 108 response paths of the nine models provide 803 token positions, and the abstention weight is negative at 145 of them.
This measurement shows how calibration with empty-input entropy changes the branch weights.

\subsection{SID Implementation and Evaluation Subset}
\label{app:sid}

The visual-token selection of SID needs the attention weights of a specified layer and requires the input to contain at least 100 visual tokens.
On Food-101, 857 images of MiniCPM-V2.6 yield only 64 visual tokens, and the corresponding layer of Qwen3.5 uses linear attention; full SID results for these two models are therefore not listed in Table~\ref{tab:foodfull} (Food-101) or Table~\ref{tab:vizfull} (VizWiz).
On VizWiz, SID for Qwen2.5-VL and Qwen3-VL uses the 505-question common subset, and the results are listed in the corresponding sample-size groups of Table~\ref{tab:vizfull} and Table~\ref{tab:vizreliability}.
On Food-101, Gemma-3 uses SID to select 100 visual tokens at the second layer with a contrast coefficient of 0.5 and computes scores over the full vocabulary; the other six Food-101 models that support SID use a contrast coefficient of 1 and filter candidates by the original-image prediction.

Qwen2.5-VL and Qwen3-VL each have 7 of the 512 VizWiz questions with fewer than 100 visual tokens.
After removing these 7 questions, SID completes generation on the remaining 505; the 505 questions are the same for both models, with 341 officially labeled answerable and 164 unanswerable.
SID uses the head-averaged attention of the last-position query at the second decoding layer (index 1, counting from 0), selects the 100 visual tokens with the lowest attention, and masks the remaining visual keys from the next layer (index 2) onward; the contrast coefficient is 0.5, decoding is greedy over the full vocabulary, and responses are limited to 32 tokens.

After the full results of Qwen2.5-VL and Qwen3-VL, Table~\ref{tab:vizfull} lists the baselines and IP methods on the same 505-question subset, and Table~\ref{tab:vizreliability} lists the reliability metrics in the same way.
All methods keep the generation configuration of the full task.

\section{Statistical Intervals and Inference Cost}
\label{app:stats}

\subsection{Paired Bootstrap}
\label{app:f1}

For each VizWiz test question, we pair the DU of an IP method with that of vanilla VCD and compute their difference.
Each round draws 512 questions with replacement from the 512, keeps the paired scores of the two methods on each question, and computes the mean utility difference.
After 2,000 rounds, the 2.5\% and 97.5\% quantiles of the resampled distribution form the 95\% confidence interval; the random seed is 20260929.
Table~\ref{tab:bootstrap} lists all 18 intervals for the two IP methods on nine models.
Resampling uses responses that have already been generated, so the intervals reflect how variation in the test sample affects the mean utility difference.

\begin{table}[htbp]
\centering
\footnotesize
\setlength{\tabcolsep}{4pt}
\resizebox{\columnwidth}{!}{%
\begin{tabular}{llcc}
\toprule
Model & Method & Utility gain & 95\% confidence interval \\
\midrule
Qwen2.5-VL & IP-VCD & +10.90 & [+6.44, +15.47] \\
Qwen2.5-VL & IP-M3ID & +9.18 & [+4.35, +13.87] \\
LLaVA-Mistral & IP-VCD & +21.99 & [+17.89, +26.47] \\
LLaVA-Mistral & IP-M3ID & +21.70 & [+16.80, +26.35] \\
MiniCPM-V2.6 & IP-VCD & +22.25 & [+18.18, +26.43] \\
MiniCPM-V2.6 & IP-M3ID & +24.41 & [+19.84, +29.08] \\
InternVL3.5 & IP-VCD & +7.66 & [+3.67, +11.60] \\
InternVL3.5 & IP-M3ID & +10.51 & [+6.31, +14.67] \\
Qwen3-VL & IP-VCD & +9.80 & [+6.05, +13.48] \\
Qwen3-VL & IP-M3ID & +12.58 & [+8.87, +16.66] \\
Qwen3.5 & IP-VCD & +9.59 & [+5.41, +13.30] \\
Qwen3.5 & IP-M3ID & +13.11 & [+9.12, +17.27] \\
Gemma-3 & IP-VCD & +12.42 & [+9.22, +15.74] \\
Gemma-3 & IP-M3ID & +11.72 & [+8.61, +15.20] \\
OneVision & IP-VCD & +14.18 & [+10.53, +17.79] \\
OneVision & IP-M3ID & +13.57 & [+9.61, +17.46] \\
Phi-3.5-Vision & IP-VCD & +14.69 & [+11.29, +18.05] \\
Phi-3.5-Vision & IP-M3ID & +15.16 & [+11.58, +19.14] \\
\bottomrule

\end{tabular}}
\caption{All paired intervals relative to vanilla VCD on VizWiz.
Differences are in percentage points; the gain column is computed from unrounded utility differences and rounded to two decimals.}
\label{tab:bootstrap}
\end{table}

\subsection{Timing of Paired Runs}
\label{app:f2}

The inference-time comparison covers 14 paired runs on five models; in each run, guided VCD, IP-VCD, and CDA each process 2,424 questions.
End-to-end timing covers image loading and conversion, instruction and conversation construction, response generation, and cleanup; model loading and result writing fall outside the timed range.
Within each run, the GPU, model, numerical precision, software environment, and generation budget are the same.
We first compute the mean time per question of each method and then divide it by the mean time per question of guided VCD in the same run to obtain the relative time.
Over the 14 runs, the mean relative time of IP-VCD is 1.4774 and the median is 1.4655; the mean relative time of CDA is 2.8243.
The times and ratios of each run are given in Table~\ref{tab:pairtime}.

\begin{table}[htbp]
\centering
\footnotesize
\setlength{\tabcolsep}{3pt}
\resizebox{\columnwidth}{!}{%
\begin{tabular}{llccccc}
\toprule
Model & Abstention marker & Guided VCD (s/question) & IP-VCD (s/question) & CDA (s/question) & IP/VCD & CDA/VCD \\
\midrule
Qwen2.5-VL & UNKNOWN & 0.7059 & 1.0507 & 1.8326 & 1.4884 & 2.5961 \\
Qwen2.5-VL & UNCLEAR & 0.8236 & 1.1627 & 2.0523 & 1.4118 & 2.4920 \\
Qwen2.5-VL & UNSURE & 0.8239 & 1.1733 & 2.1607 & 1.4241 & 2.6226 \\
Qwen2.5-VL & I cannot identify it & 1.2681 & 1.7009 & 3.6244 & 1.3413 & 2.8582 \\
LLaVA-Mistral & UNKNOWN & 2.0497 & 3.0585 & 5.2381 & 1.4922 & 2.5556 \\
MiniCPM-V2.6 & UNKNOWN & 0.8475 & 1.3224 & 2.0518 & 1.5603 & 2.4209 \\
MiniCPM-V2.6 & UNCLEAR & 0.9278 & 1.3363 & 2.4492 & 1.4402 & 2.6397 \\
MiniCPM-V2.6 & UNSURE & 0.9738 & 1.4048 & 2.4130 & 1.4426 & 2.4780 \\
MiniCPM-V2.6 & I cannot identify it & 1.0160 & 1.4524 & 2.8516 & 1.4295 & 2.8066 \\
Qwen3.5 & UNKNOWN & 0.5787 & 0.9976 & 2.0645 & 1.7237 & 3.5674 \\
Qwen3.5 & UNCLEAR & 0.5983 & 0.9065 & 1.7713 & 1.5150 & 2.9603 \\
Qwen3.5 & UNSURE & 0.5973 & 0.8900 & 1.8108 & 1.4901 & 3.0317 \\
Qwen3.5 & I cannot identify it & 0.6043 & 0.8662 & 1.8085 & 1.4333 & 2.9926 \\
Gemma-3 & UNKNOWN & 2.8067 & 4.1845 & 9.8754 & 1.4909 & 3.5185 \\
\bottomrule

\end{tabular}}
\caption{Paired inference time on the full task.
In each run, every method processes 2,424 questions; times and ratios keep four decimals, and ratios use guided VCD in the same run as the baseline.}
\label{tab:pairtime}
\end{table}

\subsection{Cost on Nine Models in a Unified A100 Environment}
\label{app:f3}

\subsubsection{Measurement Conditions and Units}

The nine models each run guided VCD, IP-VCD, and CDA on the same 100 Food-101 evaluation inputs, for 2,700 timed responses in total.
The cost subset is drawn from the 2,424 questions without replacement, covers 61 classes, and uses its own sampling seed 20260929.
Another 8 non-overlapping inputs are used for warm-up, once per model and method, for 216 warm-up responses.
Per-question generation and image perturbation follow \texttt{stable\_seed(sample\_id, model, 0)}.

The device is an A100 80GB.
InternVL3.5 uses two GPUs following its original device map, and every other model uses one.
The three methods of the same model keep the same checkpoint, precision, environment, and number of devices.
OneVision uses FP16 and the other eight models use BF16; batch size is 1, temperature 0, top-p 1, and greedy generation is limited to 32 tokens.

Timing starts after model loading: each run synchronizes all assigned GPUs before recording the start time, covers image loading and RGB conversion, instruction and conversation construction, response generation, and cleanup of the conversation and image, and closes the interval with a final synchronization.
Model loading, separate warm-up, result writing, and scheduling waits are counted separately.
Table~\ref{tab:a100cost} reports the mean end-to-end seconds per question, and ratios are ratios of per-condition means within the same model.
All responses that reach the 32-token limit are kept, and the number of generated tokens includes any generated EOS token.

\begin{table}[htbp]
\centering
\footnotesize
\setlength{\tabcolsep}{3pt}
\resizebox{\columnwidth}{!}{%
\begin{tabular}{lcccccc}
\toprule
Model & Guided VCD & IP-VCD & CDA & IP/VCD & CDA/VCD & CDA/IP \\
\midrule
Qwen2.5-VL & 0.603 & 0.824 & 1.449 & $1.37\times$ & $2.40\times$ & $1.758\times$ \\
Qwen3.5-4B & 0.617 & 0.852 & 1.789 & $1.38\times$ & $2.90\times$ & $2.100\times$ \\
LLaVA-Mistral & 1.026 & 1.456 & 3.260 & $1.42\times$ & $3.18\times$ & $2.240\times$ \\
MiniCPM-V2.6 & 0.712 & 0.944 & 1.506 & $1.33\times$ & $2.12\times$ & $1.595\times$ \\
Gemma-3-4B & 1.682 & 2.261 & 6.099 & $1.34\times$ & $3.63\times$ & $2.697\times$ \\
InternVL3.5-8B & 2.066 & 2.762 & 2.443 & $1.34\times$ & $1.18\times$ & $0.884\times$ \\
OneVision & 0.953 & 1.349 & 1.605 & $1.42\times$ & $1.68\times$ & $1.189\times$ \\
Phi-3.5-Vision & 0.885 & 1.438 & 1.498 & $1.63\times$ & $1.69\times$ & $1.042\times$ \\
Qwen3-VL & 0.665 & 1.002 & 1.243 & $1.51\times$ & $1.87\times$ & $1.241\times$ \\
\bottomrule

\end{tabular}}
\caption{A100 inference cost on the 100 questions shared by nine models.
Times are seconds per question; every model and method uses 100 questions, and only means are reported.}
\label{tab:a100cost}
\end{table}

\subsubsection{Generation Length and Branch Overhead}

The model-weighted mean IP/VCD and CDA/VCD ratios over the nine models are 1.4135 and 2.2946, an average increase of 41.35\% and 129.46\% in end-to-end latency.
Across models, IP/VCD ranges from 1.3261 to 1.6258 and CDA/VCD from 1.1821 to 3.6259.
The model-weighted ratio averages the per-model time ratios, so every model contributes equally to the summary.

IP-VCD shows a measurable extra time even when the average response length is nearly the same.
On Qwen2.5-VL, guided VCD and IP-VCD generate 4.18 and 4.19 tokens on average, and IP takes 1.366 times as long; on Qwen3.5-4B, the lengths are 3.93 and 3.92 tokens and the time ratio is 1.381.
These results are consistent with the computational cost of the added unguided original-image branch.
On Phi-3.5, IP/VCD is 1.626 while the output grows from 7.32 to 8.43 tokens, showing that the number of free generation steps also enters the end-to-end cost.

\begin{table}[htbp]
\centering
\footnotesize
\setlength{\tabcolsep}{3pt}
\resizebox{\columnwidth}{!}{%
\begin{tabular}{lccccc}
\toprule
Model & A100 GPUs & Guided VCD mean tokens & IP-VCD mean tokens & CDA mean tokens & Stops at 32-token limit \\
\midrule
Qwen2.5-VL & 1 & 4.18 & 4.19 & 4.84 & 0 / 0 / 3 \\
Qwen3.5-4B & 1 & 3.93 & 3.92 & 5.09 & 0 / 0 / 1 \\
LLaVA-Mistral & 1 & 6.41 & 6.46 & 11.79 & 0 / 0 / 7 \\
MiniCPM-V2.6 & 1 & 5.56 & 5.37 & 5.32 & 0 / 0 / 0 \\
Gemma-3-4B & 1 & 9.58 & 9.19 & 14.72 & 6 / 5 / 12 \\
InternVL3.5-8B & 2 & 11.24 & 10.26 & 4.94 & 10 / 9 / 3 \\
OneVision & 1 & 3.73 & 3.77 & 3.28 & 0 / 0 / 0 \\
Phi-3.5-Vision & 1 & 7.32 & 8.43 & 4.89 & 3 / 7 / 0 \\
Qwen3-VL & 1 & 4.63 & 5.10 & 3.70 & 0 / 2 / 0 \\
\bottomrule

\end{tabular}}
\caption{Generation length, stops at the 32-token limit, and number of devices.
Length is the mean number of generated tokens per question.
Limit stops count only responses without an EOS token; a response whose 32nd token is EOS counts as an EOS stop.
Counts are listed in the order guided VCD / IP-VCD / CDA, with 100 responses per method.}
\label{tab:genlength}
\end{table}

CDA is slower than IP-VCD on eight models; InternVL3.5 is the exception, with a CDA/IP time ratio of 0.884, as CDA outputs 4.94 tokens on average versus 10.26 for IP.
On Gemma-3, CDA outputs 14.72 tokens, stops at the length limit 12 times, and takes 3.626 times as long as guided VCD.
The number of branches, the input conditions of each branch, cache handling, and the final number of generation steps all enter the measured interval, so the ratio of five branches to three cannot directly replace the time ratio.
The end-to-end measurements show the combined cost of these factors.

\subsubsection{GPU Allocation and the Two Kinds of Timing}

Per-question wall-clock time measures response latency, whereas allocated GPU-seconds also account for the number of devices: the seconds of each interval are multiplied by the number of GPUs allocated in that interval and summed over runs.
Each run of InternVL3.5 occupies two GPUs, so the same wall-clock seconds correspond to twice the allocated GPU-seconds.
Table~\ref{tab:gpuseconds} lists timed runs, warm-up, and loading separately; these quantities measure device allocation, while the actual active time of GPU kernels is a separate measurement.

\begin{table}[htbp]
\centering
\footnotesize
\setlength{\tabcolsep}{4pt}
\resizebox{0.9\columnwidth}{!}{%
\begin{tabular}{lrrrr}
\toprule
Model & Timed wall-clock (s) & Timed GPU-s & Warm-up GPU-s & Loading GPU-s \\
\midrule
Qwen2.5-VL & 287.638 & 287.638 & 21.992 & 38.143 \\
Qwen3.5-4B & 325.832 & 325.832 & 28.204 & 33.016 \\
LLaVA-Mistral & 574.220 & 574.220 & 44.940 & 29.712 \\
MiniCPM-V2.6 & 316.092 & 316.092 & 25.404 & 39.541 \\
Gemma-3-4B & 1004.288 & 1004.288 & 86.347 & 22.510 \\
InternVL3.5-8B & 727.117 & 1454.234 & 100.119 & 63.451 \\
OneVision & 390.735 & 390.735 & 36.454 & 17.884 \\
Phi-3.5-Vision & 382.106 & 382.106 & 34.036 & 21.794 \\
Qwen3-VL & 290.942 & 290.942 & 22.096 & 13.168 \\
\bottomrule

\end{tabular}}
\caption{Time and allocated GPU-seconds for 300 timed responses per model.
Timed values are the sums of the timed intervals of the three methods; warm-up and loading are not included in Table~\ref{tab:a100cost}.}
\label{tab:gpuseconds}
\end{table}

The timed runs total 5,026.086 allocated GPU-seconds; warm-up and loading account for 399.591 and 279.218 allocated GPU-seconds.
Counting the running interval of each worker process from admission to completion gives a total of 5,715.523 allocated GPU-seconds, which also includes process overhead outside the timed intervals.
Absolute times across models depend on the checkpoint, precision, and device map, while ratios within the same model give the relative cost under a fixed hardware configuration.

Section~\ref{app:f2} reports 14 full-task runs on five models, and Table~\ref{tab:a100cost} in \S\ref{app:f3} reports the unified A100 cost on 100 questions shared by nine models.
The two panels differ in sample size, model composition, and configuration, so 1.4774/2.8243 and 1.4135/2.2946 each belong to their own timing panel.
The A100 measurements support cost comparisons under this hardware and generation budget; answering, abstention, and utility on the full task follow Tables~\ref{tab:foodfull} and~\ref{tab:vizfull} on their respective evaluation sets.
\section{Same-Question Comparison of Operator Implementations}
\label{app:operators}

On a 101-question set with one image per class, we compare the adapted implementations used in the full-task evaluation with the decoding rules in the authors' code.
For DoLa, the comparison computes the distribution divergence used to select the early layer as in the authors' code; for SID, it follows the authors' greedy decoding rule and contrasts over the full vocabulary with a coefficient of 0.5.
DoLa covers nine models and SID six, and the two implementations process the same 101 questions on each model.
Table~\ref{tab:operator} lists the number of correct answers and the paired response changes per model.

\begin{table}[htbp]
\centering
\footnotesize
\setlength{\tabcolsep}{2.6pt}
\resizebox{\columnwidth}{!}{%
\begin{tabular}{llcccc}
\toprule
Model & Method & Evaluation implementation: correct & Author operator: correct & $Correct \rightarrow Incorrect$ & $Incorrect \rightarrow Correct$ \\
\midrule
Qwen2.5-VL & DoLa & 20 & 20 & 0 & 0 \\
Qwen2.5-VL & SID & 29 & 29 & 4 & 4 \\
LLaVA-Mistral & DoLa & 34 & 34 & 0 & 0 \\
LLaVA-Mistral & SID & 39 & 37 & 3 & 1 \\
MiniCPM-V2.6 & DoLa & 29 & 29 & 0 & 0 \\
InternVL3.5 & DoLa & 20 & 21 & 5 & 6 \\
InternVL3.5 & SID & 20 & 19 & 2 & 1 \\
Qwen3-VL & DoLa & 46 & 45 & 1 & 0 \\
Qwen3-VL & SID & 46 & 44 & 5 & 3 \\
Qwen3.5 & DoLa & 50 & 50 & 0 & 0 \\
Gemma-3 & DoLa & 48 & 53 & 4 & 9 \\
OneVision & DoLa & 24 & 24 & 2 & 2 \\
OneVision & SID & 30 & 28 & 2 & 0 \\
Phi-3.5-Vision & DoLa & 36 & 30 & 8 & 2 \\
Phi-3.5-Vision & SID & 39 & 38 & 5 & 4 \\
\bottomrule

\end{tabular}}
\caption{Operator comparison on the 101-question paired set.
Correct answers and transitions are counted on the same 101 questions.}
\label{tab:operator}
\end{table}

\section{Cross Comparison of Reference-Side Instructions}
\label{app:cross}

On the same 2,424 questions of Qwen2.5-VL, we keep the image and the original-image instruction fixed, change the reference-side instruction, and observe the abstention rate of complete responses.
Table~\ref{tab:cross} uses $\tau_1$ to $\tau_4$ for four instruction conditions with the same abstention intent, whose markers are UNKNOWN, UNCLEAR, UNSURE, and I cannot identify it.
Rows are original-image conditions and columns are reference conditions.
The variation within each row reflects how the reference-side instruction affects the final abstention behavior.

\begin{table}[htbp]
\centering
\footnotesize
\setlength{\tabcolsep}{6pt}
\begin{tabular}{lcccc}
\toprule
Original / Reference & $\tau_1$ & $\tau_2$ & $\tau_3$ & $\tau_4$ \\
\midrule
$\tau_1$ & 31.72 & 84.03 & 84.86 & 87.58 \\
$\tau_2$ & 61.34 & 12.33 & 55.98 & 68.94 \\
$\tau_3$ & 86.30 & 64.89 & 24.46 & 95.79 \\
$\tau_4$ & 81.56 & 80.61 & 79.74 & 29.46 \\
\bottomrule

\end{tabular}
\caption{Cross comparison of original-image and reference-side instructions.
Values are semantic abstention rates (\%).}
\label{tab:cross}
\end{table}

\section{Fixed-Policy Evaluation with Reliable VQA}
\label{app:reliable}

For the already frozen answering and abstention policies, we compute coverage, risk, and effective reliability \citep{reliable_vqa}.
Let $a_i$ indicate abstention, $q_i$ the answer score, and $g_i=1-a_i$.
Coverage is $N^{-1}\sum_i g_i$, and risk is $\sum_i g_i(1-q_i)/\sum_i g_i$.
Effective reliability scores $q_i$ for an answer with $q_i>0$, $-c$ for an answer with $q_i=0$, where $c$ is the error cost (not the log probability $c$ of the main text), and 0 for an abstention, then averages over all inputs; we report $c=1,10,100$.
Food-101 uses binary correctness, and VizWiz keeps the official answer consensus score.
These results evaluate existing generation policies, and no confidence selector is added or trained.

Higher effective reliability and lower risk are better; coverage is the share of answered inputs and should be read together with risk.
Overall accuracy on Food-101 and answer score on VizWiz both equal coverage times the mean answer score over answered inputs, so the answer score reflects both answer coverage and answer quality.
When every input is abstained on, coverage and effective reliability are 0, and risk is undefined because there are no answers.

The tables report metric means multiplied by 100.
Coverage and $\Phi_c$ use all inputs, and risk uses the subset actually answered.
$\Phi_c$ can be negative, and a larger error cost also widens its range.

\begin{table*}[tp]
\centering
\footnotesize
\setlength{\tabcolsep}{4.6pt}
\resizebox{0.96\textwidth}{!}{%
\begin{tabular}{llrrrrr}
\toprule
Model & Method & Coverage (\%) & Risk $\downarrow$ & $\Phi_1\uparrow$ & $\Phi_{10}\uparrow$ & $\Phi_{100}\uparrow$ \\
\midrule
Qwen2.5-VL & Direct & 100.00 & 69.60 & -39.19 & -665.55 & -6929.17 \\
Qwen2.5-VL & Vanilla VCD & 100.00 & 71.66 & -43.32 & -688.24 & -7137.50 \\
Qwen2.5-VL & Vanilla M3ID & 100.00 & 71.29 & -42.57 & -684.16 & -7100.00 \\
Qwen2.5-VL & DoLa & 100.00 & 75.50 & -50.99 & -730.45 & -7525.00 \\
Qwen2.5-VL & DeCo & 100.00 & 69.76 & -39.52 & -667.37 & -6945.83 \\
Qwen2.5-VL & SID & 100.00 & 73.60 & -47.19 & -709.57 & -7333.33 \\
Qwen2.5-VL & CDA & 84.41 & 59.82 & -16.58 & -471.04 & -5015.59 \\
Qwen2.5-VL & IP-VCD & 69.80 & 49.59 & 0.58 & -310.93 & -3426.03 \\
Qwen2.5-VL & IP-M3ID & 76.69 & 45.02 & 7.63 & -303.14 & -3410.81 \\
LLaVA-Mistral & Direct & 99.63 & 66.00 & -31.89 & -623.72 & -6542.04 \\
LLaVA-Mistral & Vanilla VCD & 100.00 & 61.76 & -23.51 & -579.33 & -6137.50 \\
LLaVA-Mistral & Vanilla M3ID & 99.92 & 65.07 & -30.12 & -615.26 & -6466.75 \\
LLaVA-Mistral & DoLa & 99.71 & 66.61 & -33.13 & -630.90 & -6608.62 \\
LLaVA-Mistral & DeCo & 99.92 & 73.08 & -46.12 & -703.30 & -7275.08 \\
LLaVA-Mistral & SID & 100.00 & 61.72 & -23.43 & -578.88 & -6133.33 \\
LLaVA-Mistral & CDA & 99.42 & 69.79 & -39.36 & -663.86 & -6908.91 \\
LLaVA-Mistral & IP-VCD & 54.58 & 48.00 & 2.19 & -233.58 & -2591.25 \\
LLaVA-Mistral & IP-M3ID & 28.30 & 52.04 & -1.16 & -133.70 & -1459.20 \\
MiniCPM-V2.6 & Direct & 99.88 & 56.92 & -13.82 & -525.45 & -5641.79 \\
MiniCPM-V2.6 & Vanilla VCD & 99.96 & 56.38 & -12.75 & -519.93 & -5591.71 \\
MiniCPM-V2.6 & Vanilla M3ID & 99.92 & 57.27 & -14.52 & -529.50 & -5679.25 \\
MiniCPM-V2.6 & DoLa & 99.96 & 68.88 & -37.75 & -657.43 & -6854.21 \\
MiniCPM-V2.6 & DeCo & 99.88 & 57.99 & -15.97 & -537.25 & -5750.12 \\
MiniCPM-V2.6 & CDA & 99.83 & 72.31 & -44.55 & -694.31 & -7191.83 \\
MiniCPM-V2.6 & IP-VCD & 77.85 & 52.15 & -3.34 & -368.69 & -4022.15 \\
MiniCPM-V2.6 & IP-M3ID & 63.86 & 51.03 & -1.32 & -294.64 & -3227.81 \\
InternVL3.5 & Direct & 100.00 & 82.92 & -65.84 & -812.13 & -8275.00 \\
InternVL3.5 & Vanilla VCD & 99.96 & 80.89 & -61.76 & -789.48 & -8066.71 \\
InternVL3.5 & Vanilla M3ID & 100.00 & 82.43 & -64.85 & -806.68 & -8225.00 \\
InternVL3.5 & DoLa & 100.00 & 83.83 & -67.66 & -822.11 & -8366.67 \\
InternVL3.5 & DeCo & 100.00 & 82.05 & -64.11 & -802.60 & -8187.50 \\
InternVL3.5 & SID & 100.00 & 81.48 & -62.95 & -796.25 & -8129.17 \\
InternVL3.5 & CDA & 99.96 & 79.61 & -59.20 & -775.41 & -7937.54 \\
InternVL3.5 & IP-VCD & 98.35 & 71.81 & -42.90 & -678.55 & -7034.98 \\
InternVL3.5 & IP-M3ID & 97.73 & 71.89 & -42.78 & -675.08 & -6998.10 \\
\bottomrule

\end{tabular}}
\caption{Fixed policies on Food-101 for nine models (part 1 of 2; 2,424 questions per operating point).
Coverage, risk, and the three reliability scores are percentages; reliability can be negative and can fall below $-$100.}
\label{tab:foodreliability}
\end{table*}

\addtocounter{table}{-1}
\begin{table*}[tp]
\centering
\footnotesize
\setlength{\tabcolsep}{4.6pt}
\resizebox{0.96\textwidth}{!}{%
\begin{tabular}{llrrrrr}
\toprule
Model & Method & Coverage (\%) & Risk $\downarrow$ & $\Phi_1\uparrow$ & $\Phi_{10}\uparrow$ & $\Phi_{100}\uparrow$ \\
\midrule
Qwen3-VL & Direct & 99.92 & 46.04 & 7.92 & -406.06 & -4545.92 \\
Qwen3-VL & Vanilla VCD & 100.00 & 43.03 & 13.94 & -373.31 & -4245.83 \\
Qwen3-VL & Vanilla M3ID & 99.92 & 46.24 & 7.51 & -408.33 & -4566.75 \\
Qwen3-VL & DoLa & 99.96 & 48.16 & 3.67 & -429.62 & -4762.54 \\
Qwen3-VL & DeCo & 100.00 & 46.29 & 7.43 & -409.16 & -4575.00 \\
Qwen3-VL & SID & 100.00 & 51.86 & -3.71 & -470.42 & -5137.50 \\
Qwen3-VL & CDA & 99.96 & 97.23 & -94.43 & -969.18 & -9716.71 \\
Qwen3-VL & IP-VCD & 80.94 & 34.51 & 25.08 & -226.28 & -2739.89 \\
Qwen3-VL & IP-M3ID & 83.62 & 40.40 & 16.05 & -288.04 & -3328.88 \\
Qwen3.5 & Direct & 100.00 & 48.68 & 2.64 & -435.48 & -4816.67 \\
Qwen3.5 & Vanilla VCD & 100.00 & 48.10 & 3.80 & -429.13 & -4758.33 \\
Qwen3.5 & Vanilla M3ID & 100.00 & 49.67 & 0.66 & -446.37 & -4916.67 \\
Qwen3.5 & DoLa & 100.00 & 48.31 & 3.38 & -431.39 & -4779.17 \\
Qwen3.5 & DeCo & 100.00 & 48.84 & 2.31 & -437.29 & -4833.33 \\
Qwen3.5 & CDA & 93.69 & 71.95 & -41.13 & -647.81 & -6714.65 \\
Qwen3.5 & IP-VCD & 98.06 & 47.08 & 5.73 & -409.74 & -4564.44 \\
Qwen3.5 & IP-M3ID & 95.67 & 47.82 & 4.17 & -407.59 & -4525.17 \\
Gemma-3 & Direct & 100.00 & 45.21 & 9.57 & -397.36 & -4466.67 \\
Gemma-3 & Vanilla VCD & 99.96 & 46.26 & 7.47 & -408.75 & -4570.87 \\
Gemma-3 & Vanilla M3ID & 100.00 & 45.54 & 8.91 & -400.99 & -4500.00 \\
Gemma-3 & DoLa & 100.00 & 45.71 & 8.58 & -402.81 & -4516.67 \\
Gemma-3 & DeCo & 100.00 & 45.42 & 9.16 & -399.63 & -4487.50 \\
Gemma-3 & SID & 100.00 & 48.06 & 3.88 & -428.67 & -4754.17 \\
Gemma-3 & CDA & 99.26 & 70.03 & -39.77 & -665.39 & -6921.58 \\
Gemma-3 & IP-VCD & 99.96 & 46.93 & 6.15 & -416.01 & -4637.54 \\
Gemma-3 & IP-M3ID & 100.00 & 45.38 & 9.24 & -399.17 & -4483.33 \\
OneVision & Direct & 100.00 & 69.43 & -38.86 & -663.74 & -6912.50 \\
OneVision & Vanilla VCD & 100.00 & 64.98 & -29.95 & -614.73 & -6462.50 \\
OneVision & Vanilla M3ID & 100.00 & 67.95 & -35.89 & -647.40 & -6762.50 \\
OneVision & DoLa & 100.00 & 81.06 & -62.13 & -791.71 & -8087.50 \\
OneVision & DeCo & 100.00 & 68.73 & -37.46 & -656.02 & -6841.67 \\
OneVision & SID & 100.00 & 66.25 & -32.51 & -628.80 & -6591.67 \\
OneVision & CDA & 99.96 & 74.12 & -48.23 & -715.06 & -7383.37 \\
OneVision & IP-VCD & 99.50 & 63.43 & -26.73 & -594.80 & -6275.50 \\
OneVision & IP-M3ID & 98.31 & 69.16 & -37.67 & -649.55 & -6768.36 \\
Phi-3.5-Vision & Direct & 100.00 & 64.27 & -28.55 & -607.01 & -6391.67 \\
Phi-3.5-Vision & Vanilla VCD & 100.00 & 62.21 & -24.42 & -584.32 & -6183.33 \\
Phi-3.5-Vision & Vanilla M3ID & 100.00 & 64.44 & -28.88 & -608.83 & -6408.33 \\
Phi-3.5-Vision & DoLa & 100.00 & 69.39 & -38.78 & -663.28 & -6908.33 \\
Phi-3.5-Vision & DeCo & 100.00 & 70.92 & -41.83 & -680.07 & -7062.50 \\
Phi-3.5-Vision & SID & 100.00 & 64.11 & -28.22 & -605.20 & -6375.00 \\
Phi-3.5-Vision & CDA & 99.75 & 72.46 & -44.80 & -695.30 & -7200.25 \\
Phi-3.5-Vision & IP-VCD & 97.32 & 62.10 & -23.56 & -567.49 & -6006.85 \\
Phi-3.5-Vision & IP-M3ID & 96.33 & 65.22 & -29.33 & -594.80 & -6249.50 \\
\bottomrule

\end{tabular}}
\caption{Fixed policies on Food-101 for nine models (part 2 of 2; 2,424 questions per operating point).
Coverage, risk, and the three reliability scores are percentages; reliability can be negative and can fall below $-$100.}
\label{tab:foodreliability2}
\end{table*}

\begin{table*}[tp]
\centering
\footnotesize
\setlength{\tabcolsep}{3.6pt}
\resizebox{0.98\textwidth}{!}{%
\begin{tabular}{llrrrrrr}
\toprule
Model & Method & N & Coverage (\%) & Risk $\downarrow$ & $\Phi_1\uparrow$ & $\Phi_{10}\uparrow$ & $\Phi_{100}\uparrow$ \\
\midrule
Qwen2.5-VL & Direct & 512 & 91.80 & 71.38 & -31.35 & -549.90 & -5735.45 \\
Qwen2.5-VL & Vanilla VCD & 512 & 95.51 & 74.25 & -39.08 & -612.13 & -6342.60 \\
Qwen2.5-VL & Vanilla M3ID & 512 & 94.34 & 72.44 & -34.16 & -575.57 & -5989.63 \\
Qwen2.5-VL & DoLa & 512 & 92.77 & 78.13 & -45.92 & -641.82 & -6600.80 \\
Qwen2.5-VL & DeCo & 512 & 93.95 & 75.45 & -40.41 & -611.70 & -6324.59 \\
Qwen2.5-VL & CDA & 512 & 53.71 & 75.56 & -23.98 & -357.97 & -3697.81 \\
Qwen2.5-VL & IP-VCD & 512 & 50.98 & 72.53 & -18.42 & -310.21 & -3228.18 \\
Qwen2.5-VL & IP-M3ID & 512 & 51.17 & 76.76 & -24.24 & -349.43 & -3601.39 \\
Qwen2.5-VL & Direct & 505 & 91.88 & 71.21 & -30.97 & -547.80 & -5716.12 \\
Qwen2.5-VL & Vanilla VCD & 505 & 95.45 & 74.27 & -39.01 & -611.09 & -6331.88 \\
Qwen2.5-VL & Vanilla M3ID & 505 & 94.46 & 72.35 & -34.08 & -575.86 & -5993.68 \\
Qwen2.5-VL & DoLa & 505 & 92.87 & 78.04 & -45.74 & -640.99 & -6593.47 \\
Qwen2.5-VL & DeCo & 505 & 93.86 & 75.27 & -39.96 & -608.48 & -6293.62 \\
Qwen2.5-VL & SID & 505 & 96.63 & 72.60 & -35.11 & -589.37 & -6131.94 \\
Qwen2.5-VL & CDA & 505 & 54.06 & 75.38 & -23.92 & -358.97 & -3709.47 \\
Qwen2.5-VL & IP-VCD & 505 & 51.49 & 72.42 & -18.48 & -312.53 & -3253.13 \\
Qwen2.5-VL & IP-M3ID & 505 & 51.49 & 76.58 & -24.18 & -350.32 & -3611.70 \\
LLaVA-Mistral & Direct & 512 & 90.43 & 78.57 & -45.66 & -631.02 & -6484.53 \\
LLaVA-Mistral & Vanilla VCD & 512 & 97.66 & 83.26 & -59.82 & -745.37 & -7600.84 \\
LLaVA-Mistral & Vanilla M3ID & 512 & 91.60 & 78.27 & -46.31 & -642.21 & -6601.19 \\
LLaVA-Mistral & DoLa & 512 & 91.60 & 79.36 & -47.70 & -647.11 & -6641.25 \\
LLaVA-Mistral & DeCo & 512 & 93.16 & 86.37 & -64.65 & -760.74 & -7721.68 \\
LLaVA-Mistral & SID & 512 & 93.75 & 82.83 & -56.56 & -710.47 & -7249.53 \\
LLaVA-Mistral & CDA & 512 & 79.49 & 80.54 & -43.91 & -578.28 & -5922.03 \\
LLaVA-Mistral & IP-VCD & 512 & 52.15 & 72.17 & -20.06 & -331.19 & -3442.52 \\
LLaVA-Mistral & IP-M3ID & 512 & 31.84 & 66.38 & -9.02 & -186.56 & -1961.95 \\
MiniCPM-V2.6 & Direct & 512 & 96.29 & 69.70 & -28.63 & -548.95 & -5752.07 \\
MiniCPM-V2.6 & Vanilla VCD & 512 & 98.44 & 74.42 & -39.67 & -623.26 & -6459.20 \\
MiniCPM-V2.6 & Vanilla M3ID & 512 & 97.27 & 69.54 & -28.96 & -556.31 & -5829.75 \\
MiniCPM-V2.6 & DoLa & 512 & 95.70 & 76.63 & -44.43 & -645.61 & -6657.32 \\
MiniCPM-V2.6 & DeCo & 512 & 95.31 & 69.20 & -27.87 & -542.91 & -5693.30 \\
MiniCPM-V2.6 & CDA & 512 & 64.65 & 80.60 & -37.27 & -485.51 & -4967.93 \\
MiniCPM-V2.6 & IP-VCD & 512 & 65.23 & 59.34 & -4.34 & -282.07 & -3059.41 \\
MiniCPM-V2.6 & IP-M3ID & 512 & 56.45 & 53.32 & 2.52 & -211.93 & -2356.46 \\
InternVL3.5 & Direct & 512 & 96.29 & 74.67 & -38.11 & -600.61 & -6225.61 \\
InternVL3.5 & Vanilla VCD & 512 & 97.27 & 76.73 & -45.14 & -655.10 & -6754.71 \\
InternVL3.5 & Vanilla M3ID & 512 & 96.88 & 74.94 & -39.20 & -610.49 & -6323.38 \\
InternVL3.5 & DoLa & 512 & 98.44 & 77.46 & -46.95 & -669.22 & -6891.88 \\
InternVL3.5 & DeCo & 512 & 97.85 & 78.14 & -47.17 & -664.16 & -6834.08 \\
InternVL3.5 & SID & 512 & 97.46 & 75.79 & -42.23 & -634.61 & -6558.44 \\
InternVL3.5 & CDA & 512 & 85.74 & 76.72 & -40.20 & -581.60 & -5995.66 \\
InternVL3.5 & IP-VCD & 512 & 76.37 & 78.49 & -38.26 & -530.45 & -5452.32 \\
InternVL3.5 & IP-M3ID & 512 & 73.44 & 76.68 & -34.04 & -494.59 & -5100.06 \\
\bottomrule

\end{tabular}}
\caption{Fixed policies on VizWiz for nine models and the SID subset comparison (part 1 of 3).
All methods are listed for each model, grouped by the number of evaluated questions; metrics are multiplied by 100, and risk is undefined when there are no answers.
512 corresponds to the full evaluation set and 505 to the common subset in Table~\ref{tab:vizfull}.}
\label{tab:vizreliability}
\end{table*}

\addtocounter{table}{-1}
\begin{table*}[tp]
\centering
\footnotesize
\setlength{\tabcolsep}{3.6pt}
\resizebox{0.98\textwidth}{!}{%
\begin{tabular}{llrrrrrr}
\toprule
Model & Method & N & Coverage (\%) & Risk $\downarrow$ & $\Phi_1\uparrow$ & $\Phi_{10}\uparrow$ & $\Phi_{100}\uparrow$ \\
\midrule
Qwen3-VL & Direct & 512 & 94.53 & 60.31 & -6.62 & -403.89 & -4376.54 \\
Qwen3-VL & Vanilla VCD & 512 & 94.92 & 62.74 & -12.48 & -443.14 & -4749.79 \\
Qwen3-VL & Vanilla M3ID & 512 & 94.73 & 60.39 & -6.82 & -405.84 & -4396.07 \\
Qwen3-VL & DoLa & 512 & 93.95 & 61.43 & -9.86 & -424.71 & -4573.14 \\
Qwen3-VL & DeCo & 512 & 94.73 & 60.76 & -8.34 & -417.91 & -4513.61 \\
Qwen3-VL & CDA & 512 & 70.12 & 75.01 & -30.14 & -459.04 & -4748.11 \\
Qwen3-VL & IP-VCD & 512 & 65.82 & 58.66 & -4.24 & -287.25 & -3117.32 \\
Qwen3-VL & IP-M3ID & 512 & 57.62 & 52.03 & 4.98 & -198.93 & -2237.99 \\
Qwen3-VL & Direct & 505 & 94.46 & 60.04 & -6.22 & -401.86 & -4358.30 \\
Qwen3-VL & Vanilla VCD & 505 & 94.85 & 62.46 & -11.92 & -439.64 & -4716.87 \\
Qwen3-VL & Vanilla M3ID & 505 & 94.65 & 60.13 & -6.42 & -403.84 & -4378.10 \\
Qwen3-VL & DoLa & 505 & 93.86 & 61.12 & -9.25 & -420.93 & -4537.76 \\
Qwen3-VL & DeCo & 505 & 94.65 & 60.50 & -7.96 & -416.08 & -4497.27 \\
Qwen3-VL & SID & 505 & 94.46 & 60.82 & -7.15 & -404.57 & -4378.83 \\
Qwen3-VL & CDA & 505 & 70.10 & 74.75 & -29.82 & -457.54 & -4734.77 \\
Qwen3-VL & IP-VCD & 505 & 65.94 & 58.47 & -3.90 & -285.49 & -3101.33 \\
Qwen3-VL & IP-M3ID & 505 & 58.22 & 51.87 & 5.25 & -199.70 & -2249.21 \\
Qwen3.5 & Direct & 512 & 99.80 & 62.78 & -15.00 & -484.34 & -5177.70 \\
Qwen3.5 & Vanilla VCD & 512 & 99.02 & 67.50 & -25.04 & -540.08 & -5690.47 \\
Qwen3.5 & Vanilla M3ID & 512 & 99.80 & 62.88 & -15.49 & -488.34 & -5216.86 \\
Qwen3.5 & DoLa & 512 & 99.80 & 63.05 & -15.47 & -486.56 & -5197.50 \\
Qwen3.5 & DeCo & 512 & 99.02 & 64.14 & -16.25 & -482.07 & -5140.27 \\
Qwen3.5 & CDA & 512 & 90.04 & 75.34 & -39.12 & -591.07 & -6110.61 \\
Qwen3.5 & IP-VCD & 512 & 78.12 & 65.28 & -16.43 & -408.42 & -4328.34 \\
Qwen3.5 & IP-M3ID & 512 & 75.78 & 60.34 & -8.03 & -350.80 & -3778.54 \\
Gemma-3 & Direct & 512 & 98.44 & 85.38 & -65.29 & -782.48 & -7954.36 \\
Gemma-3 & Vanilla VCD & 512 & 95.12 & 90.12 & -74.00 & -824.59 & -8330.45 \\
Gemma-3 & Vanilla M3ID & 512 & 97.85 & 86.79 & -68.32 & -799.57 & -8112.07 \\
Gemma-3 & DoLa & 512 & 97.46 & 90.80 & -76.39 & -844.55 & -8526.19 \\
Gemma-3 & DeCo & 512 & 98.05 & 91.24 & -78.52 & -862.50 & -8702.34 \\
Gemma-3 & SID & 512 & 97.66 & 87.06 & -68.42 & -797.91 & -8092.83 \\
Gemma-3 & CDA & 512 & 75.39 & 93.70 & -64.59 & -688.61 & -6928.85 \\
Gemma-3 & IP-VCD & 512 & 80.66 & 85.54 & -54.16 & -646.54 & -6570.37 \\
Gemma-3 & IP-M3ID & 512 & 83.59 & 86.19 & -56.82 & -672.05 & -6824.39 \\
\bottomrule
\end{tabular}}
\caption{Fixed policies on VizWiz for nine models and the SID subset comparison (part 2 of 3).
All methods are listed for each model, grouped by the number of evaluated questions; metrics are multiplied by 100, and risk is undefined when there are no answers.}
\label{tab:vizreliability2}
\end{table*}

\addtocounter{table}{-1}
\begin{table*}[tp]
\centering
\footnotesize
\setlength{\tabcolsep}{3.6pt}
\resizebox{0.98\textwidth}{!}{%
\begin{tabular}{llrrrrrr}
\toprule
Model & Method & N & Coverage (\%) & Risk $\downarrow$ & $\Phi_1\uparrow$ & $\Phi_{10}\uparrow$ & $\Phi_{100}\uparrow$ \\
\midrule
OneVision & Direct & 512 & 97.66 & 68.66 & -26.23 & -537.75 & -5652.99 \\
OneVision & Vanilla VCD & 512 & 98.44 & 74.07 & -40.10 & -630.72 & -6536.97 \\
OneVision & Vanilla M3ID & 512 & 96.68 & 68.89 & -26.95 & -540.23 & -5673.05 \\
OneVision & DoLa & 512 & 96.68 & 82.00 & -56.23 & -718.93 & -7345.88 \\
OneVision & DeCo & 512 & 98.44 & 69.88 & -30.12 & -568.01 & -5946.91 \\
OneVision & SID & 512 & 97.27 & 72.89 & -36.33 & -600.59 & -6243.16 \\
OneVision & CDA & 512 & 92.38 & 71.40 & -30.02 & -538.03 & -5618.11 \\
OneVision & IP-VCD & 512 & 79.88 & 67.65 & -19.86 & -431.19 & -4544.47 \\
OneVision & IP-M3ID & 512 & 80.86 & 66.86 & -17.34 & -414.61 & -4387.27 \\
Phi-3.5-Vision & Direct & 512 & 86.91 & 74.83 & -36.33 & -560.16 & -5798.44 \\
Phi-3.5-Vision & Vanilla VCD & 512 & 91.80 & 80.74 & -50.29 & -662.01 & -6779.20 \\
Phi-3.5-Vision & Vanilla M3ID & 512 & 90.43 & 75.81 & -39.06 & -587.50 & -6071.88 \\
Phi-3.5-Vision & DoLa & 512 & 86.33 & 78.85 & -42.09 & -585.25 & -6016.89 \\
Phi-3.5-Vision & DeCo & 512 & 86.52 & 80.38 & -46.31 & -615.84 & -6311.15 \\
Phi-3.5-Vision & SID & 512 & 92.58 & 80.68 & -50.08 & -661.80 & -6778.98 \\
Phi-3.5-Vision & CDA & 512 & 72.27 & 75.51 & -30.35 & -462.77 & -4786.99 \\
Phi-3.5-Vision & IP-VCD & 512 & 68.95 & 74.31 & -28.18 & -441.27 & -4572.13 \\
Phi-3.5-Vision & IP-M3ID & 512 & 67.77 & 71.44 & -23.22 & -406.43 & -4238.46 \\
\bottomrule
\end{tabular}}
\caption{Fixed policies on VizWiz for nine models and the SID subset comparison (part 3 of 3).
All methods are listed for each model; metrics are multiplied by 100, and each method evaluates 512 questions.}
\label{tab:vizreliability3}
\end{table*}

Effective reliability follows the original definition of Whitehead et al.
On Food-101, $\Phi_c=(C-cW)/N$; on VizWiz, $\Phi_c=Q-cW_0/N$, where $W_0$ is the number of answered questions whose answer consensus score is zero.

These definitions follow \S\ref{sec:jointscoring} and Equations (3) to (6) of \citet{reliable_vqa}.

\subsection{Complementary Readings of Reliability and Decision Utility}
\label{app:reliable-reading}

These fixed-policy metrics can be computed directly from existing responses and scores.
Coverage measures how much is answered, risk measures the average error of the answered part, and effective reliability combines the credit of correct answers and the penalty of zero-score answers over all inputs.
Relative to vanilla VCD, both IP methods improve effective reliability under all three cost settings on all nine models in the full 512-question VizWiz evaluation; on Food-101, IP-VCD improves it on eight models and IP-M3ID on seven.

DU and effective reliability credit abstention differently, so the full ranking of methods can also differ.
For example, on Food-101 for MiniCPM-V2.6, IP-VCD has a DU of 47.61\%, higher than 45.09\% for IP-M3ID, while their $\Phi_1$ values are $-$3.34 and $-$1.32, higher for IP-M3ID.
On VizWiz for LLaVA-Mistral, the two have DU of 40.29\% and 40.00\% and $\Phi_1$ of $-$20.06 and $-$9.02, again with the order reversed.
The direction of improvement relative to vanilla VCD and the ranking between the two IP methods are judged separately.

An increase in effective reliability can come from fewer incorrect answers and does not require the conditional answering risk to fall on every model.
On VizWiz, IP-VCD lowers risk on eight models, while risk on InternVL3.5 rises by 1.76 percentage points; IP-M3ID also lowers risk on eight models, while risk on Qwen2.5-VL rises by 2.50 percentage points.
In these exceptions, coverage and the number of zero-score answers change together, and reliability over all inputs still improves.
The Food-101 result of LLaVA-Mistral gives a direct example: IP-VCD reduces zero-score answers from 1,497 to 635, and $\Phi_1$ moves from $-$23.51 to 2.19, showing how fewer incorrect answers contribute to this metric.

For binary correctness on Food-101, $\Phi_c=(C-cW)/N=\mathrm{coverage}\,[1-(1+c)\,\mathrm{risk}]$; on VizWiz, partially correct answers earn their soft consensus score, and the penalty applies only to zero-score answers.
DU gives 1 point to abstentions supported by the abstention labels, whereas effective reliability gives 0 to every abstention, so the two describe different credit and loss settings.
The all-abstain policy scores exactly 0 in effective reliability; at $c=10$ or 100, all fixed policies for Food-101 and VizWiz in Tables~\ref{tab:foodreliability} and~\ref{tab:vizreliability} still score below zero, showing that under high error costs the remaining incorrect answers still outweigh the credit of answering.
The relative improvement of IP holds alongside this absolute result.

\clearpage
\section{Case Studies}
\label{app:cases}

Each example shows the original image and the reference image, together with the responses and first-token distributions of Vanilla VCD, Guided VCD, Original-only, and IP-VCD.
Responses come from the formal evaluation, and probabilities come from a separate replay of the first step; for all four methods shown, the formal first token matches the highest-probability token in the replay.
The Original-only distribution is computed from the branch scores of the same replay, and $\times$ marks tokens excluded by candidate filtering.
The reference image is the perturbed input used in the replay; for multi-crop models, the first view is shown.
Ellipses mark continuous excerpts of long responses.

\subsection{Qwen2.5-VL}

\begin{figure*}[tp]
\centering
\includegraphics[width=160mm]{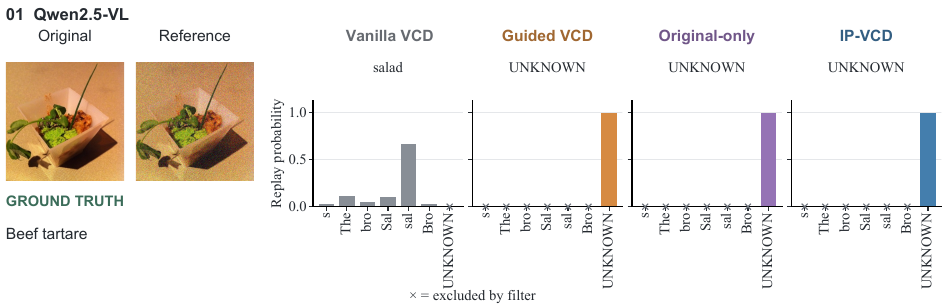}
\caption{Qwen2.5-VL, beef tartare.}
\label{fig:case01}
\end{figure*}

\begin{figure*}[tp]
\centering
\includegraphics[width=160mm]{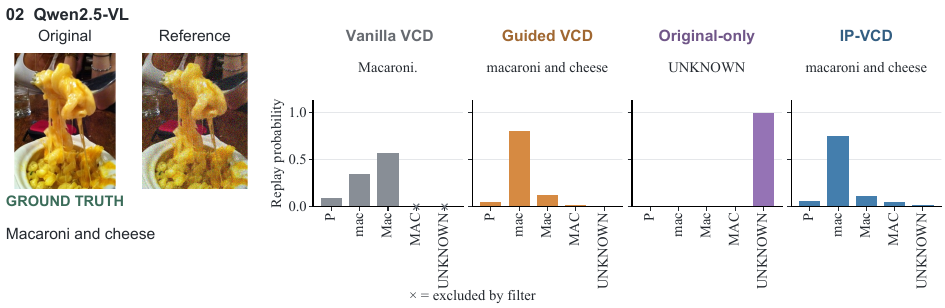}
\caption{Qwen2.5-VL, macaroni and cheese.}
\label{fig:case02}
\end{figure*}

\begin{figure*}[tp]
\centering
\includegraphics[width=160mm]{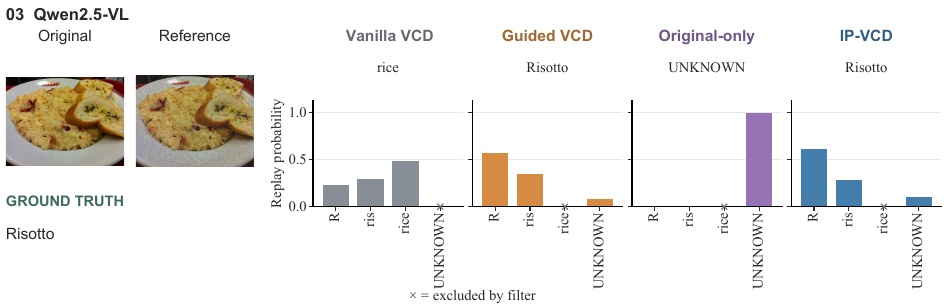}
\caption{Qwen2.5-VL, risotto.}
\label{fig:case03}
\end{figure*}

\subsection{LLaVA-Mistral}

\begin{figure*}[tp]
\centering
\includegraphics[width=160mm]{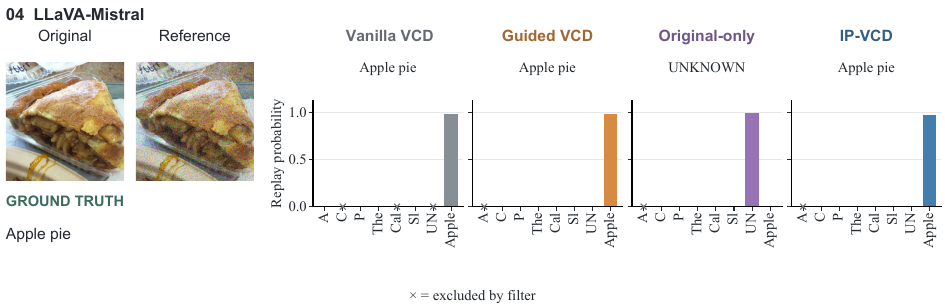}
\caption{LLaVA-Mistral, apple pie.}
\label{fig:case04}
\end{figure*}

\begin{figure*}[tp]
\centering
\includegraphics[width=160mm]{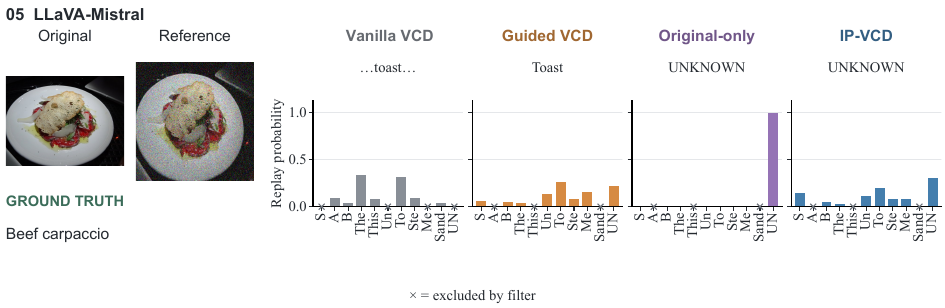}
\caption{LLaVA-Mistral, beef carpaccio.}
\label{fig:case05}
\end{figure*}

\begin{figure*}[tp]
\centering
\includegraphics[width=160mm]{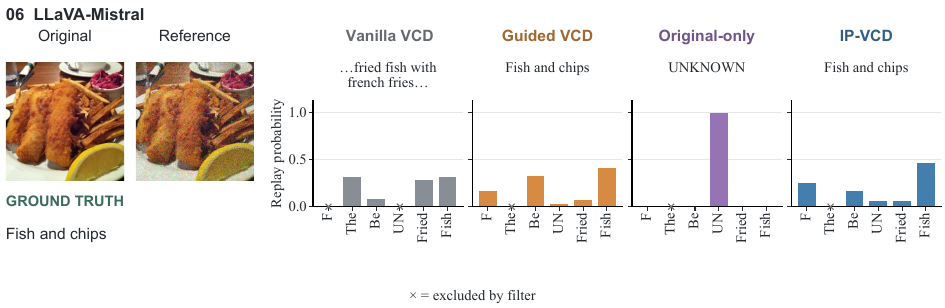}
\caption{LLaVA-Mistral, fish and chips.}
\label{fig:case06}
\end{figure*}

\begin{figure*}[tp]
\centering
\includegraphics[width=160mm]{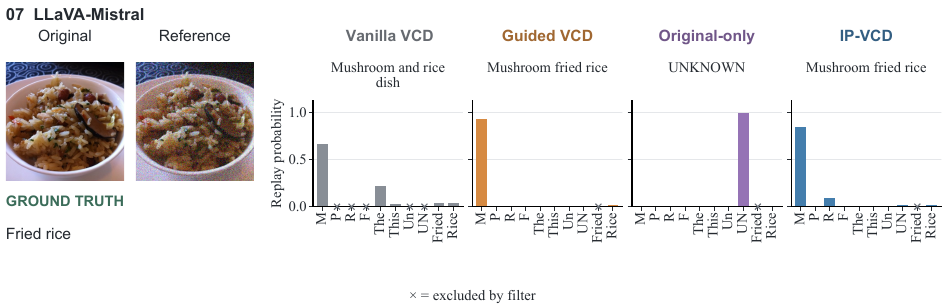}
\caption{LLaVA-Mistral, fried rice.}
\label{fig:case07}
\end{figure*}

\subsection{MiniCPM-V2.6}

\begin{figure*}[tp]
\centering
\includegraphics[width=160mm]{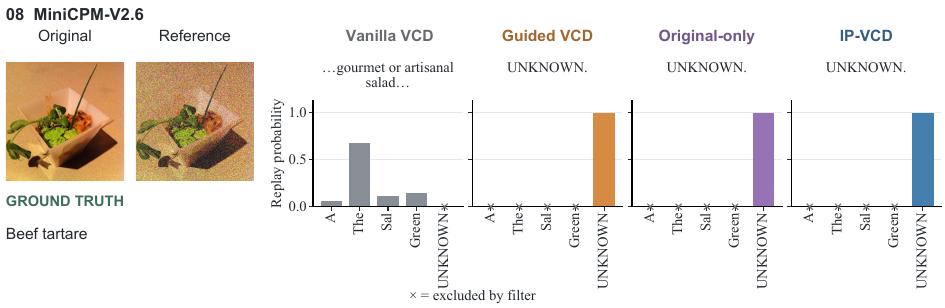}
\caption{MiniCPM-V2.6, beef tartare.}
\label{fig:case08}
\end{figure*}

\begin{figure*}[tp]
\centering
\includegraphics[width=160mm]{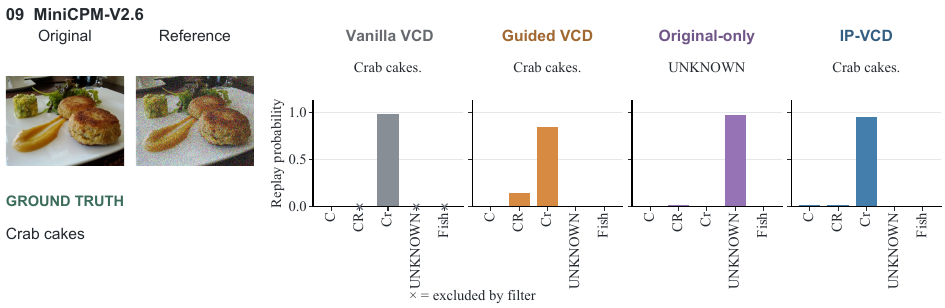}
\caption{MiniCPM-V2.6, crab cakes.}
\label{fig:case09}
\end{figure*}

\begin{figure*}[tp]
\centering
\includegraphics[width=160mm]{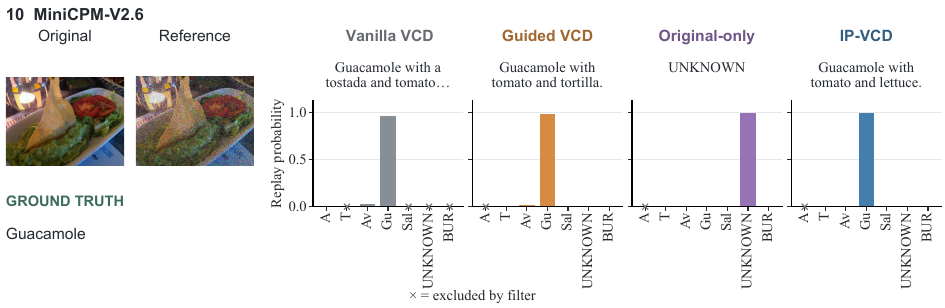}
\caption{MiniCPM-V2.6, guacamole.}
\label{fig:case10}
\end{figure*}

\subsection{InternVL3.5}

\begin{figure*}[tp]
\centering
\includegraphics[width=160mm]{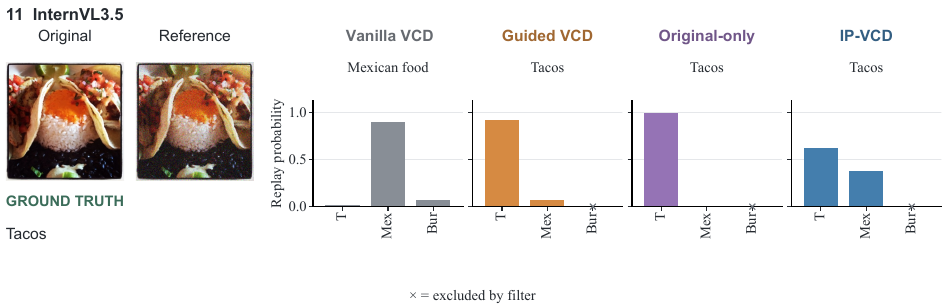}
\caption{InternVL3.5, tacos.}
\label{fig:case11}
\end{figure*}

\subsection{Qwen3-VL}

\begin{figure*}[tp]
\centering
\includegraphics[width=160mm]{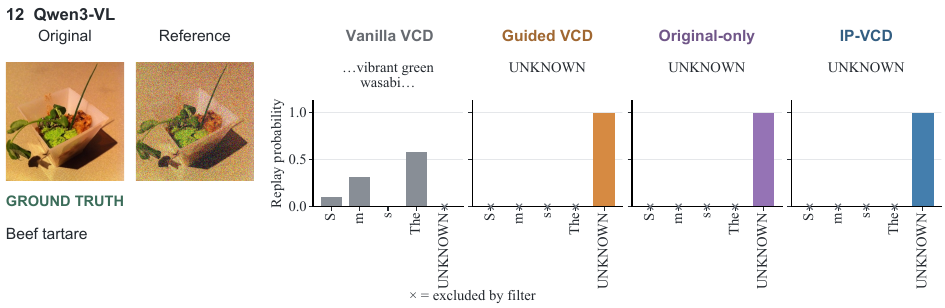}
\caption{Qwen3-VL, beef tartare.}
\label{fig:case12}
\end{figure*}

\begin{figure*}[tp]
\centering
\includegraphics[width=160mm]{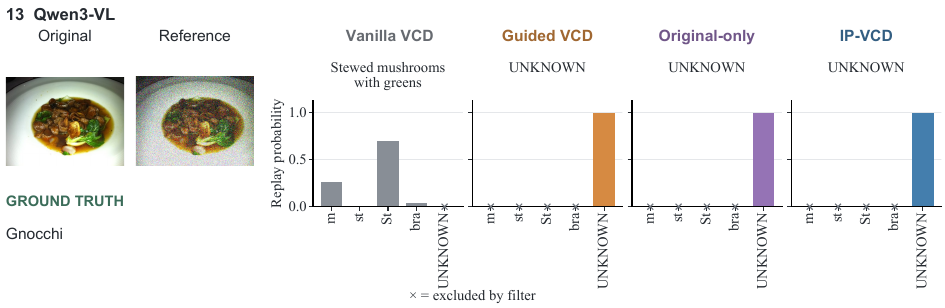}
\caption{Qwen3-VL, gnocchi.}
\label{fig:case13}
\end{figure*}

\subsection{Qwen3.5}

\begin{figure*}[tp]
\centering
\includegraphics[width=160mm]{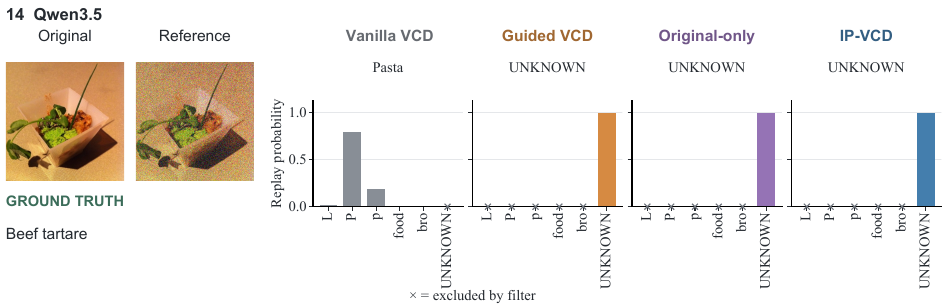}
\caption{Qwen3.5, beef tartare.}
\label{fig:case14}
\end{figure*}

\begin{figure*}[tp]
\centering
\includegraphics[width=160mm]{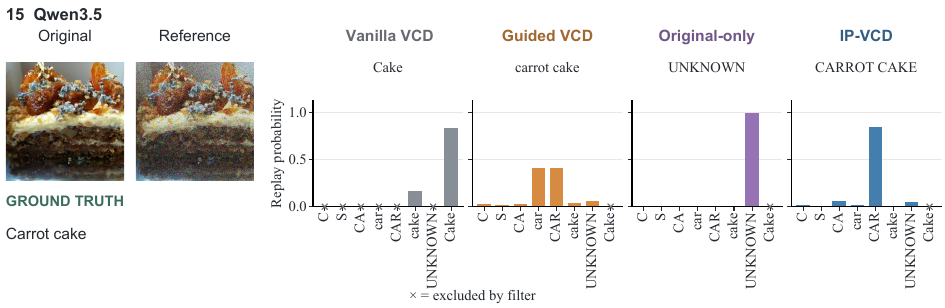}
\caption{Qwen3.5, carrot cake.}
\label{fig:case15}
\end{figure*}

\begin{figure*}[tp]
\centering
\includegraphics[width=160mm]{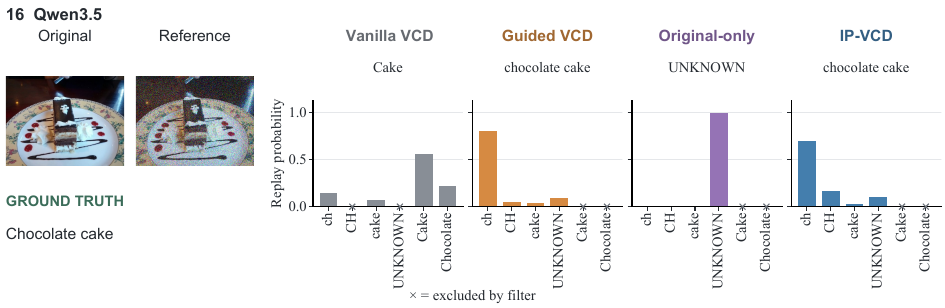}
\caption{Qwen3.5, chocolate cake.}
\label{fig:case16}
\end{figure*}

\subsection{Gemma-3}

\begin{figure*}[tp]
\centering
\includegraphics[width=160mm]{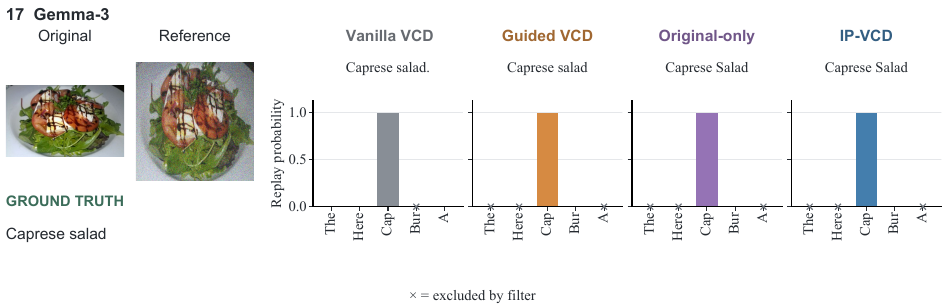}
\caption{Gemma-3, caprese salad.}
\label{fig:case17}
\end{figure*}

\begin{figure*}[tp]
\centering
\includegraphics[width=160mm]{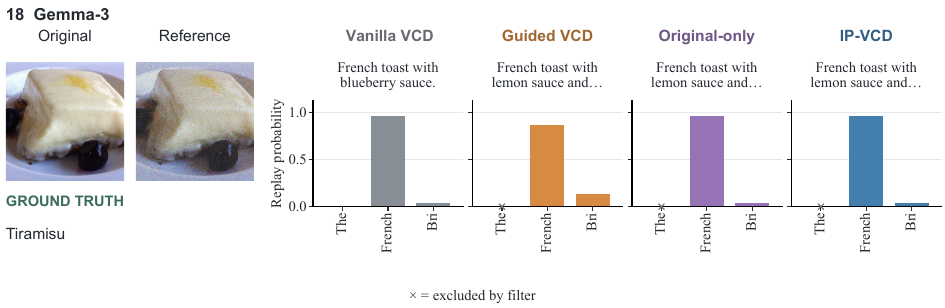}
\caption{Gemma-3, tiramisu.}
\label{fig:case18}
\end{figure*}

\subsection{OneVision}

\begin{figure*}[tp]
\centering
\includegraphics[width=160mm]{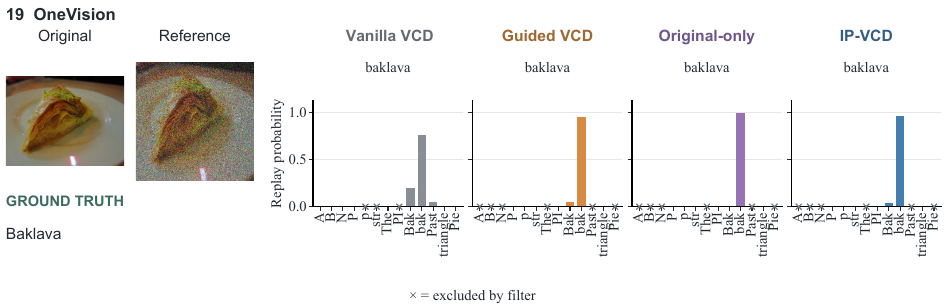}
\caption{OneVision, baklava.}
\label{fig:case19}
\end{figure*}

\begin{figure*}[tp]
\centering
\includegraphics[width=160mm]{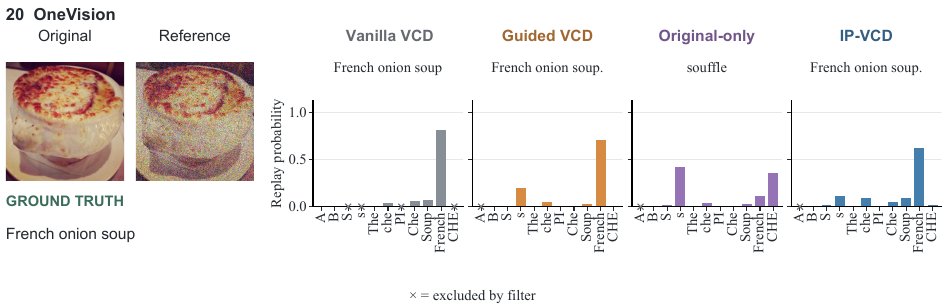}
\caption{OneVision, french onion soup.}
\label{fig:case20}
\end{figure*}

\subsection{Phi-3.5-Vision}

\begin{figure*}[tp]
\centering
\includegraphics[width=160mm]{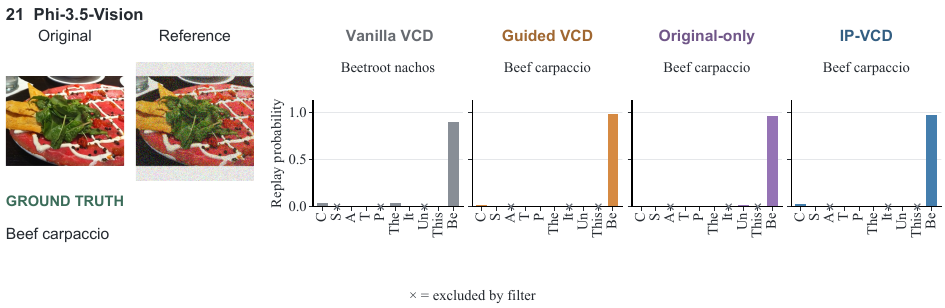}
\caption{Phi-3.5-Vision, beef carpaccio.}
\label{fig:case21}
\end{figure*}

\clearpage

\end{document}